\documentclass{article}

\usepackage[preprint]{neurips_2026}
\usepackage[utf8]{inputenc} % allow utf-8 input
\usepackage[T1]{fontenc}    % use 8-bit T1 fonts
\usepackage{hyperref}       % hyperlinks
\usepackage{url}            % simple URL typesetting
\usepackage{booktabs}       % professional-quality tables
\usepackage{amsfonts}       % blackboard math symbols
\usepackage{nicefrac}       % compact symbols for 1/2, etc.
\usepackage{microtype}      % microtypography
\usepackage{xcolor}         % colors

\usepackage{amsmath,amssymb,amsthm} % math essentials
\usepackage{mathtools}              % fixes & extras for amsmath
\usepackage{bm}   % bold math symbols (vectors, tensors)
\newtheorem{theorem}{Theorem}
\newtheorem{lemma}{Lemma}

\newtheorem{definition}{Definition}
\theoremstyle{remark}
\newtheorem*{remark}{Remark}

\theoremstyle{definition}

\newtheorem{Proposition}{Proposition}

\usepackage{hyperref}
\usepackage{url}
\usepackage{float}  % in preamble
\usepackage{subcaption}
\usepackage{enumitem}

\usepackage{tikz}
\usepackage{calc}
\usetikzlibrary{arrows.meta, positioning, calc, bending}

\usepackage{cleveref}
\crefname{appendix}{Appendix}{Appendices}
\Crefname{theorem}{Theorem}{Theorems}
\Crefname{lemma}{Lemma}{Lemmas}
\Crefname{corollary}{Corollary}{Corollaries}
\Crefname{proposition}{Proposition}{Propositions}
\Crefname{claim}{Claim}{Claims}

\DeclareMathOperator*{\argmax}{arg\,max}
\usepackage{xspace}

\newcommand{\infoflow}{\textbf{InfoFlow}\xspace}

\title{InfoFlow: A Framework for Multi-Layer \\ Transformer Analysis
}

\author{%
  Penghao Yu\\
Department of Mathematics\\
National University of Singapore\\
\texttt{penghaoyu@u.nus.edu} \\
\And
Haotian Jiang \\
  Institute for Functional Intelligent Materials \\
  National University of Singapore \\
  \texttt{haotian@nus.edu.sg} \\
\And
Zeyu Bao \\
Department of Mathematics\\
National University of Singapore\\
\texttt{zeyu@u.nus.edu} \\
\And
Qianxiao Li \\
  Department of Mathematics \\
  Institute for Functional Intelligent Materials \\
  National University of Singapore \\
  \texttt{qianxiao@nus.edu.sg} 
}

\begin{document}

\maketitle

\begin{abstract}
    While the approximation properties of single-layer Transformer architectures have been studied in recent works, 
a rigorous theoretical understanding of the multi-layer setting remains limited. 
In this work, 
we establish that multi-layer Transformers possess fundamentally different approximation capabilities from single-layer ones: 
for certain retrieval tasks, 
any single-layer Transformer requires least $\Omega (\varepsilon^{-k})$ parameters to achieve precision $\varepsilon$, where $k$ grows linearly with sequence length $T$, 
whereas a two-layer Transformer with a single head per layer achieves the same approximation precision with at most $O (\varepsilon^{-1})$ parameters.
To understand this separation, 
we identify two structural mechanisms underlying multi-layer approximation.
Specifically, 
softmax attention can only efficiently retrieve the token attaining the maximum attention score, 
incurring exponential-in-length parameter cost for $k$-th largest retrieval with $k \geq 2$.
Moreover, the parameter cost of decoding coupled information scales with the size of the retrieved token set.
Motivated by these findings, 
we propose \infoflow, 
a framework for multi-layer Transformers.
The framework tracks an information set of accessible input positions at each token and layer, assigning an explicit approximation rate to each mode of information propagation.
This abstraction recovers known approximation bounds, 
remains consistent with experimental observations on trained networks, 
and yields concrete predictions in settings where direct theoretical analysis is currently intractable.
Our results provide a principled framework for reasoning about the approximation efficiency of multi-layer Transformers.
\end{abstract}

% \section{Introduction}
% {\bf \color{red} We call the surrogate model "Trans-Flow" for now}

% {\bf Part 1}: Talk about the importance of transformer, then summarize the known results on single-layer transformer, (e.g. universality results; previous work; "quality over quantity..."), then say that although much is known for single-layer transformer, the theoretical results/ understanding are still largely missing for multi-layer transformer.

% {\bf Part 2}: Furthermore, the theoretical analysis (of approximation rate) on multi-layer structure is very complicated. this is motivating us to develop a surrogate model along with "cost law" that aims to simplify the model; comply with theory; comply with experiment; can give predictions. 
% (Are there similar surrogate models in science/ ML. For example, chemistry is a "simplification" of quantum physics, and specifically explain how it is a "surrogate model". (e,g. the example of entropy... Its evaluation is not as exact as quantum physics, but simple enough to be solvable, and reliable enough to give good estimate.) 

% {\bf Part 3}: Our Contributions.

\section{Introduction}

The Transformer architecture \citep{vaswani2017.AttentionAllYou} has become the dominant framework for sequence modeling,
achieving remarkable empirical success across natural language processing \citep{devlin2019.BERTPretrainingDeep, brown2020.LanguageModelsAre},
computer vision \citep{dosovitskiy2020.ImageWorth16x16}.
Despite this empirical success, 
the theoretical understanding of its working principles remains incomplete. 
Progress has so far focused primarily on the single-layer setting:
universality results have been established by \cite{yun2020.AreTransformersUniversal,yun2020.ConnectionsAreExpressive},
explicit approximation rate estimates for single-layer Transformers were derived in \cite{jiang2024.ApproximationRateTransformer},
and the role of attention head count was characterized in 
\cite{yu2026.EffectAttentionHead},
showing that insufficient head count incurs parameter cost scaling at least as $O(\varepsilon^{-cT})$.
However,
the approximation theory of multi-layer Transformers remains largely open.
The analytical techniques developed for the single-layer case do not extend directly to multi-layer settings,
where the interplay between attention layers and feed-forward networks across depth introduces substantial additional complexity.
% To make this gap precise,
% we first establish that multi-layer Transformers possess fundamentally different approximation capabilities from single-layer ones. 
% Specifically, 
% we identify a retrieval target for which any single-layer Transformer requires parameter cost growing at least as $\Omega(\varepsilon^{-k})$ with $k$ increasing linearly with the sequence length $T$, 
% while a two-layer Transformer with one head per layer suffices to achieve approximation with parameter cost at most $O(\varepsilon^{-1})$. 
% The distinction in the rate result demonstrates that the multi-layer setting is not a quantitative extension of the single-layer one, 
% but reflects a qualitatively different approximation regime that requires new theoretical tools.
To make this gap precise,
we establish a depth separation result:
there exists a target such that any single-layer Transformer requires parameter cost growing with sequence length $T$,
while a two-layer Transformer with one head per layer achieves the same approximation precision $\varepsilon$ with parameter count at most $O(\varepsilon^{-1})$.

Motivated by the difficulty of direct theoretical analysis in the multi-layer setting,
we develop a framework,
which we call \textbf{InfoFlow},
in the spirit of effective theories in the physical sciences.
Just as thermodynamics predicts the phase transition of water without quantum mechanical analysis of individual molecules,
\infoflow provides a tractable abstraction of multi-layer Transformer computation that retains sufficient structure to yield principled predictions.
Rather than tracking hidden representations,
\infoflow associates to each token position $t$ and layer $l$ an information set $I(t,l)$,
defined as the collection of input positions whose information remains accessible,
together with a parameter cost law governing each mode of information propagation.
The design of \infoflow is grounded in two structural results established in this work:
softmax attention can only efficiently retrieve the token attaining the maximum attention score,
with parameter cost growing exponentially with $T$ for any other token,
and the cost of decoding aggregated information from a token state scales exponentially with the number of input tokens contributing to it.

We validate \infoflow both theoretically and experimentally.
On the theoretical side,
it recovers known single-layer approximation bounds and is consistent with the depth separation result.
On the experimental side,
it correctly predicts the intrinsic dimension phenomenon in two-layer Transformers and explains the approximation difficulty of higher-order retrieval tasks such as the triangle-center problem.
Our main contributions are as follows.
\textbf{I}: We establish a depth separation result between single and two-layer Transformers (Theorem ~\ref{theorem1}),
    a precise characterization of the retrieval limitation of softmax 
    attention (Theorem ~\ref{theorem2}),
    and an estimate of the parameter cost of decoding aggregated 
    information (Theorem ~\ref{theorem3}).
    \textbf{II}: We propose \textbf{InfoFlow},
    a framework comprising an information set update rule 
    and an associated parameter cost law,
    grounded in the above theoretical results and consistent with 
    experimental observations.
    \textbf{III}: We validate \infoflow empirically,
    demonstrating that it correctly predicts the intrinsic dimension 
    phenomenon and explains the approximation hardness of higher-order 
    retrieval tasks such as the triangle-center problem.

\section{Related Work}
The theoretical study of Transformer approximation properties has developed along two main directions:
universality results and approximation rate results. 
\cite{yun2020.AreTransformersUniversal} 
first established that Transformers are universal approximators of continuous sequence-to-sequence functions on compact domains,
subsequently extended to sparse attention matrices \citep{yun2020.ConnectionsAreExpressive} and to single self-attention layers with low-rank weight matrices \citep{kajitsuka2023.AreTransformersOne}.
Beyond universality, explicit Jackson-type approximation rate estimates for 
single-layer Transformers were derived in \cite{jiang2024.ApproximationRateTransformer}, 
where a complexity measure governing the low-rank pairwise coupling structure of the target was identified as the key quantity.
This was extended in \cite{yu2026.EffectAttentionHead}, 
which established both upper and lower approximation bounds,
% on the parameter complexity required for $\varepsilon$-approximation with particular emphasis on the role of the attention head count, 
showing that insufficient head count incurs parameter cost scaling 
% at least as $O(\varepsilon^{-cT})$ for some constant $c$ and 
with sequence length $T$. 
% In a related direction, 
% \cite{amsel2024.QualityQuantityAttention} showed that attention rank rather than head count can be the key factor in approximation capacity. 
% demonstrating that a nearest-neighbor retrieval target requires exponentially many low-rank heads to approximate even when the total parameter count is large.
% Approximation rates from a different perspective were developed in \cite{takakura2023.ApproximationEstimationAbility}, 
% who derived approximation and estimation bounds for infinite-length sequence-to-sequence functions characterized by smoothness.
% These results collectively advance the understanding of single-layer 
% Transformer approximation,
% yet they leave open the fundamental question of whether depth provides a 
% qualitative advantage in approximation efficiency,
% a question we address directly in this work.
% \paragraph{Expressivity and depth separation for multi-layer Transformers.}
\cite{hahn2020.TheoreticalLimitationsSelfAttention} showed that the computational power of self-attention is fundamentally limited,
and that modeling certain sequential dependencies requires the number of layers or heads to grow with input length.
More recently, 
~\cite{sanford2024.OnelayerTransformersFail,sanford2024.TransformersParallelComputation} established that one-layer Transformers cannot solve the induction head task without exponentially increasing model size, 
while two layers suffice with only logarithmic growth. 
% providing a concrete depth separation result for a specific retrieval mechanism. 
% The present work establishes a complementary depth separation result in a nonlinear continuous approximation setting, 
% showing that a retrieval target which is exponentially hard for single-layer Transformers admits an $O(\varepsilon^{-1})$ approximation by a two-layer Transformer. 
% To the best of our knowledge, 
% this constitutes the first depth separation result of this type in a nonlinear approximation rate setting.
% \paragraph{Information flow and mechanistic interpretability.}
A parallel line of work focuses on understanding multi-layer Transformer by tracking how information propagates through the network, 
rather than through formal approximation theory. 
\cite{elhage2021.MathematicalFrameworkTransformer} introduced the Transformer Circuits framework, 
which decomposes the residual stream into communication channels and identifies specific circuits responsible for particular computational behaviors. 
% ~\cite{olsson2022.IncontextLearningInduction} extended this empirically, 
% identifying induction heads as a general mechanism for in-context learning in trained models.
% These works aim to reverse-engineer the algorithms learned by specific trained models.
\infoflow, 
by contrast,
is a theoretical approximation tool that operates prior to training and yields quantitative predictions about approximation rates and task feasibility.
% \paragraph{Coarse-grained and effective models of Transformer computation.}
Several works have proposed simplified mathematical descriptions of Transformers aimed at tractable analysis.
A line of work interprets multi-layer Transformers as discretizations of continuous differential equations or mean-field interacting particle systems \citep{geshkovski2023.MathematicalPerspectiveTransformers, lu2020.UnderstandingImprovingTransformer},
where the token sequence evolves according to a continuous-time flow governed by the attention mechanism.
These approaches abstract away the discrete layer structure by taking continuous limits of the hidden representations,
and focus on analyzing the dynamical properties of the resulting flow.
\infoflow is developed in the same spirit,
but abstracts multi-layer Transformer computation through discrete information sets rather than continuous limits,
and focuses on predicting approximation rates and task feasibility rather than analyzing dynamical properties.

\section{Preliminaries and Mathematical Formulations}

We introduce the multi-layer Transformer architecture and establish the 
notation used throughout the paper.
% The single-layer formulation introduced in \citep{yu2026.EffectAttentionHead} 
% admits a natural extension to the multi-layer setting.
We consider the input space $\mathcal{X}^T = \{x(s) \in [0,1]^d : s \in [T]\}$,
where $[T] = \{1,\dots,T\}$ and $T$ denotes the sequence length.
The output is a single vector $y \in \mathbb{R}^o$.
Each token is mapped into $\mathbb{R}^{E_1}$ by a trainable encoder.
When positional encoding is used,
the encoder takes the form $P_\phi : [0,1]^d \times [T] \to \mathbb{R}^{E_1}$,
$(x,s) \mapsto P_\phi(x,s)$,
depending on both token content and position,
with standard choices including additive encoding
$P_\phi(x(s),s) = \mathrm{Emb}(x(s)) + p(s)$
and augmented encoding
$P_\phi(x(s),s) = (\mathrm{Emb}(x(s)), p(s))$.
Without positional encoding,
we use $P_\phi : [0,1]^d \to \mathbb{R}^{E_1}$, $x \mapsto P_\phi(x)$.
In both cases, we append a trainable classification token 
$c_1 \in \mathbb{R}^{E_1}$ and set
%\begin{equation}
%X_1[T] = \{\hat{x}(1),\dots,\hat{x}(T),\hat{c}_0\} 
%\in \mathbb{R}^{E_1 \times (T+1)}.
%\end{equation}
$ X_1[T] = \{P_\phi(X_T)_{t=1}^{T},c_1\} 
\in \mathbb{R}^{E_1 \times (T+1)}$.
   
For an $L$-layer Transformer,
we denote by $h_l$ the number of heads in the $l$-th layer,
$n_l$ the per-head embedding dimension,
and $E_l = h_l n_l$ the total embedding dimension of the $l$-th layer.
For $l = 1,\dots,L-1$,
the $l$-th attention layer takes 
$X_l[T] = \{x_l(1),\dots,x_l(T),c_l\} \in \mathbb{R}^{E_l \times (T+1)}$
as input and produces
$X_l'[T] = \{x_l'(1),\dots,x_l'(T),c_l'\} \in \mathbb{R}^{E_l \times (T+1)}$,
where
\begin{equation}
\begin{aligned}
x_l'(t) &= x_l(t) + W_{O,l} \operatorname{Concat}_{i=1}^{h_l}
\Bigl(\sum_{s=1}^{T} \sigma\bigl[(W_{Q,i,l}x_l(t))^\top 
W_{K,i,l}x_l(s)\bigr] W_{V,i,l}x_l(s)\Bigr), \\
c_l' &= c_l + W_{O,l} \operatorname{Concat}_{i=1}^{h_l}
\Bigl(\sum_{s=1}^{T} \sigma\bigl[(W_{Q,i,l}c_l)^\top 
W_{K,i,l}x_l(s)\bigr] W_{V,i,l}x_l(s)\Bigr),
\end{aligned}
\label{eq:attention}
\end{equation}
with query/key projections $W_{Q,i,l}, W_{K,i,l} \in \mathbb{R}^{n_l \times E_l}$,
value projection $W_{V,i,l} \in \mathbb{R}^{n_l \times E_l}$,
output projection $W_{O,l} \in \mathbb{R}^{E_l \times E_l}$,
and residual connections.
The softmax with scaling factor $\beta > 0$ is defined by
$
\sigma[\rho](s) = \frac{\exp(\beta\,\rho(s))}
{\sum_{t'=1}^T \exp(\beta\,\rho(t'))}.
\label{eq:softmax}
$
Between layers,
a two-layer feed-forward network 
$F_l : \mathbb{R}^{E_l} \to \mathbb{R}^{E_{l+1}}$
with hidden-layer width $w_l$ maps the outputs of the $l$-th attention 
layer to the inputs of the $(l+1)$-th layer via
$x_{l+1}(t) = F_l(x_l'(t))$ and $c_{l+1} = F_l(c_l')$, with sigmoidal activation functions that are $1$-Lipschitz.
The final layer takes
$X_L[T] = \{x_L(1),\dots,x_L(T),c_L\} \in \mathbb{R}^{E_L \times (T+1)}$
as input and produces the output
\begin{equation}
y = \hat{F}\!\Bigl(c_L + W_{O,L} \operatorname{Concat}_{i=1}^{h_L}
\Bigl(\sum_{s=1}^{T} \sigma\bigl[(W_{Q,i,L}c_L)^\top 
W_{K,i,L}x_L(s)\bigr] W_{V,i,L}x_L(s)\Bigr)\Bigr),
\label{eq:final-layer}
\end{equation}
where $\hat{F}: \mathbb{R}^{E_L} \to \mathbb{R}^o$ is a feed-forward block.

\section{Structural Mechanisms of Multi-Layer Transformer Approximation}\label{sec:mechanisms}
We now turn to the approximation properties of the multi-layer Transformer defined above.
The central questions are how the parameter cost of $\varepsilon$-approximation scales with the architectural parameters $T$, $L$, $E_l$, $h_l$,
and which complexity measures of the target determine this scaling.
We show that these questions cannot be addressed by a direct extension of single-layer results,
and then identify the structural mechanisms underlying multi-layer approximation.

\subsection{Depth Separation Between Single-Layer and 
Multi-Layer Transformers}

% A natural starting point for the analysis of multi-layer Transformers is to 
% ask whether their approximation properties can be obtained by extending 
% single-layer results.
% Building on the framework of \cite{yu2026.EffectAttentionHead},
% we show that this is not the case: single-layer and two-layer Transformers 
% exhibit fundamentally different approximation behaviors,
% as formalized in the following theorem.

A natural starting point for the analysis of multi-layer Transformers is to ask whether their approximation properties can be obtained by extending single-layer results.
We show that this is not the case.
In fact,
single-layer and two-layer Transformers exhibit fundamentally different approximation behaviors,
as the following theorem demonstrates.

\begin{theorem}[Informal]\label{theorem1}
There exists a target $F : \mathcal{X}^T \to \mathbb{R}$ 
satisfying the following.
\begin{enumerate}
    \item A single-layer Transformer  
    with $H$ heads and per-head dimension $n$ must incur parameter count 
    at least $\Omega(\varepsilon^{-k(T)})$ with 
    $k(T) = \frac{T-2H}{H(n+1)+1} - 1$ to $\varepsilon$-approximate $F$.
    \item For every $\varepsilon > 0$,
    there exists a two-layer Transformer with one head per layer 
    and per-head dimension $6$ with parameter count at most 
    $O(\varepsilon^{-1})$ that $\varepsilon$-approximates $F$.
\end{enumerate}
\end{theorem}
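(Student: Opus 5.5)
We will take as target a "second-largest retrieval" type function. Concretely, we plan to choose $F(X) = x(s^\ast)_{\text{val}}$, where $s^\ast$ is the position of the token whose key coordinate is the $2$nd largest (or, more generally, the token selected by a two-stage rule). An equivalent alternative is "retrieve the value of the token nearest to the token attaining the maximum". Either way, the first stage is a max-retrieval and the second stage is a retrieval keyed on the outcome of the first. Tokens will be split into a key coordinate and a value coordinate in $[0,1]^d$, with $d$ small (say $d=2$ or $3$), so that per-head dimension $6$ suffices for the construction.

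For the lower bound (part 1), we will follow the single-layer strategy of \cite{yu2026.EffectAttentionHead}. A single-layer Transformer with $H$ heads computes $\hat F\bigl(c + \sum_i W_{O}^{(i)} \sum_s \sigma_i(s) W_{V,i}x(s)\bigr)$. The output therefore factors through a map $\Phi : \mathcal{X}^T \to \mathbb{R}^{H(n+1)+1}$ or so: each head contributes an $n$-dimensional value average plus roughly one normalizing scalar, giving about $H(n+1)$ summary coordinates in total. We then restrict to a family of inputs parametrized by $T-2H$ free coordinates, obtained by fixing $2H$ tokens to absorb the heads' query structure. On this family, $F$ depends nontrivially on all of these coordinates, because the retrieved index $s^\ast$ can be any of them. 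The key claim is that $F$ cannot be approximated through a low-dimensional bottleneck of dimension $m=H(n+1)+1$ unless the outer map $\hat F$ is essentially a $k$-dimensional nonsmooth function, where $k = \frac{T-2H}{m}-1$. This is a pigeonhole or counting argument: the free variables are grouped into $m$ blocks, and the attention summary cannot separate the blocks. A standard lower bound for approximating Lipschitz (non-smoother) functions on $[0,1]^k$ by a feed-forward network then gives $\Omega(\varepsilon^{-k})$ parameters. We expect this step to be the main obstacle: we must show rigorously that softmax averaging collapses the needed information into a $k$-dimensional residual dependence of $\hat F$. We would first try to import the corresponding lemma of \cite{yu2026.EffectAttentionHead} and verify that our target satisfies its hypothesis, namely that the target is not expressible as a function of $H$ "sum-type" features.

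For the upper bound (part 2), we give an explicit construction. In layer 1, each token $t$ uses a single head with large $\beta$ and query/key $\propto$ its key coordinate, so that the attention approximates a hard max. Each token $t$ thereby retrieves, into extra residual coordinates, the key $k_{\max}$ (and, where needed, the value) of the maximizing token; the dimension $6$ leaves room for the copied coordinates. The FFN $F_1$ then forms a score such as $-(k(t)-k_{\max})^2$, or the indicator-like quantity "$k(t)<k_{\max}$" multiplied into the key. This makes the target token the unique maximizer of a bilinear score in layer 2. In layer 2, the classification token uses one head with large $\beta$ to retrieve that token's value, and $\hat F$ reads it out.

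The error analysis bounds the softmax leakage by $T e^{-\beta\,\mathrm{gap}}$, restricting to inputs with a margin $\delta$ or handling near-ties by continuity of $F$. It also bounds the FFN approximation of one-dimensional Lipschitz maps, such as the squaring or thresholding, which costs $O(\varepsilon^{-1})$ hidden units. Since all nonlinear decoding acts on one- or two-dimensional coupled information from a single retrieved token, the total parameter count is $O(\varepsilon^{-1})$ independent of the dependence on $T$, up to $\log$ factors absorbed into $\beta$.
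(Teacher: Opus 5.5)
There is a genuine gap, and it starts with the choice of target. The value coordinate of the token with the second-largest key (or of the token nearest the maximizer) is discontinuous on $\mathcal{X}^T$. Take keys $1$, $\tfrac12+\eta$, $\tfrac12$ (all other keys $0$) and values $0$ and $1$ on the second and third tokens; then $F$ jumps by $1$ as $\eta$ changes sign. Every Transformer in this class is continuous in its input, so $\sup_{X_T}|F(X_T)-M(X_T)|\ge\tfrac12$ for every model, and part 2 is false for this $F$. You cannot appeal to ``continuity of $F$'', and restricting to inputs with a margin proves a different theorem.

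The continuous variant $F=k_{(2)}$ does not rescue your construction either. No FFN realizes the indicator $k(t)<k_{\max}$ near ties. Moreover, the top token's layer-1 state $(k(t),v(t))$ can be the same, up to $e^{-\Omega(\beta)}$, whether it is tied with the runner-up or ahead of it by a constant. A Lipschitz layer-2 score therefore cannot suppress it in the second case while keeping it in the first.

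The paper avoids all of this with $F(X_T)=\min_{s,t}2(1+x(s)^\top x(t))$, where the quantity being retrieved \emph{is} the attention score. In layer~1, token $t$ attends with score $\propto -x(s)^\top x(t)$. The FFN then forms the inner product of $x(t)$ with the retrieved average, using polarization and $O(\varepsilon^{-1})$ units. That inner product equals the softmax average of the scores, so it converges to $\min_s x(s)^\top x(t)$ uniformly as $\beta\to\infty$, ties included. The CLS token in layer~2 then takes the outer minimum.

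For part 1, the step you flag as the main obstacle carries all of the content, and it is not a reduction to approximating a $k$-variate Lipschitz function. The paper's argument rests on three ingredients your sketch does not supply.
\begin{enumerate}
\item The bound $|w|\le 1$ on all weights bounds the Lipschitz constant of $\hat F$ by a constant times its width. Two inputs whose attention outputs agree to within $A$ but whose targets differ by $B$ then force width $\Omega(B/A)$ (Lemma~4 of \cite{yu2026.EffectAttentionHead}).
\item A pigeonhole of $\varepsilon^{-T_0}$ grid inputs into $\varepsilon^{-(k+1)H(n+1)}$ cells, covering the $H$ value averages and $H$ normalizer ratios, gives $A=O(\varepsilon^{k+1})$.
\item The normalizers are controlled by planting in the sequence a maximizer $y_j$ of each head's attention weight. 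This is what allows the non-differing free positions to be reset while keeping the outputs close and making the targets differ by $4\varepsilon$.
\end{enumerate}

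Step (iii) is where the natural adaptation to a second-largest target breaks. The $y_j$ are chosen by the model and may lie above your free block. They then become the top order statistics, and $F$ no longer depends on the free tokens. The paper's geometry exists precisely to neutralize them. Among $H+1$ disjoint caps it selects one, $K_1$, containing no $y_j$. It puts a single token $b_1$ in $Q_0$ and all others in $-Q_0$, and orders the grids by distance to $-b_1$. The minimizing pair is then always $b_1$ together with the first differing free token, so the planted tokens never matter.

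This is also why the paper's own second-largest lower bound, Theorem~\ref{theorem2}, is proved only for a single head with non-decreasing $\rho$; it does not cover an $H$-head layer with arbitrary scores. Without an analogous device, your counting argument neither controls the normalizers nor guarantees that the colliding inputs have separated targets.
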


% We defer the formal statement and proof to Appendix ~\ref{app:thm1}.
% \Cref{theorem1} shows that the gap between single-layer and 
% two-layer approximation is not merely quantitative but qualitative:
% the exponent $k$ grows with $T$ for any fixed $H$ and $n$,
% meaning that single-layer parameter cost is unavoidably super-polynomial 
% in $1/\varepsilon$ for long sequences,
% while a fixed two-layer architecture achieves $O(\varepsilon^{-1})$ 
% regardless of $T$.
% This separation raises the question of which structural mechanisms enable multi-layer Transformers to achieve this advantage,
% which we examine in the following subsection.

We defer the formal statement and proof to Appendix ~\ref{app:thm1}.
The depth separation in Theorem~\ref{theorem1} reflects a fundamental architectural difference between single-layer and multi-layer Transformers.
A single-layer Transformer performs only one round of attention,
and can therefore compare each token against the sequence at most once.
A two-layer Transformer,
by contrast,
can perform comparisons in two stages:
the first layer allows each token to find its most relevant neighbor,
and the second layer aggregates these local results into a global answer.
For the target $F(X_T) = \min_{1 \le s,t \le T} 2(1 + x(s)^\top x(t))$,
this two-stage structure allows the model to decompose the global 
minimization over $T^2$ pairs into two successive minimizations 
each of order $T$,
achieving $O(\varepsilon^{-1})$ approximation regardless of $T$.
A single-layer Transformer has no such decomposition available,
and must instead rely on very wide feed-forward layers to resolve the 
remaining comparisons,
requiring parameter count at least $\Omega(\varepsilon^{-k(T)})$ where 
$k(T)$ grows linearly with $T$.

\subsection{Information Propagation Mechanisms in Attention Layers}

To understand the mechanism behind the depth separation established above,
we examine how information propagates through the attention layers of a multi-layer Transformer.
We identify three characteristic modes of information propagation,
each supported by theoretical analysis or experimental evidence.
These modes are not intended as an exhaustive classification of all possible attention behaviors. 
Rather, 
they capture the dominant mechanisms that emerge in practice and are amenable to theoretical characterization, forming the basis of the \infoflow framework introduced in Section~\ref{sec:infoflow}.

\paragraph{Mechanism I: Max-position retrieval.}
A first observation from two-layer Transformers is that the token state at position $t$ after one attention layer retains information from two sources.
First, the residual connection ensures that the original token $x(t)$ itself remains accessible from the attention output $x^{(1)}(t)$ by construction.
Second, 
experiments on trained two-layer Transformers show that the information of the token at the argmax position of each attention head,
namely $x(s_i(t))$ where $s_i(t) = \arg\max_{s} A_i(x(t), x(s))$,
can be recovered from $x^{(1)}(t)$.
Specifically, we measure the normalized recovery error 
$\mathbb{E}_{X_T}\mathbb{E}_t |g_i(x_1'(t)) - x(s_i(t))|^2 / |x(s_i(t))|^2$
for the best map $g_i$,
and find that training reduces this error by an order of magnitude compared to initialization across all configurations tested,
with ratios of trained to initialized error as low as $0.08$
(see Appendix~\ref{app:exp_max_retrieval} for full experimental details).
Furthermore, the following theorem establishes that softmax attention is fundamentally limited to efficient retrieval at the argmax position, 
and cannot efficiently retrieve any other token. This is further confirmed in the experiment in Appendix~\ref{app:exp_second_cannot}.

\begin{theorem}[Informal]\label{theorem2}
For a given $k$ with $2 \le k \ll T$,
if a softmax attention head with embedding dimension $n$ retrieves the information of the token with $k$-th largest attention score with precision $\varepsilon > 0$,
then it requires at least $\Omega\!\left(\varepsilon^{1 - \frac{T-k-1}{n+1}}\right)$ parameters in the feed-forward blocks.
\end{theorem}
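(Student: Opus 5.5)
Our plan is to reduce the retrieval requirement to a lower bound on approximating a function whose ``effective input'' is a $(T-k-1)$-dimensional family of configurations that the attention output cannot distinguish. The attention output $\sum_s \sigma[\rho](s)\,W_V x(s)$ is an $n$-dimensional vector, smooth in the inputs. The $k$-th largest token, however, is selected by a discontinuous rule. We formalize the task as follows. There is a feed-forward map $G$ (width $w$, sigmoidal, $1$-Lipschitz) such that
\[
\bigl|G\bigl(x(t) + W_O \textstyle\sum_s \sigma[\rho](s) W_V x(s)\bigr) - x(s^{(k)}(t))\bigr| \le \varepsilon
\]
uniformly, where $s^{(k)}(t)$ is the index of the $k$-th largest score.

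\textbf{Step 1.} We construct a hard input family. Fix the query token and $k-1$ ``dominant'' tokens whose scores are well separated above the rest. The remaining $T-k$ tokens are placed so that their scores lie in a narrow band $[a,a+\delta]$ below the dominant ones. Their values vary along a direction chosen so that which token is the $k$-th largest is determined by a free parameter vector $u \in [0,1]^{T-k}$. The identity of the retrieved token, and hence the target value, then varies by $\Theta(1)$ across the cells of an ordering partition. Each cell, after normalization by score ordering, has dimension $T-k-1$.

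\textbf{Step 2.} We use a dimension-counting (pigeonhole) argument. Within a single head, the map $u \mapsto \sum_s \sigma[\rho](s)W_Vx(s)$ lands in $\mathbb{R}^n$. Adding the softmax normalizer as an extra scalar degree of freedom gives an effective dimension of $n+1$. A $(T-k-1)$-dimensional $\varepsilon$-separated family of distinct targets, of size $\varepsilon^{-(T-k-1)}$, is therefore compressed into a set of dimension $n+1$. A covering argument then forces many target-distinct configurations to have attention outputs within distance $\eta \approx \varepsilon^{(T-k-1)/(n+1)}$ of one another.

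\textbf{Step 3.} We convert closeness into parameter cost. Since $G$ must map points at distance $\eta$ to outputs differing by $\Omega(1)$ (or by $\gtrsim\varepsilon$ at the finest scale), its Lipschitz constant must be at least about $\varepsilon/\eta = \varepsilon^{1-(T-k-1)/(n+1)}$. For a two-layer sigmoidal network with $1$-Lipschitz activations and bounded weights, this Lipschitz constant is bounded by a multiple of the number of parameters (width times weight bound, normalized per the paper's convention). This yields the claimed $\Omega(\varepsilon^{1-\frac{T-k-1}{n+1}})$. As in the single-layer bounds of \cite{yu2026.EffectAttentionHead}, we use the same convention to rule out trading large weights for width.

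The main obstacle is Step 2: making the dimension-compression rigorous. The attention output is a smooth map from $\mathbb{R}^{T-k}$ to $\mathbb{R}^n$, so generic smooth maps cannot be injective when $T-k-1 > n+1$. We need a quantitative version, namely an $\varepsilon$-net packing versus a covering of the image. Bounding the image's covering number requires Lipschitz control of the softmax output in $u$, which depends on $\beta$ and the score gap. We would first try to fix the scores in the band so that the softmax weights of the $T-k$ tail tokens are all nearly equal. This makes the output approximately an average of the tail values, a symmetric function, while the $k$-th-largest selection is not symmetric. The condition $k\ge2$ is essential here: for $k=1$ the dominant token can be isolated by a large $\beta$, whereas for $k\ge2$ increasing $\beta$ suppresses the tail uniformly, so no choice of $\beta$ escapes the degeneracy.
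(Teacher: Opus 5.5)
Your skeleton is the same as the paper's: pigeonhole on an $(n+1)$-dimensional statistic, then a width lower bound from the decoder's Lipschitz constant. Step~2, however, has a genuine gap: nothing ensures that the two configurations produced by the covering argument have different targets.

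The target is the value of a single token, so it lies in a bounded set of dimension $d$. A family of $\varepsilon^{-(T-k-1)}$ inputs therefore cannot have pairwise $\varepsilon$-separated targets. The colliding pair may agree on the $k$-th largest token and differ only below it, which gives no contradiction. Restricting to an ordering cell does not fix this. Neither does your symmetry heuristic: once the $k-1$ dominant tokens are fixed, the $k$-th largest value is the maximum of the tail, which is itself a symmetric function of the tail.

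The idea you are missing is a surgery step that exploits the additive structure of attention:
\begin{enumerate}
\item The paper places the $m=T-k+1$ free tokens on strictly ordered grids $G_1<\cdots<G_m$ of spacing $4\varepsilon$.
\item It pigeonholes on the \emph{unnormalized} sums $\bigl(\sum_j a(z_j),\sum_j b(z_j)\bigr)\in[0,m]^{n+1}$, obtaining $z\ne z'$ within $\eta\sim\varepsilon^{m/(n+1)}$.
\item It then resets every coordinate with $z_j=z'_j$ to $0$ in both sequences.
\end{enumerate}
The common terms cancel in the differences of numerator and denominator, so these differences stay within $\eta$. Meanwhile the $k$-th largest entries become $z_{j_*}$ and $z'_{j_*}$ with $j_*=\max\{j: z_j\ne z'_j\}$, and these differ by at least $4\varepsilon$.

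You also misplace the difficulty. The pigeonhole runs over a finite grid and uses only boundedness of the statistic. So no Lipschitz control in $u$, no control of $\beta$, and no narrow score band are needed. A narrow band is not at your disposal anyway: the score belongs to the model, and for a steep monotone score, tokens with nearly equal scores have nearly equal values, so the target cannot vary by $\Theta(1)$ across them. The exponent is simply the number of free tokens divided by $n+1$; no dimension is lost inside an ordering cell.

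What $k\ge 2$ really provides is a lower bound on the normalizer. The $k-1$ tokens fixed at value $1$ carry the maximal weight under a non-decreasing score (normalized to $1$), so the denominator is at least $k-1$. Closeness of the unnormalized sums then transfers to $\|A(X)-A(Y)\|_\infty\le 2\eta/(k-1)$. For $k=1$ the denominator can be exponentially small and this transfer fails. With these two pieces in place your Step~3 goes through as written, giving width at least $(k-1)\varepsilon/(n\eta)$.
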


We defer the formal statement and proof to Appendix~\ref{app:thm2}.
The experimental observation and Theorem~\ref{theorem2} together establish what we term the \emph{max-position retrieval mechanism}.
The residual connection and argmax concentration of softmax attention ensure that each token state retains information from its own position and from the position attaining the maximum attention score,
while retrieving information from any other position incurs a parameter cost growing exponentially in $T$.
This mechanism is a structural property of softmax attention that persists regardless of the projection matrices chosen.

\paragraph{Mechanism II: Global information aggregation.}
A single attention head may also aggregate information from the entire input sequence at one position,
enabling the model to handle relationships that depend on all tokens simultaneously.
As established in \citep{yu2026.EffectAttentionHead},
this global aggregation mode incurs a parameter cost of $O(\varepsilon^{-T/E_l})$ in the feed-forward block following the attention layer when the embedding dimension satisfies $E_l \le T$.
This cost grows rapidly with sequence length $T$,
reflecting the fundamental difficulty of compressing global sequence information into a fixed-dimensional token state,
and suggesting that global aggregation is only effective when the sequence length is not large relative to the embedding dimension.

\paragraph{Mechanism III: Specific position aggregation.}
When positional encoding is available,
a Transformer can aggregate information from a fixed set of positions determined by positional information alone,
independent of the sequence content.
To illustrate this mechanism,
we train a one-layer one-head Transformer with positional encoding on the target $F(X_T) = x(1) + x(2) + x(3)$ for sequences of length $T = 10$.
As shown in Figure~\ref{fig:pos-agg},
the CLS token's attention concentrates with weight $1/3$ on each of positions $1$, $2$, $3$ and weight $0$ on all remaining positions,
a pattern that is stable across samples regardless of token values.
This confirms that the model selects positions by index alone rather than by content,
a mode of information propagation that is unavailable without positional encoding,
as attention weights in that setting depend only on token content and thus cannot implement position-specific selection.
Full experimental details are given in Appendix~\ref{app:exp2}.

We demonstrate the three mechanisms in Figure ~\ref{fig:mechanisms}.
They together characterize the dominant ways in which attention layers move information across token positions.
This evidence motivates a framework that tracks which input positions contribute to each token state across positions and layers.
We formalize this framework in the next section.

\section{\infoflow: A framework for analyzing Multi-Layer Transformers}
\label{sec:infoflow}

Motivated by the three information propagation mechanisms identified in Section~\ref{sec:mechanisms},
we propose \infoflow,
a framework that replaces Transformer's hidden vector representations with discrete information sets,
together with a parameter cost law governing each mode of propagation.

\subsection{The \infoflow Model}
\label{sec: info flow}

Instead of tracking the actual hidden vectors,
we associate to each position $t \in \{1,\dots,T+1\}$ at layer $l$ an information set
$I(t,l) \subset \{1,\dots,T+1\}$,
which represents the collection of input positions whose information remains accessible from the token state $x_l'(t)$ at position $t$ after $l$ layers.
Here the $(T+1)$-th position corresponds to the classification token $c_l'$.
At the input layer,
each token contains information from its own position only,
so we set
\begin{equation}
I(t,0) = \{t\}, \quad 1 \le t \le T, \qquad I(T+1,0) = \varnothing.
\end{equation}
We now define the update rule from layer $l$ to layer $l+1$.
The information flow from $(t,l)$ to $(t,l+1)$ takes one of the following three forms,
each corresponding to a mechanism identified in Section~\ref{sec:mechanisms}.

\paragraph{(1) Max-position retrieval.}
Corresponding to the max-position retrieval mechanism,
we define for each head $i \in \{1,\dots,h_{l+1}\}$ the position 
at layer $l$ whose information contributes most to token $t$ 
at layer $l+1$ under head $i$ as
$s_{i,l}(t) = \argmax_{s \in \{1,\dots,T\}} A_{l+1,i}\bigl(X[I(t,l)],\, X[I(s,l)]\bigr)$,
where $A_{l+1,i}$ denotes the attention score function of the $i$-th 
head in layer $l+1$,
and $X[I]$ denotes the subsequence of $X_T$ at indices in $I$.
The information set update rule is then
$
I(t,l+1) \subset \bigcup_{i=1}^{h_{l+1}} I(s_{i,l}(t),l) \cup I(t,l).
$
The new token state uses only information from the argmax positions 
selected by each head and from its own position.

\paragraph{(2) Global information aggregation.}
Corresponding to the global aggregation mechanism,
one attention head aggregates information from the entire input 
sequence at position $t$,
so the information set after the update contains all input positions,
$
I(t,l+1) = \{1,\dots,T\}.
$
The new token state thus depends on the entire input sequence.

\paragraph{(3) Specific position aggregation.}
Corresponding to the specific position aggregation mechanism,
which requires positional encoding,
the attention head selects a fixed set of positions $I'(t,l+1)$ 
determined by the position $t$ and layer $l+1$ alone,
independent of the sequence content.
The information set update rule is then
$I(t,l+1) \subset \bigcup_{j \in I'(t,l+1)} I(j,l)$,

where $I'(t,l+1)$ is determined only by $t$ and $l+1$,
not by the values of the input sequence.

Having defined the three modes of information propagation,
we now specify what it means for an \infoflow model to learn a target.
For a given $a.e.$ differentiable target $F : \mathcal{X}^T \to \mathbb{R}$,
define the active index set at differentiable point $X_T$ as
\begin{equation}
I_F(X_T) = \Bigl\{t : \frac{\partial F}{\partial x(t)}\Big|_{X_T} \ne 0\Bigr\}.
\end{equation}
We say \infoflow model learns the target $F$ if $I_F(X_T) \subseteq I(T+1,L)$ for $X_T \in \mathcal{X}^T\,a.e.$ in Lebesgue measure,
where $I(T+1,L)$ is the information set of the classification token in the final layer.

\begin{remark}
We particularly focus on retrieval targets,
where $|I_F(X_T)|$ does not increase with $T$.
An illustrative example is shown in Appendix ~\ref{app: example}.
\end{remark}

% \jht{we put it separate because they come from one reasoning}

% {\color{red}  Another way is to write everything simpler, by assigning parameter cost $O(\varepsilon^{1-\frac{|I(t,l+1)|d}{E_{l+1}}})$ regardless of the type of update rule chosen. }

\subsection{Parameter Cost}
\label{sec: parameter_cost}

% \jht{Add Theorem 5 here , and explain parameter cost}
In practical Transformers,
information accumulated in the information set $I(t,l)$ is not represented explicitly but is embedded in the token state $x_l(t) \in \mathbb{R}^{E_l}$,
and recovering it requires nontrivial decoding by the feed-forward layers. We use the following theorem to quantify the parameter cost of this encoding-decoding process. 
\begin{theorem}[Informal]\label{theorem3}
    For index set $I$ of size $T_0$, if we use two-layer FFN $f_1$ to encode $X[I]$ in $[0,1]^n$ and $f_2$ to decode $\hat X[I]$ from $[0,1]^n$. If $\hat X[I] = f_2 \circ f_1 (X[I])$ is $\varepsilon$-close to $X[I]$ for all $X_T$, then $f_1$ and $f_2$ have parameter count of $O(\varepsilon^{1-\frac{T_0d}{n}})$.
\end{theorem}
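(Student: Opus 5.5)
We read the statement as the lower bound $\Omega(\varepsilon^{1-\frac{T_0 d}{n}})$ on the width of the decoder, in the same spirit as Theorem~\ref{theorem2}. This is the meaningful direction when $T_0 d > n$. We argue by a packing (pigeonhole) argument on the latent space $[0,1]^n$. The key quantitative link is that a two-layer FFN with bounded weights and $1$-Lipschitz sigmoidal activation has Lipschitz constant controlled by its parameter count. We will state and use the formal hypothesis that all weights are bounded by an absolute constant $M$, as in the formal versions of Theorems~\ref{theorem1} and~\ref{theorem2}. Then for $f_2(z)=\sum_{j=1}^{w} a_j \sigma(b_j^\top z + c_j)$ we get $\mathrm{Lip}(f_2) \le \sum_j |a_j|\,\|b_j\| \le C M^2 \sqrt{n}\, w$, so $\mathrm{Lip}(f_2) \lesssim P$ with $P$ the parameter count.

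\textbf{Step 1 (separated set).} Choose a grid $\mathcal{G}\subset [0,1]^{T_0 d}$ of spacing $3\varepsilon$. It has $|\mathcal{G}| \gtrsim (3\varepsilon)^{-T_0 d}$ points, and distinct points of $\mathcal{G}$ are at distance at least $3\varepsilon$. Each grid point is realized as $X[I]$ for some input $X_T$, by filling the remaining positions arbitrarily.

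\textbf{Step 2 (separation survives the latent space).} Take $u\neq v$ in $\mathcal{G}$. By $\varepsilon$-closeness of the reconstruction and the triangle inequality,
\[
\|f_2(f_1(u)) - f_2(f_1(v))\| \ge 3\varepsilon - 2\varepsilon = \varepsilon .
\]
Hence $\|f_1(u)-f_1(v)\| \ge \varepsilon/\mathrm{Lip}(f_2)$. The latent codes $\{f_1(u)\}_{u\in\mathcal{G}}$ therefore form an $\varepsilon/\mathrm{Lip}(f_2)$-separated subset of $[0,1]^n$.

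\textbf{Step 3 (packing).} Balls of radius $\varepsilon/(2\mathrm{Lip}(f_2))$ around these codes are disjoint. They also lie in a constant enlargement of $[0,1]^n$. Comparing volumes gives
\[
\varepsilon^{-T_0 d} \lesssim \bigl(\mathrm{Lip}(f_2)/\varepsilon\bigr)^{n},
\]
so $\mathrm{Lip}(f_2) \gtrsim \varepsilon^{1-\frac{T_0 d}{n}}$. Combined with $\mathrm{Lip}(f_2)\lesssim P$, this yields the claimed parameter count. Constants depend on $n$, $d$ and $M$ but not on $\varepsilon$.

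\textbf{Matching construction (optional).} Partition $[0,1]^{T_0 d}$ into $N^{T_0 d}$ cells with $N\sim\varepsilon^{-1}$. Group the $T_0 d$ coordinates into $n$ blocks of about $T_0 d/n$ coordinates each. The encoder $f_1$ maps each block to a single latent coordinate by a staircase (space-filling-type) enumeration of its $N^{T_0 d/n}$ cells, realized with steep sigmoids. The decoder $f_2$ inverts this enumeration. This needs about $N^{T_0 d/n}$ neurons, i.e. width $\varepsilon^{-T_0 d/n}$. Losing one power of $\varepsilon$ to the steepness versus bounded-weights trade-off recovers the exponent $1-\frac{T_0 d}{n}$.

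\textbf{Main obstacle.} The delicate point is the Lipschitz-versus-parameter link. Without a weight bound the statement is false, since sigmoidal networks with unbounded weights can implement near-discontinuous space-filling encodings at fixed size. I will therefore state the formal version with bounded weights, or equivalently a Lipschitz budget. I will also check that the rate is not degraded by the dependence of the $\sqrt{n}$ factor and the enlargement constants on $n$.
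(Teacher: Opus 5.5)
Your proof is correct and follows the paper's argument for the lower-bound half of the formal theorem. The paper's pigeonhole step produces two inputs $4\varepsilon$ apart in sup-norm whose codes lie within $C\varepsilon^{T_0d/n}$, and then applies Lemma~4 of \cite{yu2026.EffectAttentionHead}, which is exactly your estimate $\mathrm{Lip}(f_2)\lesssim P$ under bounded weights; this is the two-point contrapositive of your Steps~1--3.

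Your side remarks need correcting, though. The paper's matching construction achieves $O(\varepsilon^{-T_0d/n})$ only in $L_p$ and with unbounded (steep $\tanh$) weights, so the two bounds differ by one power of $\varepsilon$; nothing "recovers" the exponent $1-\frac{T_0d}{n}$. Moreover, by the Lebesgue covering theorem, every continuous $f_1:[0,1]^{T_0d}\to[0,1]^n$ with $T_0d>n$ maps two points at sup-distance at least $1$ to the same code. So uniform reconstruction error below $1/2$ is impossible for $\tanh$ networks of any width or weight scale. The uniform lower bound therefore holds (vacuously) even without a weight bound, contrary to what you say under \emph{Main obstacle}.
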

We defer the formal statement and proof to Appendix~\ref{app:thm3}. Following Theorem~\ref{theorem3}, it requires $O(\varepsilon^{1-\frac{|I(t,l+1)|d}{E_{l+1}}})$ parameters for the encoding and decoding when performing each update. Regarding the size of $I(t,l+1)$, we then associate to each $(t,l)$ a parameter cost for $\varepsilon$-approximation as follows.
%Following Theorem~\ref{theorem3},
%we associate to each $(t,l)$ a parameter cost for $\varepsilon$-approximation as follows.
\begin{enumerate}
    \item In max position aggregation,
    the parameter cost is $O(\varepsilon^{1-|I(t,l+1)|d/E_{l+1}})$.
    \item In global information aggregation,
    the parameter cost is $O(\varepsilon^{1-Td/E_{l+1}})$.
    \item In specific position aggregation,
    the parameter cost is $O(\varepsilon^{1-|I'(t,l+1)| \max_s |I(s,l)| d / E_{l+1}})$.
\end{enumerate}
Here $E_l$ denotes the total embedding dimension at layer $l$.
The total parameter cost of the \infoflow model is the sum of parameter costs over all $(t,l)$.

\begin{remark}
Feed-forward networks also incur a parameter cost of $O(\varepsilon^{-\gamma})$ for approximating nonlinear relations in target,
where $\gamma$ corresponds to the smoothness of the function.
We omit this term as it is not the dominant cost in regimes where Transformer approximation is fundamentally limited.
\end{remark}

\subsection{How \infoflow Works}
\label{sec:how-infoflow}

Given a Transformer structure and a target $F$,
\infoflow estimates the approximation rate of $F$ via the following two-step procedure.
The key quantity linking the complexity of the task to the capacity of the \infoflow model is the number of comparison,
which we define as follows.

{\paragraph{Number of Comparison of a target.}\label{sec:NumberOfComparison}
To quantify the computational complexity of a target function, we introduce the notion of the \emph{Number of Comparison}.

We consider target $F: \bigcup_{T}\mathcal{X}^T \to \mathbb{R}$ that takes inputs of arbitrary length, and denote by $F_T = \left. F\right|_{\mathcal{X}^T}$. Let $D_0(F)$, namely the maximum number of tokens to be retrieved, be the smallest integer such that for each $T$ and $X_T \,a.e.$ in Lebesgue measure, $|I_{F_T}(X_T)| \le D_0$. (We consider only those $F$ with finite $D_0(F)$). We first define the Tree of Comparison to aid the derivation of Number of Comparison. 
\begin{definition}
    A Tree of Comparison $\mathcal{T}$ is a full binary tree, with each leaf $V_{l}$ having an ordered index set $I_l$ allowing repetition of elements and depending only on $T$, and each internal node $V_{in}$ associated to a continuous function $f_{in}$. For each input $X_T$, the Tree of Comparison updates in the following way: For each internal node $V_{in}$ with children $V_{left}$ (with index set $I_{left}$)and $V_{right}$ (with $I_{right}$), $V_{in}$ is associated with index set $I = \argmax \{f_{in} (X[I_{left}]), f_{in} (X[I_{right}])\}$.
\end{definition}
By the structure of full binary tree, the Tree of Comparison is well-defined. We call $I_{final}(X_T, \mathcal{T})$ the index set associated to the root of $\mathcal{T}$ with input $X_T$, the number of leaves $N_{leaf}(\mathcal{T})=|\{V_l: V_l \in \mathcal{T}\}|$ the size of the Tree, and $\max_{V_l} |I_l|$ the dimension of the Tree. We then define the Dimension, Order and Number of Comparison of the target $F$:
\begin{definition}
    For $F$ with $D_0 = D_0(F) < \infty$. Let $\beta_1 \in \mathbb{Z}^+$ be the smallest of $\beta \in \mathbb{Z}^+$ that satisfies the following: For constant $C_0$ each $T$, there exists $D_1 \le D_0$ Trees of Comparison $\mathcal{T}_1, \dots, \mathcal{T}_{D_1}$, each with size $N_{leaf}(\mathcal{T}_i) \le C_0T^{\beta'}$ for $\beta' \le \beta$ and dimension no larger than $\beta$, such that $I_{F_T} (X_T) \subseteq \bigcup_{j=1}^{D_1} I_{final}(X_T, \mathcal{T}_j)$ for $X_T\,a.e.$ in Lebesgue measure. \\ We call $\beta_1$ the Dimension of Comparison of $F$, and the minimum of $\beta' \in \mathbb{Z}^+$ over all possible choices of such $\mathcal{T}_j$ with Dimension of Comparison $\beta_1$ is called the Order of Comparison of $F$. The minimum total number of pairwise comparison $N' = \sum_{j=1}^{D_1} (N_{leaf}(\mathcal{T}_j) -1)$ over all choices of Trees given $\beta = \beta_1$ is called the Number of Comparison of $F$.  
\end{definition}
We focus on $F$ with finite Dimension and Order of Comparison. Intuitively, the Number of Comparison of a target is how many tuples of tokens one needs to consider to compute the target output, and Dimension of Comparison is the maximum size of such tuples. For example, the generalized $D$-retrieval target 
$F(X_T) = F_0(\oplus_{i=1}^D \max_{1 \le t \le T} f_i(x(t)))$ studied in \cite{yu2026.EffectAttentionHead} has $\beta' = \beta_1 = 1$, with Number of Comparison upper bounded by $D(T-1)$ and lower bounded by $D(T-D)$.\\
However, in general, targets can involve comparison between larger tuples. A simple example is the target $F(X_T) = \min_{1 \le t_1,t_2,t_3 \le T} \|x(t_1) + x(t_2) + x(t_3)\|_2^2$, we can show that it has $\beta' = \beta_1 = 3$, with Number of Comparison upper bounded by $T^3-1$ and lower bounded by $\Omega(T^3)$. This example will be used later to show that Transformers are inherently limited in capturing such relationships in Section~\ref{sec:predictions}. We defer explanations of the two examples to Appendix~\ref{app:NumberOfComparison}.

}

\paragraph{Number of Comparison of \infoflow.}
When an \infoflow model approximates a target $F$ with Dimension of Comparison $\beta_1$,
comparisons can be performed in two ways.
First,
softmax attention performs at most $T-1$ comparisons per head.
Second,
within the subsequence $X[I(t,l)]$,
at most $|I(t,l)|^{\beta_1} - 1$ comparisons can be performed.
The total Number of Comparison of the \infoflow model is therefore
\begin{equation}
\sum_{t=1}^{T} \sum_{l=1}^{L-1} \Bigl(|I(t,l)|^{\beta_1} - 1 + \sum_{i=1}^{h_l}(T-1)\Bigr) + \sum_{l=1}^{L} \Bigl(|I(T+1,l)|^{\beta_1} - 1 + \sum_{i=1}^{h_l}(T-1)\Bigr).
\label{eq:num-comparison}
\end{equation}

\paragraph{Two-step estimation procedure.}
Given a Transformer structure $(L, h_1, \dots, h_L, E_1, \dots, E_L)$ and a target $F$,
the approximation rate is estimated as follows.
First,
the Number of Comparison of the \infoflow model must be no smaller than that of the target.
This yields a lower bound on $M = \max_{t,l} |I(t,l)|$,
which in turn gives a parameter cost lower bound of $O(\varepsilon^{1-Md/E_l})$ for $\varepsilon$-approximation of $F$.
Second,
one selects an update rule for each $(t,l)$ such that the resulting \infoflow model learns the target in the sense of active index set, %Section~\ref{sec: info flow},
yielding a corresponding upper bound on the parameter cost.
Together,
these two steps provide an estimate of the approximation rate of the Transformer on the target $F$ prior to actual training.

\section{Predictions and Experimental Validation}
\label{sec:predictions}

Before presenting new predictions,
we first verify that \infoflow is consistent with existing theoretical 
results from \cite{yu2026.EffectAttentionHead} and \cite{yun2020.AreTransformersUniversal}.

For the generalized $D$-retrieval target,
a single-layer Transformer with $s$ heads using only max position 
aggregation has at most $s(T-1)$ comparisons,
while the target requires at least $D(T-D)$ comparisons.
\infoflow therefore predicts that at least $s \ge D$ heads are needed 
for effective approximation,
which matches Theorem 2.2 of \cite{yu2026.EffectAttentionHead}.

Alternatively,
a single-layer one-head Transformer can use global information 
aggregation to $\varepsilon$-approximate the generalized $D$-retrieval 
target with $O(\varepsilon^{-Td/E})$ parameters.
This matches Theorem 2.3 of \cite{yu2026.EffectAttentionHead} and 
the universality results of \cite{yun2020.AreTransformersUniversal}.

Having verified consistency with known results,
we now turn to two cases  where direct 
theoretical analysis is currently intractable but \infoflow could provide correct predictions.

\subsection{Intrinsic Dimension Phenomenon in Two-Layer Transformers}

We consider the permutation-invariant target
\begin{equation}
    F(X_T) = \sum_{i=1}^D \max_{1 \le s,t \le T} x(s)^\top A_i x(t),
\label{eq:intrinsic-target}
\end{equation}
where $A_i \in \mathbb{R}^{d \times d}$ are distinct matrices.
We next apply \infoflow to analyze how effective transformers can approximate this target.

\paragraph{\infoflow prediction.}
Following the estimation in \infoflow, we have the following proposition. 

%We first establish the necessity of $h_1 = h_2 \ge D$ through the 
%following lemma.
\begin{Proposition}[Informal] \label{lemma:NOCofIntrinsic}
    The Number of Comparison of $F$ is upper bounded by $DT^2 + o(T^2)$; The Dimension and Order of Comparison of $F$ are $\beta_1 = \beta'=2$. \\
    A two-layer Transformer with $(h_1, h_2)$ heads using only max position aggregation has Number of Comparison at most $O(h_1 T^2)$, to learn $F$ we need $h_1 \ge D$. Also, there exists a two-layer Transformer with $(D,D)$ heads using only max 
    position aggregation that learns the target.  
\end{Proposition}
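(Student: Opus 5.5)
The proposition has three parts: the comparison counts of the target $F$, the head lower bound $h_1\ge D$, and an explicit $(D,D)$-head \infoflow construction. I would treat them in that order.

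\textbf{Comparison structure of $F$.} For each $X_T$ in general position, the maximizers $(s_i^*,t_i^*)$ of $x(s)^\top A_i x(t)$ are unique almost everywhere. Hence $I_F(X_T)=\bigcup_{i=1}^D\{s_i^*,t_i^*\}$ and $D_0\le 2D$.

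For the upper bound, for each $i$ take a Tree of Comparison $\mathcal{T}_i$ with $T^2$ leaves. The leaves carry the ordered index sets $(s,t)$, $1\le s,t\le T$, and every internal node uses $f_{in}(X[(s,t)])=x(s)^\top A_i x(t)$. The root then returns $(s_i^*,t_i^*)$. The trees have dimension $2$ and size $T^2$, so the Number of Comparison is at most $D(T^2-1)=DT^2+o(T^2)$. The same trees witness $\beta'\le 2$.

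For $\beta_1=2$ I must exclude dimension $1$. With $\beta=1$, each leaf holds a single index, so each node function $f_{in}(x(s))$ depends on one token only. The root index is then $\argmax_s g(x(s))$ for some continuous $g$ built from the tree.

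I would argue that $s_i^*$ is not a.e. the argmax of a single-token score. I would exhibit an open set of inputs on which changing another token $x(t)$ changes which $s$ is optimal. Continuity of the $f_{in}$ and the a.e. inclusion requirement are then contradicted, since for a single-token score the winner among a fixed set of tokens cannot change when an outside token moves.

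Since only $D_1\le D_0$ trees are allowed, I also need to rule out covering the pairs by many dimension-one trees. A finite union of single-token argmaxes is still insensitive in the same way, so the same perturbation argument applies. I expect this $\beta_1\ne 1$ step to be the main obstacle, because it requires a careful genericity argument on an open set. I would first try $d=1$ with $A_i$ chosen so that the pair optimum flips between two configurations.

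\textbf{Lower bound $h_1\ge D$.} Apply \eqref{eq:num-comparison} with $L=2$ and max-position updates only. After layer $1$, each $|I(t,1)|\le h_1+1$. The classification token can gain at most $h_2$ further positions, but its terms contribute only $O(h_2T)$ comparisons, which is lower order.

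The dominant term is therefore
$$\sum_{t}\bigl(|I(t,1)|^2-1+h_1(T-1)\bigr)\le T\bigl((h_1+1)^2+h_1T\bigr)=h_1T^2+O(T).$$
Requiring this to be at least the target's Number of Comparison gives $h_1T^2\gtrsim DT^2$, hence $h_1\ge D$.

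To make this a genuine necessity, the target's Number of Comparison must be lower bounded by $DT^2-o(T^2)$, not just upper bounded. I would prove this by noting that each $\mathcal{T}_i$ must be able to output any of the $T^2$ ordered pairs. A tree with fewer than $T^2$ leaves misses some pair $(s,t)$, and an open set of inputs makes that pair the unique maximizer.

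\textbf{Construction with $(D,D)$ heads.} In layer $1$, head $i$ at token $t$ uses the score $A_{2,i}=x(t)^\top A_i x(s)$, so its argmax is $s_i(t)=\argmax_s x(t)^\top A_i x(s)$. Then $I(t,1)=\{t\}\cup\{s_i(t)\}_{i=1}^D$, and this token state encodes the values $m_i(t)=\max_s x(t)^\top A_i x(s)$.

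In layer $2$, head $i$ at the classification token scores token $t$ by $m_i(t)$. This is an admissible score on $X[I(t,1)]$ because $I(t,1)$ contains both $t$ and $s_i(t)$. Its argmax is $t_i^*$, and $s_i(t_i^*)=s_i^*$.

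Consequently $I(T+1,2)\supseteq\bigcup_i I(t_i^*,1)\supseteq\bigcup_i\{s_i^*,t_i^*\}=I_F(X_T)$ almost everywhere, so the model learns $F$.
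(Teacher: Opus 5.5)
Your upper-bound trees, the count $h_1T^2+O(T)$ for the two-layer model, and the $(D,D)$ construction coincide with the paper's. The problems are in the two places where you go beyond it.

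\textbf{The $DT^2$ lower bound is false.} The bound $N'(F)\ge DT^2-o(T^2)$ that you want for $h_1\ge D$ fails under the paper's definitions. The covering condition $I_F(X_T)\subseteq\bigcup_j I_{final}(X_T,\mathcal{T}_j)$ involves only index sets. So for each $i$ one may take leaves $(s,t)$ with $s\le t$ and node function $f_{in}(u,v)=\max(u^\top A_iv,\,v^\top A_iu)$. This is continuous, has dimension $2$, and returns $\{s_i^*,t_i^*\}$ almost everywhere. Hence $N'(F)\le D(T(T+1)/2-1)=DT^2/2+O(T)$.

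Your missing-pair argument breaks exactly here: a tree without the leaf $(s,t)$ can contain $(t,s)$, which covers the same indices. It also presupposes one tree per form, whereas up to $D_0=2D$ trees may cover $I_F$ jointly. Consequently $h_1T^2+O(T)\ge N'(F)$ is already compatible with $h_1$ near $D/2$, and no lower bound on $N'(F)$ can yield $h_1\ge D$. The paper itself reads $h_1\ge D$ off the $DT^2$ upper bound, so you were right that something is missing there. Closing it needs a different argument, e.g.\ that each bilinear form requires its own first-layer head.

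\textbf{The $\beta_1\ne1$ step rests on a false claim.} Different internal nodes may carry different $f_{in}$, so the root of a dimension-one tree is generally not $\argmax_s g(x(s))$ for a single $g$. Moreover, moving a token $c\notin\{a,b\}$ can switch the output from $a$ to $b$. Let $\{c\}$ meet $\{e\}$ under $f_1$, the winner meet $\{b\}$ under $f_2$, and that winner meet $\{a\}$ under $f_3$. For scalar tokens with $f_1=f_3=\mathrm{id}$, $f_2=-\mathrm{id}$ and $x(c)<x(e)<x(a)<x(b)$, the output is $a$; raising $x(c)$ above $x(b)$ makes it $b$.

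What is true, and what the paper uses via the separation argument of the triangle-center appendix, concerns switching surfaces. The outputs of dimension-one trees can change only across finitely many sets $\{f_{in}(x(a))=f_{in}(x(b))\}$, each a cylinder over two tokens. By contrast, for four distinct indices the target's switching surface $\{x(s)^\top A_ix(t)=x(s')^\top A_ix(t')\}$ has nonzero normal component in all four token blocks for generic inputs, so it cannot be reproduced. The paper's version of this argument shows that every unordered pair $\{t_1,t_2\}$ with $t_1\ne t_2$ must appear as a leaf. This gives $\Omega(T^2)$ leaves and hence $\beta'=2$, which your proposal does not otherwise establish.

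Finally, $d=1$ is the wrong test case. There $a_ix(s)x(t)$ is maximized at positions given by $\argmax_t x(t)$ and/or $\argmin_t x(t)$, so $\beta_1=1$. The statement implicitly needs $d\ge2$ and non-degenerate $A_i$; for example, symmetric positive semidefinite $A_i$ put the maximum on the diagonal $s=t$ by Cauchy--Schwarz.
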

We defer the detailed explanation to Appendix~\ref{app: lemma1}. Together, the propositions show that $D$ is the intrinsic dimension of the target, in the sense that $h_1 = h_2 = D$ is the minimal head count for effective approximation.
\iffalse 
\begin{Proposition}[Informal]\label{lemma:intrinsic1}
For the target $F$ defined in \eqref{eq:intrinsic-target},
a two-layer Transformer with $(h_1, h_2)$ heads using only max 
position aggregation has number of comparison at most 
$O(h_1 T^2)$.
\end{Proposition}
\begin{Proposition}[Informal]\label{lemma:intrinsic2}
    There exists a two-layer Transformer with $(D,D)$ heads using only max 
    position aggregation that learns the target, namely $I_F(X_T) \subseteq I(T+1,2)$. 
\end{Proposition}\fi 
\paragraph{Experimental verification.}
We train two-layer Transformers with sequence length $T = 64$ on the target \eqref{eq:intrinsic-target} for $D = 2, 3, 4, 5, 6$,
comparing three head configurations for each $D$: $(D,D)$,
$(D-1,D)$,
and $(D,D-1)$.
As shown in Figure~\ref{fig:intrinsic},
the results exhibit a sharp phase transition at $h_1 = h_2 = D$.
The $(D,D)$ configuration achieves best validation NMSE on the order of $10^{-5}$ across all values of $D$,
while the $(D-1,D)$ and $(D,D-1)$ configurations achieve NMSE on the order of $10^{-3}$ to $10^{-2}$,
a gap of two to three orders of magnitude.
% This sharp separation across all tested values of $D$ provides strong empirical confirmation of the \infoflow prediction. Further training details are given in Appendix~\ref{app:exp:intrinsic}.
{
This sharp separation across all tested values of $D$ provides strong empirical confirmation of the \infoflow prediction.\\
Beyond validating the framework,
this result also reveals a structural insight about multi-layer Transformer approximation:
even in the multi-layer setting,
the number of heads remains a critical architectural parameter,
and must match the intrinsic dimension of the target for effective approximation.
This extends the single-layer finding of \cite{yu2026.EffectAttentionHead},
where insufficient head count leads to exponentially growing parameter cost,
to the multi-layer setting:
a two-layer Transformer with fewer than $D$ heads in the first layer 
cannot approximate a target of intrinsic dimension $D$ efficiently,
regardless of the second layer's configuration.
Further training details are given in Appendix~\ref{app:exp:intrinsic}.}

% \subsection{Approximation Hardness of Higher-Order Retrieval Tasks}

% \jht{cite the limitation paper, \cite{sanford}}

% We consider the triangle-center target
% \begin{equation}
%     F(X_T) = \min_{1 \le t_1,t_2,t_3 \le T} 
%     \|x(t_1) + x(t_2) + x(t_3)\|_2^2,
% \label{eq:triangle-target}
% \end{equation}
% whose number of comparison is $T^3 - 1$.
% \infoflow predicts that this target cannot be approximated effectively by any Transformer of fixed size as $T$ grows,
% regardless of the depth, number of heads or embedding dimension.
\subsection{Approximation Hardness of Higher-Order Retrieval Tasks}

The intrinsic dimension experiment demonstrates that when a target requires pairwise comparisons,
a two-layer Transformer with sufficiently many heads can approximate it efficiently.
However,
it is critical to observe that not all targets can be handled efficiently by any fixed Transformer architecture as $T$ grows.
The key insight from \infoflow is that when a target requires three or more simultaneous comparisons rather than pairwise ones,
the Number of Comparison of the target grows as a higher power of $T$,
and no fixed Transformer architecture can match this complexity without requiring number of parameters to grow exponentially with $T$.
This is consistent with the representational limitations of Transformers 
studied in \cite{sanford2023.RepresentationalStrengthsLimitations}.We then propose the the following proposition to demonstrate this phenomena. 
\begin{Proposition}[Informal]\label{lemma:triangle}
    If a target $F$ have Order of Comparison $\beta'>2$, then for an $L$-layer $H$-heads-per-layer Transformer with embedding dimension $E$, there exists a constant $C>0$ such that it requires a parameter cost of $\Omega(\varepsilon^{1-\frac{CT^{(\beta'-1)/\beta_1}}{LE}})$ for $\varepsilon$-approximation of $F_T$. 
\end{Proposition}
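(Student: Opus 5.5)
The argument runs entirely inside the \infoflow framework and follows the two-step procedure of Section~\ref{sec:how-infoflow}. First, the Number of Comparison of the \infoflow model in \eqref{eq:num-comparison} must be at least that of the target. Second, this forces some information set to be large. Third, Theorem~\ref{theorem3} turns a large information set into a parameter cost.

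\textbf{Lower bound on the target's Number of Comparison.} Since the Order of Comparison is $\beta'$, every admissible family of at most $D_0$ Trees of Comparison with dimension $\beta_1$ must contain a tree whose size is not $O(T^{\beta'-1})$. Otherwise the Order would be at most $\beta'-1$. Interpreting this quantitatively, I would take the Number of Comparison of $F_T$ to satisfy $N'\ge c_0T^{\beta'}$ for some $c_0>0$ and all large $T$. This is the informal reading of the definition that the triangle example illustrates, and I would state it explicitly as the interpretation being used.

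\textbf{Upper bound for the \infoflow model.} With $L$ layers, $H$ heads per layer and $M=\max_{t,l}|I(t,l)|$, the quantity \eqref{eq:num-comparison} is at most $(T+1)L\bigl(M^{\beta_1}+H(T-1)\bigr)$. Requiring it to be at least $c_0T^{\beta'}$ gives
\[
(T+1)L\,M^{\beta_1}\ \ge\ c_0T^{\beta'}-(T+1)LH(T-1).
\]
Because $\beta'>2$, the subtracted attention term is $O(T^2)=o(T^{\beta'})$ for fixed $L,H$. Hence $M^{\beta_1}\ge c_1T^{\beta'-1}/L$, that is, $M\ge (c_1/L)^{1/\beta_1}T^{(\beta'-1)/\beta_1}$. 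The hypothesis $\beta'>2$ is exactly what makes this step work: for $\beta'\le2$ the $H(T-1)$ attention comparisons could absorb the target's complexity, which is the intrinsic-dimension regime.

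\textbf{Turning $M$ into a cost.} Pick $(t,l)$ attaining $M$. Each of the three update rules charges $O(\varepsilon^{1-|I(t,l)|d/E})$ or more at that node. By Theorem~\ref{theorem3}, encoding and decoding $X[I(t,l)]$ through $\mathbb{R}^{E}$ costs order $\varepsilon^{1-Md/E}$, and I would read that theorem in its lower-bound direction: approximately invertible encoding of $Md$ coordinates into dimension $E$ forces this many parameters. Substituting the bound on $M$ gives the exponent $1-C T^{(\beta'-1)/\beta_1}/(L^{1/\beta_1}E)$.

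To reach the stated form with $L$ rather than $L^{1/\beta_1}$, note that $L^{1/\beta_1}\le L$ for $L\ge1$. This only increases the exponent $1-(\cdot)/E$, so it weakens the bound and is legitimate; the constant $C$ absorbs $c_1$ and $d$. Finally, the total cost is a sum over nodes, so it dominates the cost at this single node.

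\textbf{Main obstacle.} The hardest step is the first one, making ``Order $\beta'$'' yield the quantitative bound $N'=\Omega(T^{\beta'})$. The definition controls tree sizes only through the minimal exponent, so I would have to argue that the trees cannot all have size $o(T^{\beta'})$ when the exponent is minimal, possibly by restricting to $T$ along a subsequence. A secondary issue is that Theorem~\ref{theorem3} is stated as an upper bound. I would rely on its formal version in Appendix~\ref{app:thm3}, or on the cost law of Section~\ref{sec: parameter_cost} taken as an axiom of \infoflow, to supply the matching lower bound.
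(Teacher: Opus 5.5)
Your proposal is correct and follows the paper's derivation essentially step for step. Like the paper, you take the target's Number of Comparison to be at least $C_1T^{\beta'}$, bound the \infoflow count by roughly $TLM^{\beta_1}+HLT^2$, use $\beta'>2$ to discard the $O(T^2)$ attention term, and convert the resulting lower bound on $M$ into a parameter cost via the cost law of Theorem~\ref{theorem3}. The paper also asserts the $\Omega(T^{\beta'})$ bound directly from the definition of Order and silently passes from $L^{1/\beta_1}$ to $L$, so the points you flag are exactly the informal steps in the original.
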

We defer the detailed explanation to Appendix~\ref{app:lemma2}. This proposition predicts that any target $F$ with Order of Comparison $\beta'>2$ cannot be approximated effectively by any transformer of fixed size as $T$ grows. 
To illustrate this phenomenon,
we consider the triangle-center target,
\begin{equation}\label{eq:triangle-target}
F(X_T) = \min_{1 \le t_1,t_2,t_3 \le T} 
\|x(t_1) + x(t_2) + x(t_3)\|_2^2,
\end{equation}
%which requires finding the triple of tokens whose sum is closest to the origin.
which have Order of Comparison $\beta'=3$ according to Appendix~\ref{app:Triangle_explain}.
Unlike the target in \eqref{eq:intrinsic-target},
which requires only pairwise comparisons between tokens and has number of comparison $O(T^2)$,
the triangle-center target requires three tokens to be compared simultaneously,
resulting in Number of Comparison $\Omega(T^3)$.
This cubic growth in comparison complexity is the source of the fundamental difficulty:
\infoflow predicts that this target cannot be approximated effectively by any Transformer of fixed size as $T$ grows,
regardless of the depth,
number of heads,
or embedding dimension.
\paragraph{\infoflow prediction.}
Following Proposition~\ref{lemma:triangle}, for the target $F$ defined in \eqref{eq:triangle-target}, it requires an $L$-layer $H$-heads-per-layer Transformer with embedding dimension $E$ a parameter cost of $O(\varepsilon^{1-\frac{T^{2/3}}{6LE}})$ for $\varepsilon$-approximation of $F$.
\iffalse
\begin{Proposition}[Informal]\label{lemma:triangle}
For the target $F$ defined in \eqref{eq:triangle-target},

\end{Proposition}
\fi
%We defer the detailed explanation to Appendix~\ref{app:lemma2}.
Since the exponent $\frac{T^{2/3}}{LE}$ grows with $T$, for any fixed $L$ and $E$,
the parameter cost of approximating $F$  must be super-polynomial in $1/\varepsilon$ for large $T$,
regardless of the number of heads or embedding dimension. This demonstrates a fundamental limitation that cannot be overcome by increasing model size.

\paragraph{Experimental verification.}
We train two-layer Transformers of three different sizes on the target 
\eqref{eq:triangle-target} for sequence lengths $T \in \{3, 4, 5, 6, 8, 10, 12, 16, 24, 32, 48, 64\}$.
We consider the following head configurations:
$(h_1,h_2) = (2,2)$ with embedding dimension $24$,
$(h_1,h_2) = (4,4)$ with embedding dimension $24$,
and $(h_1,h_2) = (4,4)$ with embedding dimension $48$.
As shown in Figure~\ref{fig:triangle},
all three configurations exhibit rapidly growing NMSE as $T$ increases,
with NMSE exceeding $0.85$ at $T = 16$ and approaching $1.0$ for $T \ge 32$,
regardless of model size.
The three configurations perform almost identically across all values 
of $T$,
indicating that increasing the number of heads or the embedding dimension does not improve approximation within the tested range.
These results are consistent with the \infoflow prediction that the parameter cost of approximating $F$ grows with $T$ for any fixed architecture. Further training details are given in Appendix~\ref{app:exp:triangle}.

\begin{figure}[h]
\centering
\begin{subfigure}[t]{0.48\linewidth}
\centering
\includegraphics[width=\linewidth]{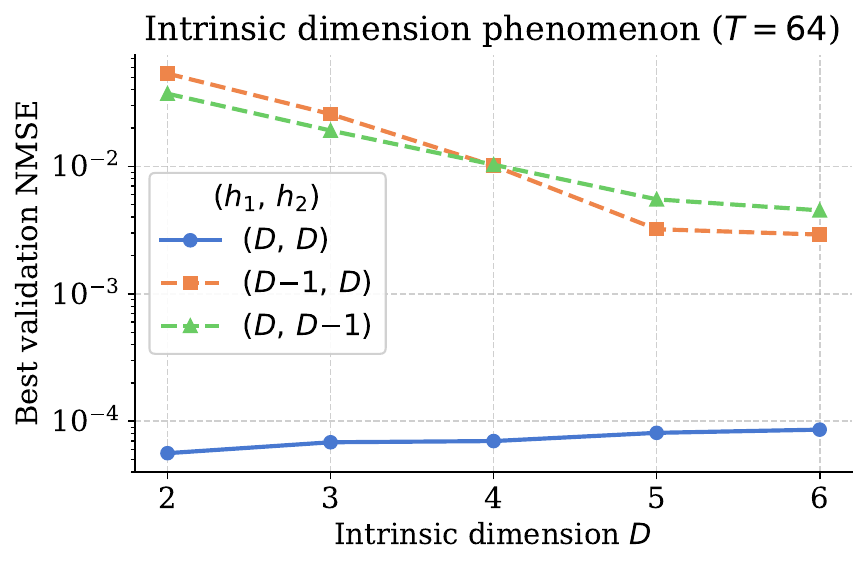}
\caption{Best validation NMSE for two-layer Transformers trained on the target 
\eqref{eq:intrinsic-target},
showing a sharp phase transition at $h_1 = h_2 = D$ that confirms that $D$ is the intrinsic dimension of the target.}
\label{fig:intrinsic}
\end{subfigure}
\hfill
\begin{subfigure}[t]{0.5\linewidth}
\centering
\includegraphics[width=\linewidth]{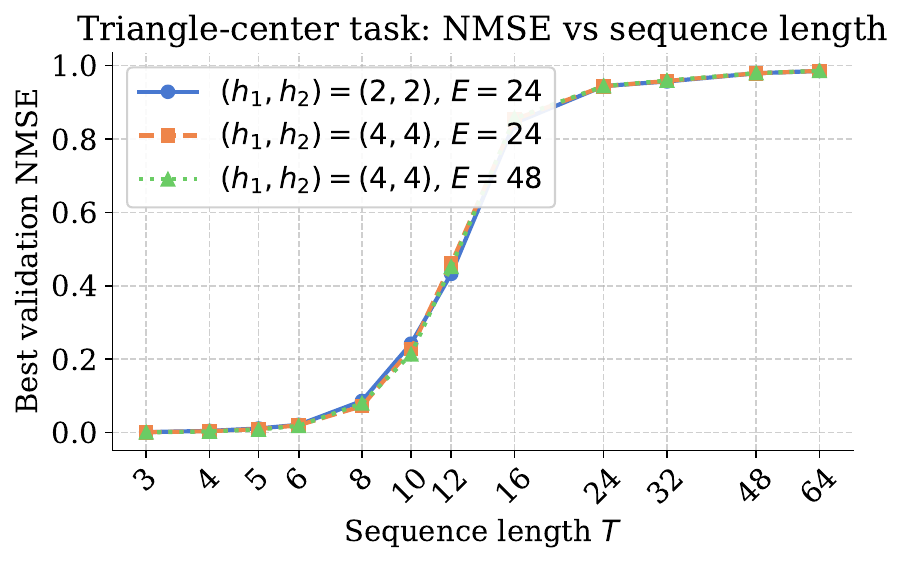}
\caption{Best validation NMSE against sequence length $T$ for three Transformer configurations trained on \eqref{eq:triangle-target},
consistent with the \infoflow prediction that approximation difficulty grows with $T$ regardless of model size.}
\label{fig:triangle}
\end{subfigure}
\caption{
Experimental validation of the intrinsic dimension phenomenon (left) 
and the approximation hardness of the triangle-center task (right).}
\label{fig:predictions}
\end{figure}

\section{Discussion and Outlook}

In this work,
we proposed \infoflow,
a coarse-grained framework for analyzing the approximation properties of multi-layer Transformers.
Building on a depth separation result and three structural mechanisms of information propagation,
\infoflow associates to each token position and layer an information set together with a parameter cost law,
yielding a tractable abstraction that recovers known approximation bounds and yields concrete predictions where direct theoretical analysis is intractable.
Experimental validation on the intrinsic dimension phenomenon and the triangle-center task confirms the predictions of \infoflow,
demonstrating its utility as a tool for reasoning about approximation efficiency prior to training.
Several directions remain open for future investigation.
On the theoretical side,
a more complete characterization of information propagation modes would strengthen the foundation of \infoflow,
and extending it to practical Transformer variants is an important open problem.
On the practical side,
\infoflow can potentially guide architecture design by predicting which tasks are well-suited to a given Transformer structure and informing hyperparameter choices such as head count and embedding dimension.

\paragraph{Limitation}
The parameter cost law in \infoflow does not account for the role of attention weight matrix rank in the attention score computation,
a limitation also present in \cite{yu2026.EffectAttentionHead} and studied by \cite{amsel2024.QualityQuantityAttention}.
Incorporating rank into the cost law is a natural direction for future work.
Additionally,
the three propagation modes are motivated by synthetic tasks,
and their relevance to real-world Transformer behavior requires further validation.
Finally,
the current framework applies to retrieval-type targets where the active index set does not grow with $T$,
and extending it to more general targets would require new theoretical tools.

\newpage
\bibliographystyle{plainnat}
\bibliography{neurips_2026}

\newpage
\appendix
\crefalias{section}{appendix}

\section{Formal statements and Proofs of Theorems }
\subsection{Formal statement and proof of Theorem~\ref{theorem1}}\label{app:thm1}
We choose $\mathcal{X} = [-1,1]^{3}$ and consider the following task: the input $X_T \in \mathcal{X}^T$, the target $F:\mathcal{X}^T \to \mathbb{R}$ is defined as 
\begin{equation}
    F(X_T) = \min_{1 \le s, t \le T} 2(1 + x(s)^T x(t))
\end{equation}
For a model $M: \mathcal{X}^T \to \mathbb{R}$, define 
\begin{equation}
    \delta(M) = \sup_{X_T \in \mathcal{X}^T} |F(X_T) - M(X_T)|
\end{equation}
For the theorem to hold, we also post the following constraints on the models. 
\begin{itemize}
    \item The embedding $P_\phi$ satisfies  
\[
    \|P_\phi(x(s),s)\|_2 \leq 1, \qquad \forall\, s \in [T], \; X_T \in \mathcal{X}^T,
\]
ensuring embedded inputs remain uniformly bounded.
    \item All weights in all feed-forward blocks $ F_l$ and $\hat{F}$ and entries of the attention matrices $\{W_{Q,i,l},W_{K,i,l},W_{V,i,l}\},W_{O,l}$ are bounded in magnitude by $1$, ensuring stability of the model.
\end{itemize}

Under these assumptions, we then propose the formal theorem showing that this task $F$ is easy for two-layer transformer and hard for single layer transformer. 
\begin{theorem}[Formal Version of Theorem~\ref{theorem1}]
    The following two cases holds. 
    \begin{itemize}
        \item Let $M_1: \mathcal{X}^T \to \mathbb{R}$ be a single-layer transformer with $H$ heads and per-head embedding dimension $n$ defined above. If $M_1$ satisfies that $\delta(M_1) \le \varepsilon$, then $M_1$ have parameter count of the order $\Omega(\varepsilon^{-k})$, where $k = \frac{T - H - 1}{H(n+1) +1} - 1$. 
        \item $\forall \varepsilon>0$, there exists a two-layer transformer $M_2$ with $1$ head in both layers and per-head dimension $6$ defined above, with parameter count at most of order $O(\varepsilon^{-1})$ such that $\delta(M_2) \le \varepsilon$
    \end{itemize}
\end{theorem}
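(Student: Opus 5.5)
We prove the two parts separately: a construction for the upper bound and a dimension-counting argument for the lower bound.

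\textbf{Upper bound.} The target satisfies
\begin{equation}
F(X_T)=\min_{s,t}\bigl(\|x(s)+x(t)\|_2^2+2-\|x(s)\|^2-\|x(t)\|^2\bigr)\cdot\mathrm{const},
\end{equation}
but the cleaner route is to write $F(X_T)=2+2\min_s g(x(s))$ with $g(x(s))=\min_t x(s)^\top x(t)$. This splits the problem into two successive minimizations.

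\emph{First layer.} Token $s$ uses query $-\beta x(s)$ and key $x(t)$. The head then approximately retrieves $x(t^*(s))$ with $t^*(s)=\arg\min_t x(s)^\top x(t)$. The value projection carries $x(t)$ and the residual keeps $x(s)$, which gives per-head dimension $6=3+3$.

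\emph{Feed-forward block.} A small FFN computes the scalar $x(s)^\top x(t^*(s))\approx g(x(s))$. Since the block is sigmoidal, the product is approximated to accuracy $\varepsilon$ with $O(\varepsilon^{-1})$ neurons under the bounded-weight constraint.

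\emph{Second layer.} The CLS token attends with query $-\beta$ on this scalar coordinate and retrieves the minimal value. The final FFN applies the affine map $u\mapsto 2+2u$.

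\emph{Error control.} Softmax with weights bounded by $1$ cannot reach arbitrarily large $\beta$ directly, but $\beta$ can be absorbed through the scaling factor or through stacked entries. Softmax-averaging then gives error $O(T e^{-\beta\Delta}/\ldots)$. Near-ties are harmless because the retrieved values are nearly equal, so the attention output error is at most $O(\log T/\beta)$. Taking $\beta\sim\varepsilon^{-1}\log T$ gives total error $\varepsilon$ with $O(\varepsilon^{-1})$ parameters.

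\textbf{Lower bound.} We follow the strategy of the head-count lower bound in \cite{yu2026.EffectAttentionHead}: show that the feed-forward block $\hat F$ must resolve a high-dimensional family of inputs through a low-dimensional bottleneck.

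\emph{Bottleneck.} In a single layer, the output is $\hat F(c+W_O\,\mathrm{Concat}_i v_i)$. Here $v_i\in\mathbb{R}^n$ is a softmax average of $W_{V,i}P_\phi(x(s))$, so the input to $\hat F$ is determined by $H$ vectors of dimension $n$.

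\emph{Constructing inputs.} Fix the tokens so that each head's attention weights are controlled. Place $H+1$ reserved tokens (e.g.\ a fixed token $x_0$ and tokens forcing attention) and let the remaining $T-H-1$ tokens vary in a region where $F$ depends nontrivially on all of them.

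\emph{Parametrization.} On this region, take $F(X)=2+2\min_{s} x_0^\top x(s)$, a function of $T-H-1$ free scalars $u_s=x_0^\top x(s)$. Meanwhile the map $X\mapsto(v_1,\dots,v_H)$, together with the softmax normalizers, factors through roughly $H(n+1)+1$ continuous coordinates.

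\emph{Transferring to $\hat F$.} Group the free coordinates into blocks of size $H(n+1)+1$. Within each block the attention features cannot separate inputs, so $\hat F$ restricted to the feature image must approximate a $\min$-type function on a set of effective dimension $k+1=\frac{T-H-1}{H(n+1)+1}$. Standard nonlinear width lower bounds for bounded-weight sigmoidal networks approximating a Lipschitz function that is genuinely $(k+1)$-dimensional and piecewise-linear with $\Omega(1)$ gradient then give $\Omega(\varepsilon^{-k})$. For example, one can use a covering/VC argument: $\varepsilon$-separated sign patterns number $\varepsilon^{-(k+1)}$, and each neuron contributes $O(\varepsilon^{-1})$ via Lipschitz bounds.

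\textbf{Main obstacle.} The hardest step is making rigorous the claim that the attention map is effectively a dimension reduction which $\hat F$ must invert, i.e.\ exhibiting a $(k+1)$-dimensional family of inputs whose feature images collapse while $F$ varies by $\Omega(1)$-Lipschitz amounts. I would try the construction of \cite{yu2026.EffectAttentionHead} directly, using a continuous map from a $(T-H-1)$-dimensional cube onto the feature space. A fiber argument then yields many inputs with identical features but differing $F$ values, and controlling the distortion introduced by the softmax normalizers is the delicate part.
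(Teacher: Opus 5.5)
Your upper bound is essentially the paper's construction: a first-layer head with $W_Q^\top W_K=-I_3$ on the token block, the retrieved value written into the other three coordinates, an FFN computing the inner product through one-dimensional approximations, a CLS soft-min in layer two, and an affine readout. Your remark that $x(s)^\top v(s)$ is itself a softmax average of the scores, hence within $\log T/\beta$ of the minimum even with ties, is a clean way to justify the paper's uniform-convergence claim.

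\textbf{The lower bound has a genuine gap, at exactly the step you call transferring to $\hat F$.} Your bottleneck count of $(n+1)H$ feature coordinates is the right one. But the exponent $k$ does not come from $\hat F$ having to approximate a Lipschitz, $\min$-type function of $k+1$ effective variables. No standard width bound of that kind applies here, and weighing ``sign patterns'' against ``$O(\varepsilon^{-1})$ per neuron'' is not an argument.

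Your fallback, a fiber argument producing inputs with \emph{identical} features but different targets, aims at the wrong conclusion. If $M_1=\hat F\circ\Phi$ $\varepsilon$-approximates $F$, any two inputs with $\Phi(X)=\Phi(Y)$ automatically satisfy $|F(X)-F(Y)|\le 2\varepsilon$. Such a pair could therefore only show that no model of any size works. Moreover, topology tells you fibers are nontrivial, not that $F$ varies by more than $2\varepsilon$ along them. The parameter count enters through \emph{how close} the features of target-separated inputs are forced to be.

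The paper's mechanism is a quantitative pigeonhole followed by a Lipschitz bound:
\begin{itemize}
\item Give each free position $j\le T_0=T-H-1$ a set $U_j$ of $\Theta(\varepsilon^{-1})$ points near $-b_1$. Their squared distances to $-b_1$ are $4\varepsilon$-spaced and increase with $j$, so there are $\Theta(\varepsilon^{-T_0})$ configurations.
\item Each head's normalized output lies in a bounded subset of $\mathbb{R}^n$, and its normalizer lies in $[M_i,TM_i]$. These $(n+1)H$ numbers therefore occupy only $O(\varepsilon^{-(k+1)(n+1)H})$ cells of size $\varepsilon^{k+1}$ (relative size for the normalizers).
\item Since $T_0=(k+1)(H(n+1)+1)>(k+1)(n+1)H$, two distinct configurations fall in the same cell.
\item Reset the positions where they agree to a harmless token. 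The minimum is then attained at the first differing position, so the targets differ by at least $4\varepsilon$, while the features stay $O(\varepsilon^{k+1})$-close.
\item A bounded-weight two-layer network with $p$ hidden units has Lipschitz constant $O(p)$ (constants depending on $n,H$). This forces $p=\Omega(\varepsilon/\varepsilon^{k+1})=\Omega(\varepsilon^{-k})$.
\end{itemize}

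\textbf{The second missing ingredient is how the reserved tokens are chosen.} The softmax scale $\beta$ is unconstrained, so a head could attend sharply to the free tokens. Your ``tokens forcing attention'' must therefore be chosen against the given model. The paper takes $y_j$ to be a maximizer of head $j$'s weight over the allowed points and the remaining positions. It then selects, among $H+1$ disjoint caps, one cap $K_1$ that contains no $y_j$.

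This choice does two things. First, it makes your parametrization $F=2+2\min_s x_0^\top x(s)$ valid on the family: the $y_j$ stay far from $-b_1$, so they never enter the minimizing pair. Second, it guarantees that every head puts at least a $1/T$ fraction of its mass on the reserved positions, for both the colliding sequences and the reset sequences. That mass bound is precisely what controls the normalizer distortion you flag when passing from the colliding pair to the reset pair. Without it, and without the pigeonhole-plus-Lipschitz step, your proposal does not yield $\Omega(\varepsilon^{-k})$.
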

\paragraph{Proof Sketch} The proof of the first part of Theorem\ref{theorem1} follows the methods and notations of Theorem 2.2 in \citep{yu2026.EffectAttentionHead}, by using the Pigeonhole Theorem to construct two subsequences  which are indistinguishably close after softmax attention, on the order of $O(\varepsilon^{k+1})$, while their target values differs by $4\varepsilon$. Then we extend them to full sequences to complete the proof. 

The second part of Theorem~\ref{theorem1} is proven with explicit construction. 
\begin{proof}
\textbf{First Part of Theorem~\ref{theorem1}}
We consider only the nontrivial case, where $T_0 = T-H-1 > 0$. \\
We use the following notations (let $E = nH$):
\paragraph{Notations}
For each head $i=1,\dots,s$, define the attention weight function  
\[
    \lambda_i(x,t) \;=\; 
    \exp\!\bigl(\beta \, (W_{K,i,1} P_\phi(x,t))^\top W_{Q,i,1}c_0 \bigr),\quad  i=1, \dots, H
\]
and the value mapping  
\[
    v_i(x,t) \;=\; W_{V,i,1} P_\phi(x,t) \;\in\; \mathbb{R}^n, \quad i=1, \dots, H
\]
where $\gamma>0$ is the softmax scaling factor, 
$W_{Q,i,1},W_{K,i,1}\in\mathbb{R}^{n\times E}$ are the query and key projections, 
and $W_{V,i,1}\in\mathbb{R}^{n\times E}$ is the value projection for head $i$.\\

We denote by $Q_0$ the following set
\begin{equation}
    Q_0 = \{x: x\in \mathbb{S}^2 \text{ with } x_1 \in [-1/2, 0], x_2 \in [-1/2, 0], x_3 < 0\}
\end{equation}
We choose $H+1$ points $b_1, \dots, b_{H+1} \in Q_0$ satisfying the followings:\\
(1)$\|b_i - b_j\|_2 \ge 4L$,  $\forall i \ne j$, \\
(2)$B(b_i, L) \cap \mathbb{S}^2 \subset Q_0$.\\
It is clear that there exists $L = \frac{C}{H}$ for positive absolute constant $C$ for this to hold. We choose such $L$ and such $b_1, \dots, b_{H+1}$. 

Let $K_i = - B(b_i, L) \cap Q_0$, $i=1, \dots, H+1$, and let $P_0 = \bigcup_{i=1}^{H+1} K_i$, we recursively define the following pairs $(y_i, t_i)$ for $i=1, \dots, H$ as follows. 
\begin{itemize}
    \item For the first head,
    \[
        (y_1,t_1) \;=\; \arg\max_{\substack{y \in P_0 \\ t \in  [T]}}
        \,\lambda_1(y,t).
    \]
    \item For $j>1$,
    \[
        (y_j,t_j) \;=\; \arg\max_{\substack{y \in P_0 \\ t \in [T] \\ t \notin \{t_1,\dots,t_{j-1}\}}}
        \,\lambda_j(y,t).
    \]
    \item If maximum can be obtained at multiple $(y, t)$, then choose one of them. 
\end{itemize}
Let $Y = \{y_1,\dots,y_s\}$ (Consider $Y$ as a "set" allowing repetition).  
Since the sets $K_1,\dots,K_{H+1}$ are pairwise disjoint and $H<H+1$, 
there exists at least one index $i \in \{1,\dots,H+1\}$ such that
\[
    K_i \,\cap\, Y \;=\; \varnothing.
\]
Without loss of generality, we assume that $i=1$, and $t_j = T-j$ for $j=1, \dots, H$. 

\paragraph{Sequences to be considered}
We then consider constructing the two sequences that are close after attention layer. 

We denote by $M = T_0\lfloor \frac{L^2}{4T_0\varepsilon} \rfloor$. And for $i =1, \dots, M$, we construct points $p_i$ satisfying the following:
\begin{equation}
    p_i \in \{x: x \in K_1, \|x - (-b_1)\|_2^2 = \frac{(i-1)L^2}{M}\}
\end{equation}
As it is clear that $\{x: x \in K_1, \|x - (-b_1)\|_2^2 = \frac{(i-1)L^2}{M}\}$ must be non-empty, we can pick such $p_i$ for $i=1, \dots, M$. 

With $\{p_i\}$ constructed, we construct sets $U_j$ for $j=1, \dots, T_0$:
\begin{equation}
    U_j = \{p_i : \frac{(j-1)M}{T_0}< i \le  \frac{jM}{T_0}\}
\end{equation}
We have that $|U_j| = \frac{M}{T_0} = \lfloor \frac{L^2}{4T_0\varepsilon} \rfloor = O(1/\varepsilon)$. 

\paragraph{Existence of Two Distinct Sequences} We then claim that there exists two distinct sequences $Z_{1, T} = z_1(1), \dots ,z_1(T)$ and $Z_{2, T} = z_2(1), \dots, z_2(T)$ satisfying the following: 
\begin{itemize}\label{z1andz2}
    \item $z_1(T) = z_2(T) = b_1$. 
    \item $z_1(T-i) = z_2(T-i) = y_i$, for $i=1, \dots, H$.
    \item $z_1(t), z_2(t) \in U_t$, for $t=1, \dots, T_0$.
    \item $\left\|\frac{\sum_{t=1}^{T}\lambda_i(z_1(t), t)v_i(z_1(t),t) }{\sum_{t=1}^{T}\lambda_i(z_1(t), t)}
    - \frac{\sum_{t=1}^{T}\lambda_i(z_2(t), t)v_i(z_2(t),t) }{\sum_{t=1}^{T}\lambda_i(z_2(t), t)}\right\|_2
    \le \frac{\varepsilon^{k+1}}{6T}$, for $i=1,\dots,H$.
    \item $\frac{\sum_{t=1}^{T}\lambda_i(z_1(t), t)}{\sum_{t=1}^{T}\lambda_i(z_2(t), t)} 
        \in \left[1/ (1+\tfrac{\varepsilon^{k+1}}{24T^2}), \, 1+\tfrac{\varepsilon^{k+1}}{24T^2}\right]$.
\end{itemize}
We then justify the existence of such two distinct sequence by comparing the order of $1/\varepsilon$ on both side of the conditions. 
\begin{itemize}
    \item We first consider the order on the choice space of $Z_{1,T}, Z_{2,T}$. As each $U_t$ have size $O(1/\varepsilon)$, $t=1, \dots, T_0$, Then the size of the choice space is $O(1/\varepsilon^{T_0})$. 
    \item We then consider the order on the constraint space. Noticing that $\left\|\frac{\sum_{t=1}^{T}\lambda_i(z_j(t), t)v_i(z_j(t),t) }{\sum_{t=1}^{T}\lambda_i(z_j(t), t)}\right\|_2 \le 1$ as it is a convex combination of vectors of norm no greater than $1$, $j=1,2$. Thus discretizations to achieve accuracy $O(\varepsilon^{k+1})$ would lead to $O(1/\varepsilon^{(k+1)nH})$ possibilities across $H$ heads. Let $M_i= \max \{\max_{1\le j \le H}\lambda_i(y_j, t_j), \lambda_i(b_1, T) \}$ for $i=1, \dots, H$, we have that $M_i\le \sum_{t=1}^{T}\lambda_i(z_j(t), t) \le TM_i$, $j=1,2$. Therefore the discretizations to achieve accuracy $O(\varepsilon^{k+1})$ of the term $\frac{\sum_{t=1}^{T}\lambda_i(z_1(t), t)}{\sum_{t=1}^{T}\lambda_i(z_2(t), t)}$ leads to $O(1/\varepsilon^{(k+1)H})$ possibilities across $H$ heads. Thus it reaches to a total of $O(1/\varepsilon^{(k+1)(n+1)H})$ possibilities. 
\end{itemize}
As $T_0 = (k+1)(H(n+1)+1) > (k+1)H(n+1)$, we have 
\begin{equation}
    O(\frac{1}{\varepsilon^{T_0}}) \gg O(\frac{1}{\varepsilon^{(k+1)(n+1)H}})
\end{equation}
Therefore, by the pigeonhole principle, there exists two distinct sequences $Z_{1, T}$ and $Z_{2, T}$ satisfying all the conditions in~\ref{z1andz2}.

Consider index set $J = \{t : z_1(t) \ne z_2(t)\}$. By $Z_{1, T} \ne Z_{2, T}$, we have that $J \ne \varnothing$, and by construction of $Z_{1,T}, Z_{2, T}$, we have that $J \subseteq \{1, \dots, T_0\}$. 
We then construct two sequences $W_{1,T}, W_{2,T}$ from $Z_{1, T}, Z_{2, T}$. 
\begin{itemize}
    \item For $t \in J$, $w_1(t) = z_1(t)$, $w_2(t) = z_2(t)$. 
    \item For $t \in \{1, \dots, T_0\} - J$, $w_1(t) = w_2(t) = y_1$.
    \item For $T-H\le t \le T$, $w_1(t) = w_2(t) = z_1(t)$. (Notice that for $T-H\le t \le T$, $z_1(t) = z_2(t)$. )
\end{itemize}
Denote by $I' = \{1, \dots, T_0\} - J$, $K' = \{T-H, \dots, T\}$, and for $i=1, \dots, H$, We define the following notations for simplicity of calculation, as in \cite{yu2026.EffectAttentionHead}.
\begin{itemize}
    \item $Q_{1,i} = \sum_{t \in J} \lambda_i(w_1(t), t)$.
    \item $Q_{2,i}= \sum_{t \in J} \lambda_i(w_2(t), t)$.
    \item $V_{1,i} = (\sum_{t \in J} \lambda_i(w_1(t), t) v_i(w_1(t), t))/Q_{1,i}$.
    \item $V_{2,i} = (\sum_{t \in J} \lambda_i(w_2(t), t) v_i(w_2(t), t))/Q_{2,i}$.
    
    \item $Q_{3,i} = \sum_{t \in I'} \lambda_i(z_1(t), t)$, which is of the same value if defined on $Z_{2, T}$.
    \item $V_{3,i} = (\sum_{t \in I'} \lambda_i(z_1(t), t) v_i(z_1(t), t))/Q_{3,i}$, which is of the same value if defined on $Z_{2, T}$.
    \item $Q_{4, i} = \sum_{t \in I'} \lambda_i(w_1(t), t)$, which is of the same value if defined on $W_{2,T}$.
    \item $V_{4, i} = (\sum_{t \in I'} \lambda_i(w_1(t), t) v_i(w_1(t), t))/Q_{4,i}$, which is of the same value if defined on $W_{2, T}$.
    \item $Q_{5,i} = \sum_{t \in K'} \lambda_i(w_1(t), t)$, which is of the same value on all $Z, W$. 
    \item $V_{5, i} = (\sum_{t \in K'} \lambda_i(w_1(t), t) v_i(w_1(t), t))/Q_{5,i}$, which is of the same value on all $Z,W$. 
\end{itemize}

We then have the following equations:
\begin{itemize}
    \item $\sum_{t=1}^{T}\lambda_i(z_1(t), t) = Q_{1, i} + Q_{3, i} + Q_{5, i}$.
    \item $\sum_{t=1}^{T}\lambda_i(z_2(t), t) = Q_{2, i} + Q_{3, i} + Q_{5, i}$.
    \item $\sum_{t=1}^{T}\lambda_i(w_1(t), t) = Q_{1, i} + Q_{4, i} + Q_{5, i}$.
    \item $\sum_{t=1}^{T}\lambda_i(z_2(t), t) = Q_{2, i} + Q_{4, i} + Q_{5, i}$.
    \item $\frac{\sum_{t=1}^{T}\lambda_i(z_1(t), t)v_i(z_1(t),t) }{\sum_{t=1}^{T}\lambda_i(z_1(t), t)} = \frac{Q_{1,i}V_{1, i} + Q_{3,i}V_{3, i}+ Q_{5,i}V_{5, i}}{Q_{1, i} + Q_{3, i} + Q_{5, i}}$.
    \item $\frac{\sum_{t=1}^{T}\lambda_i(z_2(t), t)v_i(z_2(t),t) }{\sum_{t=1}^{T}\lambda_i(z_2(t), t)} = \frac{Q_{2,i}V_{2, i} + Q_{3,i}V_{3, i}+ Q_{5,i}V_{5, i}}{Q_{2, i} + Q_{3, i} + Q_{5, i}}$.
    \item $\frac{\sum_{t=1}^{T}\lambda_i(w_1(t), t)v_i(w_1(t),t) }{\sum_{t=1}^{T}\lambda_i(w_1(t), t)} = \frac{Q_{1,i}V_{1, i} + Q_{4,i}V_{4, i}+ Q_{5,i}V_{5, i}}{Q_{1, i} + Q_{4, i} + Q_{5, i}}$.
    \item $\frac{\sum_{t=1}^{T}\lambda_i(w_2(t), t)v_i(w_2(t),t) }{\sum_{t=1}^{T}\lambda_i(w_2(t), t)} = \frac{Q_{2,i}V_{2, i} + Q_{4,i}V_{4, i}+ Q_{5,i}V_{5, i}}{Q_{1, i} + Q_{4, i} + Q_{5, i}}$.
\end{itemize}

Without loss of generality, we assume $Q_{1, i} \ge Q_{2,i}$.
By calculation, we have 
\begin{align}
    &\frac{Q_{1,i}V_{1,i} + Q_{4,i}V_{4,i}+ Q_{5,i}V_{5,i}}{Q_{1,i}+ Q_{4,i} + Q_{5,i}}- \frac{Q_{2,i}V_{2,i} + Q_{4,i}V_{4,i} + Q_{5,i}V_{5,i}}{Q_{2,i}+ Q_{4,i} + Q_{5, i}}\\
    =& \frac{(Q_{2,i} - Q_{1,i})(Q_{4, i}(V_{4,i} - V_{2,i}) + Q_{5,i}(V_{5,i} - V_{2,i}))}{(Q_{1,i}+Q_{4,i}+Q_{5,i})(Q_{2,i}+Q_{4,i} + Q_{5, i}) } + \frac{Q_{1,i}}{Q_{1,i}+Q_{4,i} + Q_{5,i}}(V_{1,i} - V_{2,i}).
\end{align}
As $Q_{5,i} \ge M_i \ge \frac{Q_{1,i}+Q_{3,i}+Q_{5,i}}{T}$, we have that 
\begin{align}\label{ineq:21}
    &\|\frac{(Q_{2,i} - Q_{1,i})(Q_{4, i}(V_{4,i} - V_{2,i}) + Q_{5,i}(V_{5,i} - V_{2,i}))}{(Q_{1,i}+Q_{4,i}+Q_{5,i})(Q_{2,i}+Q_{4,i} + Q_{5, i}) }\| \\
    \le & \|\frac{(Q_{2,i} - Q_{1,i})(2Q_{4, i} + 2Q_{5,i})}{(Q_{1,i}+Q_{4,i}+Q_{5,i})(Q_{2,i}+Q_{4,i} + Q_{5, i})}\| \\
    \le & \|\frac{2(Q_{2,i} - Q_{1,i})}{(Q_{1,i}+Q_{4,i}+Q_{5,i})}\| \\
    \le & \|\frac{2T(Q_{2,i} - Q_{1,i})}{(Q_{1,i}+Q_{3,i}+Q_{5,i})}\| \\
    = & \|1- \frac{\sum_{t=1}^{T}\lambda_i(z_2(t), t)}{\sum_{t=1}^{T}\lambda_i(z_1(t), t)}\|\\
    \le & \|\frac{T \varepsilon^{k+1}}{12T^2 }\|\\
    \le & \frac{\varepsilon^{k+1}}{12T}
\end{align}
Similarly, we also have
\begin{equation}
    \|\frac{(Q_{2,i} - Q_{1,i})(Q_{3, i}(V_{3,i} - V_{2,i}) + Q_{5,i}(V_{5,i} - V_{2,i}))}{(Q_{1,i}+Q_{3,i}+Q_{5,i})(Q_{2,i}+Q_{3,i} + Q_{5, i}) }\| \le \frac{\varepsilon^{k+1}}{12T}
\end{equation}
Then we have 
\begin{align}
    &\|\frac{Q_{1,i}}{Q_{1,i}+Q_{3,i} + Q_{5,i}}(V_{1,i} - V_{2,i})\|\\
    \le &\|\frac{Q_{1,i}V_{1,i} + Q_{3,i}V_{3,i}+ Q_{5,i}V_{5,i}}{Q_{1,i}+ Q_{3,i} + Q_{5,i}}- \frac{Q_{2,i}V_{2,i} + Q_{3,i}V_{3,i} + Q_{5,i}V_{5,i}}{Q_{2,i}+ Q_{3,i} + Q_{5, i}}\|\\
    +& \|\frac{(Q_{2,i} - Q_{1,i})(Q_{3, i}(V_{3,i} - V_{2,i}) + Q_{5,i}(V_{5,i} - V_{2,i}))}{(Q_{1,i}+Q_{3,i}+Q_{5,i})(Q_{2,i}+Q_{3,i} + Q_{5, i}) } \|\\
    = & \|\frac{\sum_{t=1}^{T}\lambda_i(z_1(t), t)v_i(z_1(t),t) }{\sum_{t=1}^{T}\lambda_i(z_1(t), t)} - \frac{\sum_{t=1}^{T}\lambda_i(z_2(t), t)v_i(z_2(t),t) }{\sum_{t=1}^{T}\lambda_i(z_2(t), t)} \| \\
    +& \|\frac{(Q_{2,i} - Q_{1,i})(Q_{3, i}(V_{3,i} - V_{2,i}) + Q_{5,i}(V_{5,i} - V_{2,i}))}{(Q_{1,i}+Q_{3,i}+Q_{5,i})(Q_{2,i}+Q_{3,i} + Q_{5, i}) } \|\\
    \le & \frac{\varepsilon^{k+1}}{6T} + \frac{\varepsilon^{k+1}}{12T}\\
    =& \frac{\varepsilon^{k+1}}{4T}
\end{align}
Then we have 
\begin{align}
    &\|\frac{Q_{1,i}}{Q_{1,i}+Q_{4,i} + Q_{5,i}}(V_{1,i} - V_{2,i})\|\\
    \le & \|\frac{TQ_{1,i}}{Q_{1,i}+Q_{3,i} + Q_{5,i}}(V_{1,i} - V_{2,i})\|\\
    \le & \frac{\varepsilon^{k+1}}{4}
\end{align}
Thus 
\begin{align}
    & \|\frac{\sum_{t=1}^{T}\lambda_i(w_1(t), t)v_i(w_1(t),t) }{\sum_{t=1}^{T}\lambda_i(w_1(t), t)} - \frac{\sum_{t=1}^{T}\lambda_i(w_2(t), t)v_i(w_2(t),t) }{\sum_{t=1}^{T}\lambda_i(w_2(t), t)} \|\\
    =&\|\frac{Q_{1,i}V_{1,i} + Q_{4,i}V_{4,i}+ Q_{5,i}V_{5,i}}{Q_{1,i}+ Q_{4,i} + Q_{5,i}}- \frac{Q_{2,i}V_{2,i} + Q_{4,i}V_{4,i} + Q_{5,i}V_{5,i}}{Q_{2,i}+ Q_{4,i} + Q_{5, i}}\|\\
    \le& \|\frac{Q_{1,i}}{Q_{1,i}+Q_{4,i} + Q_{5,i}}(V_{1,i} - V_{2,i})\| +  \|\frac{(Q_{2,i} - Q_{1,i})(Q_{3, i}(V_{3,i} - V_{2,i}) + Q_{5,i}(V_{5,i} - V_{2,i}))}{(Q_{1,i}+Q_{3,i}+Q_{5,i})(Q_{2,i}+Q_{3,i} + Q_{5, i}) }\| \\
    \le& \frac{\varepsilon^{k+1}}{4} + \frac{\varepsilon^{k+1}}{12T} \\
    \le& \frac{\varepsilon^{k+1}}{2}
\end{align}
This indicated that $W_{1,T}, W_{2,T}$ have very close representations. 

Now we show that $F(W_{1,T})$ and $F(W_{2,T})$ are at least separated by $\varepsilon$. 

It is clear that $F(X_T) = \min_{1 \le s, t \le T}\|x(s) + x(t)\|_2^2$ if the entire $X_T$ lies on $\mathbb{S}^{2}$, which both $W_{1, T}, W_{2, T}$ satisfies. 
As $w_1(T) = w_2(T) = b_1$ is the only point on each sequence in $Q_0$, and the rest of the sequence are in $-Q_0$, it is clear that $F(W_{i,t}) = \min_{1 \le s \le T-1} \|w_i(t) - (-b_1)\|_2^2$. Furthermore, by the fact that $Y \cap B(-b_1, L) = \varnothing$, this further reduces to $F(W_{i,t}) = \min_{s \in J} \|w_i(t) - (-b_1)\|_2^2$. 

By the construction of $U_t$ and $Z_{1, T}, Z_{2, T}$, let $j^* = \min_{t \in J}$, we have that $F(W_{i,t}) = \|w_i(j^*) - (-b_1)\|_2^2 =  \|z_i(j^*) - (-b_1)\|_2^2$. From $z_1(j^*) \ne z_2(j^*)$, we have that $|\|z_1(j^*) - (-b_1)\|_2^2 - \|z_2(j^*) - (-b_1)\|_2^2| \ge 4\varepsilon$. From $\delta(M_1) \le \varepsilon$, we have that $|M_1(W_{1, T}) - M_1(W_{1, T})| \le 2 \varepsilon$.  

Then by Lemma 4 in \cite{yu2026.EffectAttentionHead}, which states that for two vectors $v_1,v_2 \in \mathbb{R}^n$. If  
\[
    \|v_1 - v_2\|_2 \le A
    \quad\text{and}\quad
    \|\hat{F}(v_1) - \hat{F}(v_2)\| \ge B ,
\]
where $\hat{F}:\mathbb{R}^n \to \mathbb{R}^m$ is a two-layer feed-forward network 
satisfying with Tanh-activation and bounded weights, then $\hat{F}$ must use at least
\[
    \Omega\!\left(\frac{B}{A\sqrt{n}}\right)
\]
parameters. Having $A  \le \frac{\varepsilon^{k+1}}{2}$, and $B\ge \varepsilon$

Thus the parameter count is lower bounded by $\Omega(\frac{1}{\varepsilon^k})$. 

\textbf{Second Part of Theorem~\ref{theorem1}}
We directly construct this two-layer transformer. 
\begin{enumerate}
    \item $x_1(t)=\hat{x}(t) = P_\phi(x(t), t) = (\frac{x(t)}{3},0) \in \mathbb{R}^6$. This embedding requires only $O(1)$ parameters. 
    \item $W_{Q,1,1}^T W_{K,1,1} = \begin{pmatrix}
        - I_3 & 0 \\
        0 & 0
        \end{pmatrix} \in \mathbb{R}^{6 \times6}$. We then have that $(W_{Q,1,1}x_1(s))^T W_{K,1,1}x_1(t) = - \frac{1}{9}x(s)^Tx(t)$. 
    \item $W_{V,1,1} = I_6$, $W_{O,1} = \begin{pmatrix}
        0 & 0 \\
        I_3 & 0
        \end{pmatrix}$. We then have $x'_1(t) = (\frac{1}{3}x(t), v(t)) \in \mathbb{R}^6$, where $v(t)$ is the actual vector obtained from softmax attention. By the fact that log attention score $(W_{Q,1,1}x_1(s))^T W_{K,1,1}x_1(t) = - \frac{1}{9}x(s)^Tx(t)$ and we can apply a scaling factor $\beta >0$ to this term in soft attention, we have that when $\beta \to \infty$, $x(t)^T v(t) \to \frac{1}{9} \min_{1 \le s \le T} x(s)^Tx(t)$ uniformly for all $X_T \in \mathcal{X}^T$. We set $\beta$ to be large enough such that $|x(t)^T v(t) - \frac{1}{9} \min_{1 \le s \le T} x(s)^Tx(t)|\le \frac{\varepsilon}{54}$.
    \item Denote by $G:[-1,1]^6 \to [-1,1]^6$ the following function: $G((x, y)) = (x^Ty,\frac{1}{2},0,0,0,0)$, where $(x,y) \in \mathbb{R}^6, x,y \in \mathbb{R}^3$. Then there exists a two-layer Tanh-activated FFN $F_1$ with at most $O(1/\varepsilon^1)$ parameters such that $|F_1((x,y)) - G((x,y))| \le \frac{\varepsilon}{54}$. (As we can rewrite $x_iy_i = \frac{1}{4} [(x_i+y_i)^2 - (x_i-y_i)^2]$, this approximation of inner product is transformed to one dimensional FFN approximation problem, therefore $O(1/\varepsilon^1)$ approximation rate can be achieved by FFNs.)
    \item Write $x_2(t) = (a(t), \frac{1}{2}, 0,0,0,0) \in \mathbb{R}^6$. Let $W_{Q,1,2}^T W_{K,1,2} = A \in \mathbb{R}^{6 \times 6}$, where $A_{2,1} = - 2$ is the only non-zero element of $A$. Thus we have that $(W_{Q,1,2}c_2)^T W_{K,1,1}x_2(t) = - a(t)$. 
    \item $W_{V,1,2} = I_6$, $W_{O,2} = \begin{pmatrix}
        0 & 0 \\
        I_3 & 0
        \end{pmatrix}$. With the scaling factor $\beta >0$, we also have a large enough $\beta$ such that the $4$-th element of $c_2'$ satisfies that $|(c_2')_{4} - \min_{1 \le t \le T}a(t)| \le \frac{\varepsilon}{54}$. Then the output feed-forward block maps $c_2'$ to $2+18(c_2')_{4}$ with $O(1)$ parameters. 
    \item The total error is 
    \begin{align}
        & |M_2(X_T) - F(X_T)|\\
        \le & 18 [\max_{1 \le t \le T}(|x(t)^T v(t) - \frac{1}{9} \min_{1 \le s \le T} x(s)^Tx(t)| + |a(t) -\frac{1}{9} \min_{1 \le s \le T} x(s)^Tx(t)|) \\
        +& |(c_2')_{4} - \min_{1 \le t \le T}a(t)|]\\
        \le & \frac{\varepsilon}{3}+\frac{\varepsilon}{3}+\frac{\varepsilon}{3} \\
        \le & \varepsilon
    \end{align}
\end{enumerate}
By this construction we have proven the second part of Theorem~\ref{theorem1}.

\end{proof}

\begin{remark}
    If we choose $\mathcal{X} = [0,1]^3$, we can also choose a similar target for Theorem~\ref{theorem1} to hold. 
\end{remark}

\subsection{Formal statement and proof of Theorem~\ref{theorem2}}\label{app:thm2}
It suffices to show this statement in the scalar case with $\mathcal{X} = [0,1]$ ($X_T \in [0,1]^T$). For a given $X_T$, there exists a permutation $\sigma$ such $x(\sigma(1)) \ge x(\sigma(2)) \ge \dots, \ge x(\sigma(T))$. Then for a fixed $k$, define $F_k: [0,1]^T \to [0,1]$ to be $F_k(X_T) = x(\sigma(k))$. 

We have the following theorem showing that softmax attention cannot retrieval the $k$-th largest value efficiently. 
\begin{theorem}[Formal Version of Theorem~\ref{theorem2}]
    Let $f_1: [0,1] \to [0,1]^n$ and $f_2: [0,1]^n \to [0,1]$ be two-layer Tanh-activated Feed forward Neural Networks, with weights bounded by $1$. For any non-decreasing $\rho: [0,1] \to \mathbb{R}$, we have a softmax attention $M:X_T \to [0,1]$ of the following form:
    \begin{equation}
        M(X_T) = f_2(\frac{\sum_{t=1}^{T}[\exp(\rho(x(t)))f_1(x(t))]}{\sum_{t=1}^{T}\exp(\rho(x(t)))})
    \end{equation}
    If $\max_{X_T \in \mathcal{X}^T}|M(X_T) - F_k(X_T)| \le \varepsilon$ and $k \ge 2$, then $f_1$ and $f_2$ have parameter count at least $\Omega(\varepsilon^{-o})$, where $o = \frac{T - k}{n+1} -1$. 
\end{theorem}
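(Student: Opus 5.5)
We follow the pigeonhole template used for the first part of Theorem~\ref{theorem1}, adapted to the rank-$k$ target. Write $\lambda(x)=\exp(\rho(x))$, which is non-decreasing. The plan is to build two input sequences whose attention outputs (the convex combination fed to $f_2$) are within $O(\varepsilon^{o+1})$ of each other, yet whose values of $F_k$ differ by at least $3\varepsilon$. Lemma 4 of \cite{yu2026.EffectAttentionHead}, already used in the proof of Theorem~\ref{theorem1}, then gives a lower bound on the size of $f_2$: with $A=O(\varepsilon^{o+1})$ and $B\ge\varepsilon$ (which follows from $|M-F_k|\le\varepsilon$), it yields $\Omega(B/(A\sqrt n))=\Omega(\varepsilon^{-o})$ parameters.

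\textbf{Construction.} We fix $k-1$ ``large'' tokens equal to $1$, so they occupy ranks $1,\dots,k-1$. The remaining free positions are $T_0=T-k+1$. By monotonicity of $\rho$, the fixed tokens carry the largest weight: with $\Lambda=\lambda(1)$, the normalizer lies in $[\Lambda,T\Lambda]$ whenever $k\ge2$. This is where $k\ge 2$ enters. The max token dominates the normalization, so each free token's contribution is controlled relative to a fixed mass, exactly as the $Q_{5}$ term did in Theorem~\ref{theorem1}. We divide $[0,1/2]$ into $T_0$ consecutive disjoint intervals $U_1,\dots,U_{T_0}$, each discretized into a grid of $\sim c/(T_0\varepsilon)$ points with spacing $\ge 4\varepsilon$, placing $U_j$ higher for larger $j$. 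Sequence candidates are obtained by choosing $z(j)\in U_j$ at free position $j$. There are $\Theta(\varepsilon^{-T_0})$ candidates. Each candidate is mapped to the pair (normalized value vector in $[0,1]^n$, normalizer ratio relative to $\Lambda$ in $[1,T]$), and this pair is discretized at resolution $\varepsilon^{o+1}/\mathrm{poly}(T)$. That gives $O(\varepsilon^{-(o+1)(n+1)})$ cells. Since $T_0>(o+1)(n+1)$ by the definition of $o$ (up to the off-by-one between $T-k$ and $T-k+1$, which we absorb as slack), pigeonhole yields two distinct candidates $Z_1,Z_2$ that are close in both coordinates.

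\textbf{Reduction to a single differing coordinate.} Let $J$ be the set of positions where $Z_1,Z_2$ differ, and $j^*=\max J$. We build $W_1,W_2$ by keeping the entries on $J$, replacing the common entries in positions below $j^*$ by $0$ (the minimum), and keeping the common entries above $j^*$. Then the rank-$k$ element of $W_i$ is the largest free token, namely the entry at the top free index. To make this land on $j^*$, we instead replace all common entries by $0$ and keep only $J$ and the $k-1$ ones. Then $F_k(W_i)=z_i(j^*)$, because the intervals are ordered so the entry in $U_{j^*}$ exceeds all other free entries. Hence $|F_k(W_1)-F_k(W_2)|\ge4\varepsilon$. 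The algebra in the proof of Theorem~\ref{theorem1} (the $Q_{1..5},V_{1..5}$ decomposition) shows that replacing common entries perturbs the closeness by at most a factor of $T$, since the fixed mass $Q_5\ge\Lambda$ bounds all the denominators. So $W_1,W_2$ remain $O(\varepsilon^{o+1})$-close after attention.

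\textbf{Main obstacle.} The hardest step is checking that the replacement step preserves closeness. Replacing common entries by $0$ changes $Q_3$ into $Q_4$, and the bound requires $Q_4+Q_5\ge (Q_1+Q_3+Q_5)/T$. This holds because $Q_5\ge\Lambda\ge\max_t\lambda(z(t))$. Without the $k-1$ dominating tokens (the case $k=1$) this fails, and indeed the argmax is cheap to retrieve. The remaining work is bookkeeping of the $\mathrm{poly}(T)$ factors in the discretization resolution, which affect only constants.
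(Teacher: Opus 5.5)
Your proof is correct, but it takes a different route from the paper's. You follow the Theorem~\ref{theorem1} template literally: you pigeonhole on the \emph{normalized} attention output of the full sequence together with the normalizer ratio. You then transfer closeness to the zeroed-out sequences $W_1,W_2$ through the $Q_{1..5},V_{1..5}$ decomposition, using $Q_5\ge\Lambda\ge\max_x\lambda(x)$ to lose only $\mathrm{poly}(T)$ factors, plus a $\sqrt n$ since here $\|f_1\|_2\le\sqrt n$ rather than $1$.

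The paper instead rescales the weights by $e^{\rho(1)}$, so that $b(x)=\lambda(x)/\lambda(1)\in(0,1]$ with $b(1)=1$. It then pigeonholes on the \emph{unnormalized} partial sums $\bigl(\sum_{j=1}^m a(z_j),\sum_{j=1}^m b(z_j)\bigr)\in[0,m]^{n+1}$ over the $m=T-k+1$ free coordinates only. Off $J$, the sequences $X$ and $Y$ carry identical entries ($1$ on the fixed top positions, $0$ on the common free positions), so these cancel exactly in $A_X-A_Y$ and $B_X-B_Y$, and the replacement step costs nothing. Normalizing then costs only a factor $2/(k-1)$, because $B_X,B_Y\ge k-1$; this is where $k\ge2$ enters, playing the role of your $Q_5\ge\Lambda$.

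This choice eliminates the $Q/V$ algebra entirely. It also gives the slightly stronger exponent $\frac{T-k+1}{n+1}-1$, rather than spending the extra free coordinate as slack. The paper additionally proves the width bound for $f_2$ inline via the $1$-Lipschitz property of $\tanh$, which is exactly the content of the Lemma~4 you cite.

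There is a quick way to see your transfer step is sound. Your two closeness conditions already imply the paper's condition with $\eta=O\bigl((T+\sqrt n)\delta\bigr)$, since
\[
S_a(Z_1)-S_a(Z_2)=N(Z_1)\bigl(V(Z_1)-V(Z_2)\bigr)+V(Z_2)\bigl(N(Z_1)-N(Z_2)\bigr).
\]
Your version buys reuse of machinery already in the paper, at the cost of more bookkeeping; the paper's version is shorter and has explicit constants.
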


\paragraph{Proof Sketch:} To prove Theorem~\ref{theorem2}, we follow a similar idea to the first part of the proof of Theorem~\ref{theorem1}. 
\begin{proof}
   
Let
\[
    m:=T-k+1.
\]
The proof reduces the \(k\)-th largest-value task to a maximum-retrieval task over \(m\) free coordinates. The remaining \(k-1\) coordinates will be fixed at value \(1\). Since \(k\ge2\), at least one such fixed top coordinate is available.

Define
\[
    w(x):=\exp(\rho(x)).
\]
Since \(\rho\) is non-decreasing, for every \(x\in[0,1]\),
\[
    0<\frac{w(x)}{w(1)}\le 1.
\]
Set
\[
    \lambda(x):=\frac{w(x)}{w(1)},\qquad
    a(x):=\lambda(x) f_1(x)\in[0,1]^n,
    \qquad
    b(x):=\lambda(x)\in(0,1].
\]
Multiplying all attention weights by the same positive constant \(w(1)^{-1}\) does not change the softmax average. Hence the attention representation can be written as
\[
    A(X_T)
    :=
    \frac{\sum_{t=1}^{T} a(x(t))}
         {\sum_{t=1}^{T} b(x(t))}
    \in[0,1]^n.
\]

We now construct many candidate subsequences. Choose \(\varepsilon>0\) sufficiently small, for instance
\[
    0<\varepsilon<\frac{1}{64m}.
\]
Let
\[
    \Delta:=4\varepsilon,
    \qquad
    N:=\left\lfloor \frac{1}{16m\varepsilon}\right\rfloor .
\]
For each \(j=1,\dots,m\), define
\[
    \alpha_j:=\frac{j-1}{2m},
\]
and the grid
\[
    G_j
    :=
    \{\alpha_j+q\Delta:\ q=1,\dots,N\}.
\]
For sufficiently small \(\varepsilon\), each \(G_j\subset[0,1/2]\). Moreover, the grids are ordered: if \(i<j\), then every point of \(G_i\) is smaller than every point of \(G_j\). Also, two distinct points in the same \(G_j\) differ by at least
\[
    \Delta=4\varepsilon.
\]

Consider all grid subsequences
\[
    z=(z_1,\dots,z_m)\in G_1\times\cdots\times G_m.
\]
There are \(N^m\) such subsequences. For each \(z\), define
\[
    S(z)
    :=
    \left(
        \sum_{j=1}^{m} a(z_j),
        \sum_{j=1}^{m} b(z_j)
    \right)
    \in\mathbb{R}^{n+1}.
\]
Because \(a(z_j)\in[0,1]^n\) and \(b(z_j)\in(0,1]\), we have
\[
    S(z)\in[0,m]^{n+1}.
\]

Let
\[
    \eta:=4m\,N^{-m/(n+1)}.
\]
Partition the cube \([0,m]^{n+1}\) into axis-aligned cubes of side length \(\eta\). The number of cubes is at most
\[
    \left(\frac{m}{\eta}+1\right)^{n+1}
    =
    \left(\frac{1}{4}N^{m/(n+1)}+1\right)^{n+1}.
\]
For \(N\) large enough, this number is strictly smaller than \(N^m\). Hence, by the pigeonhole principle, there exist two distinct subsequences
\[
    z=(z_1,\dots,z_m),
    \qquad
    z'=(z'_1,\dots,z'_m)
\]
such that
\[
    \left\|
        \sum_{j=1}^{m} a(z_j)
        -
        \sum_{j=1}^{m} a(z'_j)
    \right\|_\infty
    \le \eta,
\]
and
\[
    \left|
        \sum_{j=1}^{m} b(z_j)
        -
        \sum_{j=1}^{m} b(z'_j)
    \right|
    \le \eta.
\]

Let
\[
    J:=\{j\in[m]:z_j\ne z'_j\}.
\]
Since \(z\ne z'\), the set \(J\) is nonempty. We now build two full input sequences \(X,Y\in[0,1]^T\). For the first \(k-1\) positions, set
\[
    x(1)=\cdots=x(k-1)=1,
    \qquad
    y(1)=\cdots=y(k-1)=1.
\]
For the remaining \(m=T-k+1\) positions, indexed by \(j=1,\dots,m\), set
\[
    x(k-1+j)
    :=
    \begin{cases}
        z_j, & j\in J,\\
        0, & j\notin J,
    \end{cases}
\]
and
\[
    y(k-1+j)
    :=
    \begin{cases}
        z'_j, & j\in J,\\
        0, & j\notin J.
    \end{cases}
\]

We first show that the targets are separated. Since the first \(k-1\) entries of both \(X\) and \(Y\) equal \(1\), and all remaining entries lie in \([0,1/2]\), the \(k\)-th largest entry is the maximum among the remaining \(m\) entries. Let
\[
    j_*:=\max J.
\]
Because the grids \(G_1,\dots,G_m\) are strictly ordered, the largest nonzero remaining entry of \(X\) is \(z_{j_*}\), and the largest nonzero remaining entry of \(Y\) is \(z'_{j_*}\). Therefore
\[
    F_k(X)=z_{j_*},
    \qquad
    F_k(Y)=z'_{j_*}.
\]
Since \(z_{j_*},z'_{j_*}\in G_{j_*}\) are distinct grid points,
\[
    |F_k(X)-F_k(Y)|
    =
    |z_{j_*}-z'_{j_*}|
    \ge \Delta
    =
    4\varepsilon.
\]

Next we show that the attention representations of \(X\) and \(Y\) are very close. Define
\[
    A_X:=\sum_{t=1}^{T}a(x(t)),
    \qquad
    B_X:=\sum_{t=1}^{T}b(x(t)),
\]
and similarly \(A_Y,B_Y\). Since \(b(1)=1\) and \(k\ge2\),
\[
    B_X\ge k-1,
    \qquad
    B_Y\ge k-1.
\]
Moreover, the coordinates where \(j\notin J\) are identical in \(X\) and \(Y\), and hence cancel in the difference. Thus
\[
    A_X-A_Y
    =
    \sum_{j\in J}\bigl(a(z_j)-a(z'_j)\bigr)
    =
    \sum_{j=1}^{m}\bigl(a(z_j)-a(z'_j)\bigr),
\]
so
\[
    \|A_X-A_Y\|_\infty\le \eta.
\]
Similarly,
\[
    |B_X-B_Y|\le \eta.
\]

The attention representations are
\[
    A(X)=\frac{A_X}{B_X},
    \qquad
    A(Y)=\frac{A_Y}{B_Y}.
\]
For each coordinate \(\ell=1,\dots,n\),
\[
\begin{aligned}
    \left|
        \frac{(A_X)_\ell}{B_X}
        -
        \frac{(A_Y)_\ell}{B_Y}
    \right|
    &\le
    \frac{|(A_X-A_Y)_\ell|}{B_X}
    +
    (A_Y)_\ell
    \left|
        \frac{1}{B_X}-\frac{1}{B_Y}
    \right|.
\end{aligned}
\]
Since \(0\le f_1(x)_\ell\le1\), we have \((A_Y)_\ell\le B_Y\). Therefore
\[
\begin{aligned}
    \left|
        \frac{(A_X)_\ell}{B_X}
        -
        \frac{(A_Y)_\ell}{B_Y}
    \right|
    &\le
    \frac{\eta}{k-1}
    +
    B_Y\frac{|B_X-B_Y|}{B_XB_Y}  \\
    &\le
    \frac{\eta}{k-1}
    +
    \frac{\eta}{k-1}
    =
    \frac{2\eta}{k-1}.
\end{aligned}
\]
Hence
\[
    \|A(X)-A(Y)\|_\infty
    \le
    \frac{2\eta}{k-1},
\]
and therefore
\[
    \|A(X)-A(Y)\|_2
    \le
    \frac{2\sqrt n\,\eta}{k-1}.
\]

Since \(M\) is assumed to \(\varepsilon\)-approximate \(F_k\), we have
\[
    |M(X)-F_k(X)|\le\varepsilon,
    \qquad
    |M(Y)-F_k(Y)|\le\varepsilon.
\]
Thus
\[
\begin{aligned}
    |M(X)-M(Y)|
    &\ge
    |F_k(X)-F_k(Y)|
    -
    |M(X)-F_k(X)|
    -
    |M(Y)-F_k(Y)| \\
    &\ge
    4\varepsilon-2\varepsilon
    =
    2\varepsilon.
\end{aligned}
\]
But
\[
    M(X)=f_2(A(X)),
    \qquad
    M(Y)=f_2(A(Y)).
\]
Therefore
\[
    |f_2(A(X))-f_2(A(Y))|
    \ge
    2\varepsilon.
\]

It remains to convert this separation into a parameter lower bound for \(f_2\). Write the two-layer tanh network \(f_2\) as
\[
    f_2(u)
    =
    \sum_{r=1}^{p}
        \alpha_r
        \tanh(\theta_r^\top u+\beta_r)
    +\gamma,
\]
where \(p\) is the hidden width. Since all weights are bounded by \(1\),
\[
    |\alpha_r|\le1,
    \qquad
    \|\theta_r\|_2\le\sqrt n.
\]
Because \(\tanh\) is \(1\)-Lipschitz,
\[
\begin{aligned}
    |f_2(u)-f_2(v)|
    &\le
    \sum_{r=1}^{p}
        |\alpha_r|
        \left|
            \tanh(\theta_r^\top u+\beta_r)
            -
            \tanh(\theta_r^\top v+\beta_r)
        \right| \\
    &\le
    \sum_{r=1}^{p}
        |\theta_r^\top(u-v)| \\
    &\le
    \sum_{r=1}^{p}
        \|\theta_r\|_2\|u-v\|_2 \\
    &\le
    p\sqrt n\,\|u-v\|_2.
\end{aligned}
\]
Applying this with \(u=A(X)\) and \(v=A(Y)\), we obtain
\[
    2\varepsilon
    \le
    p\sqrt n\,\|A(X)-A(Y)\|_2
    \le
    p\sqrt n\,
        \frac{2\sqrt n\,\eta}{k-1}.
\]
Hence
\[
    p
    \ge
    \frac{(k-1)\varepsilon}{n\eta}.
\]
Since the parameter count \(P_2\) of \(f_2\) is at least its hidden width \(p\),
\[
    P_2
    \ge
    \frac{(k-1)\varepsilon}{n\eta}.
\]

Finally, recall that
\[
    \eta=4m\,N^{-m/(n+1)}.
\]
Since
\[
    N=\left\lfloor \frac{1}{16m\varepsilon}\right\rfloor
    =
    \Theta(\varepsilon^{-1}),
\]
we have
\[
    \eta
    =
    O\!\left(
        \varepsilon^{m/(n+1)}
    \right).
\]
Therefore
\[
    P_2
    \ge
    C_{T,k,n}\,
    \varepsilon^{1-m/(n+1)}
    =
    C_{T,k,n}\,
    \varepsilon^{-\left(\frac{m}{n+1}-1\right)}.
\]
Since \(m=T-k+1\), this gives
\[
    P_2
    =
    \Omega\!\left(
        \varepsilon^{-\left(\frac{T-k+1}{n+1}-1\right)}
    \right).
\]
The total parameter count of \((f_1,f_2)\) is at least \(P_2\), so the same lower bound holds for the combined parameter count.

This proves the theorem.

\end{proof}
\begin{remark}
    We choose $\rho$ to be non-decreasing in the spirit that $x(\sigma(k))$ doesn't happen to be the maximizer of another function. 
\end{remark}

\subsection{Justification of the Parameter Cost in Section ~\ref{sec: parameter_cost}}\label{app:thm3}
We propose the formal version of Theorem~\ref{theorem3} to justify the choice of parameter cost in Section ~\ref{sec: parameter_cost}. 
\begin{theorem}[Formal Version of Theorem~\ref{theorem3}]
    For sequence $X_T$, let $T_0$ be a integer such that $1\le T_0\le T$. Then for two layer Tanh-activated neural networks $f_1 : \mathcal{X}^{T_0} \to [0,1]^n$ and $f_2: [0,1]^n \to \mathcal{X}^{T_0}$, let $I \subseteq [T]$ be an index set with $|I| = T_0$. We have the following: 
    \begin{itemize}
        \item For a fixed $p \ge 1$, there exists $f_1, f_2$ with total parameter count of order $O(\varepsilon^{-\frac{T_0d}{n}})$ such that $\|X(I) - f_2 \circ f_1 (X(I))\|_{L_p} \le \varepsilon$.
        \item If $\|X(I) - f_2 \circ f_1 (X(I))\|_{L_\infty} \le \varepsilon$ and the norm of the weights of $f_1$ and $f_2$ are bounded by $B$, then $f_1, f_2$ must have total parameter count of order $O(\varepsilon^{\min(1-\frac{T_0d}{n}, 0)})$.
    \end{itemize}
\end{theorem}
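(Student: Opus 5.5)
Both parts concern a bottleneck map $X(I)\mapsto f_1(X(I))\in[0,1]^n$ followed by a decoder $f_2$. Throughout, write $D:=T_0 d$ for the input dimension, so the encoder compresses $[0,1]^D$ into $[0,1]^n$. For the upper bound (first bullet) the construction is explicit; for the lower bound (second bullet) I would use a pigeonhole/Lipschitz argument of the same kind as in the proofs of Theorems~\ref{theorem1} and~\ref{theorem2}.

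\emph{Upper bound.} Fix a grid of mesh $\delta$ on $[0,1]^D$; it has $N=\delta^{-D}$ cells. Factor $N=m^n$ with $m=\delta^{-D/n}$, and split the $D$ input coordinates into $n$ groups of about $D/n$ coordinates each.
\begin{itemize}
\item \textbf{Encoder.} Each output coordinate of $f_1$ should approximate the index of the grid cell for its group of coordinates, rescaled to $[0,1]$. It is a staircase with $m$ levels in the quantized coordinates, so it can be built from $O(m)$ steep tanh units per output. Hence $f_1$ uses $O(n m)=O(\varepsilon^{-D/n})$ parameters once we take $\delta\sim\varepsilon$.
\item \textbf{Decoder.} $f_2$ inverts each staircase, recovering the cell centers from the $n$ codes. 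This is again a one-dimensional piecewise-constant map with $m$ pieces per code, so it costs $O(m)$ units per code.
\item \textbf{Error in $L_p$.} Tanh units only approximate steps, so the reconstruction fails on thin transition layers around the cell boundaries. Those layers have measure that can be made $\le \varepsilon^p$ by taking the steepness large. This is exactly why the statement is only in $L_p$ and not $L_\infty$.
\item \textbf{Packing $D/n$ coordinates into one code.} This is where I expect trouble. Encoding a multi-index into a single scalar through one hidden layer is awkward. My first attempt is a weighted sum $\sum_j m^{j}\,\mathrm{step}(x_j)$ of one-dimensional staircases, whose units can share a hidden layer. The decoder then extracts the digits by nested staircases, costing $O(m)$ units per digit level.
\end{itemize}

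\emph{Lower bound.} I read the claimed bound $O(\varepsilon^{\min(1-D/n,0)})$ as a lower bound $\Omega(\cdot)$, as in Section~\ref{sec: parameter_cost}. The case $D\le n$ is trivial, so assume $D>n$.
\begin{itemize}
\item \textbf{Collision.} Take a $4\varepsilon$-separated grid in $\mathcal{X}^{T_0}$ of size $\Theta(\varepsilon^{-D})$. Cover $[0,1]^n$ by cubes of side $\eta$, which needs $\eta^{-n}$ cubes. Choosing $\eta\sim\varepsilon^{D/n}$ makes the number of cubes smaller than the number of grid points, so by pigeonhole two grid points $X\neq Y$ satisfy $\|f_1(X)-f_1(Y)\|\le\sqrt n\,\eta$.
\item \textbf{Separation.} Since $\|f_2 f_1(X)-X\|_\infty\le\varepsilon$, the same for $Y$, and $\|X-Y\|\ge4\varepsilon$, we get $\|f_2(f_1X)-f_2(f_1Y)\|\ge2\varepsilon$.
\item \textbf{Lipschitz count.} With weights bounded by $B$, the tanh network $f_2$ of width $p$ is $pB^2\sqrt n$-Lipschitz, exactly as in the proof of Theorem~\ref{theorem2}. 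Combining with the previous two steps gives
\[
p\gtrsim \frac{\varepsilon}{B^2 n\,\eta}=\Omega\!\bigl(\varepsilon^{1-D/n}\bigr).
\]
\end{itemize}

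The main obstacle is the upper-bound encoder. Making a single hidden layer realize the multi-digit packing with only $O(\varepsilon^{-D/n})$ units, rather than paying $\varepsilon^{-D}$ for a full lookup table, is the step I expect to be hardest. If the packing fails, I would fall back to requiring only $n\ge d$ per token and encoding each group separately.
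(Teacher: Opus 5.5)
\textbf{Lower bound.} This is the paper's argument: a $4\varepsilon$-separated grid of $\Theta(\varepsilon^{-D})$ points, pigeonhole into $\eta^{-n}$ cubes with $\eta\sim\varepsilon^{D/n}$, then the Lipschitz bound for a bounded-weight two-layer tanh network. Reading the second bullet as an $\Omega$-bound also matches what the paper proves.

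\textbf{Upper bound.} Here there is a genuine gap: grouping \emph{whole coordinates} does not reach the stated exponent. The largest of your $n$ groups has $\lceil D/n\rceil$ coordinates. Its code therefore takes $\delta^{-\lceil D/n\rceil}$ distinct values, and your decoder for it uses that many thresholds. The construction yields $O(\varepsilon^{-\lceil D/n\rceil})$, which is strictly worse than $O(\varepsilon^{-D/n})$ whenever $n\nmid D$. For example, $D=3$, $n=2$ gives $\varepsilon^{-2}$ instead of $\varepsilon^{-3/2}$.

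Your factorization $N=m^n$ with $m=\delta^{-D/n}$ is the right target. It can only be realized by splitting a single coordinate's resolution across several codes, which is what the paper does:
\begin{itemize}
\item expand each coordinate to $L=\lceil\log_2(2D/\varepsilon)\rceil$ bits;
\item distribute the $DL$ bits (not the $D$ coordinates) among the $n$ codes, with at most $q=\lceil DL/n\rceil$ bits each;
\item bound the decoder by $2^{q}\le 2\,(2^{L})^{D/n}=O(\varepsilon^{-D/n})$.
\end{itemize}
Separately, when $D<n$, quantization alone already costs $\varepsilon^{-1}\gg\varepsilon^{-D/n}$. There you need the (approximate) identity embedding $V\mapsto(V,0)$ instead, as in the paper.

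\textbf{The encoder is not the obstacle.} Your ``first attempt'' already settles it, and it is also why the bit-level fix is available. Any digit of $x_j$, in any base and at any position, is piecewise constant in $x_j$ with jumps only at grid points. It is therefore a linear combination of the same $O(1/\delta)$ threshold units on $x_j$. So every code, including one that mixes high digits of one coordinate with low digits of another, is a linear readout of a single shared hidden layer. The encoder costs only $O(\varepsilon^{-1})$ parameters for fixed $D,n$, with no lookup-table blowup.

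The entire $\varepsilon^{-D/n}$ cost sits in the decoder. A single hidden layer with one threshold per code level suffices there, with no nesting, since each reconstructed coordinate is a sum of piecewise-constant functions of individual codes. Your fallback of requiring $n\ge d$ per token would not prove the statement.

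\textbf{Softening to tanh.} Your thin-slab argument is a legitimate alternative to the paper's unit-by-unit Heaviside-to-tanh replacement lemma. It needs two conditions: the decoder thresholds must sit strictly between consecutive code values, and the encoder must be steep enough that its error off the slabs is below half the code spacing.
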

\paragraph{Explanation:} In practical Transformers, information associated with retrieved positions is not represented explicitly, but is instead embedded in the token states $x_l(t)\in\mathbb{R}^{E_l}$. Recovering this information, namely $X(I(t,l))$, generally requires nontrivial decoding by the feed-forward layers. Theorem~\ref{theorem3} have demonstrated that to achieve $\varepsilon$-accurate encoding and decoding, a parameter cost exponential to $|I(t,l)|/E_l$ is required. It is also shown that $O(1/\varepsilon^{|I(t,l)|d/E_l})$ parameters are enough for such encoding and decoding. Therefore $O(1/\varepsilon^{|I(t,l)|d/E_l})$ is a reasonable parameter cost associated to the processing of information set $I(t,l)$. 

We then consider proving Theorem~\ref{theorem3}. 

\paragraph{Proof Sketch} For the first part of Theorem~\ref{theorem3}, we first construct two Heaviside function activated FFNs $\hat f_1, \hat f_2$ to store the information of $X(I)$ using binary representation up to $\varepsilon/1$-accuracy. Then we replace each Heaviside unit with Tanh unit to derive the $\varepsilon$ approximation in $L_p$ norm. For the second part, we use Pigeonhole Principle to complete the argument. 
\begin{proof}
\textbf{First Part of Theorem~\ref{theorem3}} We first consider the case $T_0d > n$. When we say Heaviside function in this proof, we refer to $H(x) = \bf{1}_{\{x\ge 0\}}$. \\
\textbf{Construction of Encoding function $\hat f_1$.} 
Let \(m=T_0d\), and identify \(X(I)\in \mathcal{X}^{T_0}\) with a vector
\[
    V=(V_1,\dots,V_m)\in [0,1]^m .
\]
Choose
\[
    L=\left\lceil \log_2\frac{2m}{\varepsilon}\right\rceil .
\]
For each coordinate \(V_j\), let
\[
    V_j^{(L)}=\sum_{\ell=1}^{L} 2^{-\ell} b_{j,\ell}(V_j),
    \qquad b_{j,\ell}(V_j)\in\{0,1\},
\]
be its \(L\)-bit binary truncation. Then
\[
    \|V-V^{(L)}\|_{\infty}\le 2^{-L}\le \frac{\varepsilon}{2m},
    \qquad
    \|V-V^{(L)}\|_{p}\le m^{1/p}\frac{\varepsilon}{2m}\le \frac{\varepsilon}{2}.
\]

For each fixed coordinate \(V_j\), the map
\[
    V_j \longmapsto (b_{j,1},\ldots,b_{j,L})
\]
can be implemented by a two-layer Heaviside network with \(O(2^L)\) parameters. Indeed, the \(L\)-bit truncation partitions \([0,1]\) into \(2^L\) dyadic intervals, and the binary vector \((b_{j,1},\ldots,b_{j,L})\) is constant on each such interval. A two-layer Heaviside network can represent this piecewise-constant map by using threshold units to identify the dyadic intervals and then assigning to each interval its corresponding binary code. Therefore, encoding all \(m=T_0d\) coordinates requires
\[
    O(m2^L)
\]
parameters.

Since
\[
    L=\left\lceil \log_2\frac{2m}{\varepsilon}\right\rceil ,
\]
we have \(2^L=O(m/\varepsilon)\). Hence, for fixed \(T_0\) and \(d\), the encoder can be implemented by a two-layer Heaviside network \(\widehat f_1:[0,1]^m\to [0,1]^n\) with
\[
    O(m2^L)=O\!\left(\frac{1}{\varepsilon}\right)
\]
parameters. The resulting \(mL\) binary digits are then distributed among the \(n\) latent coordinates, with each coordinate storing at most
\[
    \left\lceil \frac{mL}{n}\right\rceil
\]
digits through its own binary expansion.

\textbf{Construction of Decoding Function $\hat f_2$.} Next we construct the decoder \(\widehat f_2\). \\
Recall that the encoder \(\widehat f_1\) allocates the \(mL\) binary digits among the \(n\) latent coordinates, with each latent coordinate storing at most
\[
    q:=\left\lceil \frac{mL}{n}\right\rceil
\]
digits through a binary expansion. The decoder \(\widehat f_2\) is constructed according to this fixed allocation rule. For each latent coordinate, recovering its assigned binary digits amounts to decoding a \(q\)-bit binary expansion. This is again a piecewise-constant map on \([0,1]\) with at most \(2^q\) dyadic intervals, and hence can be implemented by a two-layer Heaviside network with \(O(2^q)\) parameters. Applying this to all \(n\) latent coordinates and then rearranging the recovered bits according to the allocation rule of \(\widehat f_1\), the decoder recovers all digits
\[
    \{b_{j,\ell}:1\le j\le m,\ 1\le \ell\le L\}.
\]
It then outputs
\[
    V_j^{(L)}=\sum_{\ell=1}^{L}2^{-\ell}b_{j,\ell},
    \qquad j=1,\dots,m .
\]
Therefore,
\[
    \widehat f_2\circ \widehat f_1(V)=V^{(L)} .
\]
By the choice of
\[
    L=\left\lceil \log_2\frac{2m}{\varepsilon}\right\rceil ,
\]
we have
\[
    \|V-V^{(L)}\|_{\infty}\le 2^{-L}\le \frac{\varepsilon}{2}.
\]

Moreover,
\[
    2^q
    =
    2^{\left\lceil \frac{mL}{n}\right\rceil}
    \le
    2\left(2^L\right)^{m/n}.
\]
Since \(2^L=O(1/\varepsilon)\), the decoder \(\widehat f_2\) has parameter count
\[
    O\!\left(2^{\left\lceil \frac{mL}{n}\right\rceil}\right)
    =
    O\!\left(\varepsilon^{-m/n}\right).
\]

We then consider replacing these Heaviside activation functions with Tanh activation functions. We first prove the following lemma regarding such replacements.
\begin{lemma}[Replacing Heaviside Unit with Tanh Unit] 
Let \(p>1\), let \(a<0<b\), and let \(\mu\) be a finite positive measure on \([a,b]\).
Let \(H(x)=\mathbf{1}_{\{x\ge 0\}}\). Then for every \(\delta>0\), there exist
\(A,B,C,D\in\mathbb{R}\) such that
\[
    \left\|H-\bigl(A\tanh(Bx+C)+D\bigr)\right\|_{L_p(\mu)}<\delta .
\]
\end{lemma}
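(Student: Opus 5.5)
We take the steepened family $g_k(x)=\tfrac12\tanh(kx)+\tfrac12$, which has the required form with $A=\tfrac12$, $B=k$, $C=0$, $D=\tfrac12$. The aim is to show $\|H-g_k\|_{L_p(\mu)}\to0$ as $k\to\infty$; then it suffices to choose $k$ large enough that the norm is below $\delta$. This is the natural approach, because $\tanh$ is a smoothed sign function, and rescaling its argument sharpens it toward a step located at the origin.

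\emph{Pointwise convergence.} For every $x\neq0$ we have $\tanh(kx)\to\operatorname{sgn}(x)$, and hence $g_k(x)\to H(x)$. At $x=0$ we get $g_k(0)=\tfrac12$ while $H(0)=1$, so the error there is exactly $\tfrac12$ for every $k$.

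\emph{Domination.} Both $H$ and $g_k$ take values in $[0,1]$. Therefore $|H-g_k|^p\le 1$ for all $k$, and the constant $1$ is $\mu$-integrable because $\mu$ is finite on $[a,b]$.

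\emph{Dominated convergence.} We split the integral as
\[
\int_{[a,b]}|H-g_k|^p\,d\mu=\int_{[a,b]\setminus\{0\}}|H-g_k|^p\,d\mu+2^{-p}\mu(\{0\}).
\]
By dominated convergence, the first term tends to $0$.

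The main obstacle is the atom at the origin. If $\mu(\{0\})>0$, then the symmetric choice $C=0$ leaves a residual $2^{-p}\mu(\{0\})$ that never vanishes. We avoid this by shifting the transition slightly to the left, using $g_k(x)=\tfrac12\tanh(kx+\sqrt{k})+\tfrac12$, i.e.\ $C=\sqrt{k}$.

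\emph{Convergence with the shift.}
\begin{itemize}
\item At $x=0$: $g_k(0)=\tfrac12\tanh(\sqrt{k})+\tfrac12\to1=H(0)$.
\item For $x>0$: $kx+\sqrt{k}\to+\infty$, so $g_k(x)\to1$.
\item For fixed $x<0$: $kx+\sqrt{k}=k\bigl(x+k^{-1/2}\bigr)\to-\infty$, so $g_k(x)\to0$.
\end{itemize}
Thus $g_k\to H$ pointwise on all of $[a,b]$. The bound $|H-g_k|^p\le1$ still holds, so dominated convergence gives $\|H-g_k\|_{L_p(\mu)}\to0$ with no exceptional set.

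To make this quantitative for a given $\delta$, one can alternatively use continuity of measure. Choose $\eta>0$ so that $\mu\bigl([-\eta,0)\bigr)<\delta^p/2$; this is possible because $\mu([-\eta,0))\to\mu(\emptyset)=0$ as $\eta\downarrow0$. Then place the transition inside $(-\eta,0)$ and make it steep enough that the $\tanh$ error is below $\delta^p/(2\mu([a,b]))$ pointwise outside $[-\eta,0)$. This yields explicit $A,B,C,D$ and completes the argument.
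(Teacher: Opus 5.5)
Your proof is correct and uses the same construction as the paper: the family $\tfrac12\bigl(1+\tanh(Bx+C)\bigr)$ with $B$ large and a positive shift $C$, so the transition sits just left of the origin and the atom $\mu(\{0\})$ is absorbed. The only difference is bookkeeping. You get convergence in one stroke by dominated convergence with the coupled choice $B=k$, $C=\sqrt{k}$, whereas the paper fixes $\eta$, then $C$, then a tail threshold $R$, then $B$, and splits the error into three pieces each below $\delta^p/3$ (essentially your closing quantitative paragraph).
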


\begin{proof}
It suffices to consider functions of the form
\[
    \phi_{B,C}(x):=\frac{1+\tanh(Bx+C)}{2}.
\]
Let \(M:=\mu([a,b])<\infty\). If \(M=0\), the claim is trivial, so assume \(M>0\).

Since \(\mu\) is finite,
\[
    \mu\bigl((-\eta,\eta)\setminus\{0\}\bigr)\to 0
    \qquad \text{as } \eta\downarrow 0 .
\]
Choose \(\eta>0\) such that
\[
    \mu\bigl((-\eta,\eta)\setminus\{0\}\bigr)<\frac{\delta^p}{3}.
\]
Next choose \(C>0\) such that
\[
    \left|1-\frac{1+\tanh C}{2}\right|^p \mu(\{0\})<\frac{\delta^p}{3}.
\]
Finally, choose \(R>0\) sufficiently large so that
\[
    \left(\frac{1+\tanh(-R)}{2}\right)^p M
    +
    \left(1-\frac{1+\tanh R}{2}\right)^p M
    <\frac{\delta^p}{3},
\]
and then choose \(B>0\) large enough that
\[
    -B\eta+C\le -R,
    \qquad
    B\eta+C\ge R .
\]
For \(x\le -\eta\), we have \(Bx+C\le -R\), and hence \(\phi_{B,C}(x)\) is uniformly small. For
\(x\ge \eta\), we have \(Bx+C\ge R\), and hence \(\phi_{B,C}(x)\) is uniformly close to \(1\).
On \((-\eta,\eta)\setminus\{0\}\), the pointwise error is bounded by \(1\), while at \(x=0\) it is controlled by the choice of \(C\). Therefore,
\[
\begin{aligned}
    \|H-\phi_{B,C}\|_{L_p(\mu)}^p
    &\le
    \int_{[a,-\eta]} |\phi_{B,C}(x)|^p\,d\mu(x)
    +\int_{[\eta,b]} |1-\phi_{B,C}(x)|^p\,d\mu(x)  \\
    &\quad
    +\mu\bigl((-\eta,\eta)\setminus\{0\}\bigr)
    +|1-\phi_{B,C}(0)|^p\mu(\{0\})  \\
    &<\delta^p .
\end{aligned}
\]
Thus \(\|H-\phi_{B,C}\|_{L_p(\mu)}<\delta\). Taking
\[
    A=\frac12,\qquad D=\frac12
\]
gives the desired form.
\end{proof}

We now pass from Heaviside networks to Tanh networks. Let
\[
    \widehat F:=\widehat f_2\circ \widehat f_1
\]
be the Heaviside network constructed above, and let \(N_0\) be the total number of Heaviside units in \(\widehat F\). We replace these Heaviside units one at a time.

At any intermediate stage, if not all Heaviside units have been replaced, choose a Heaviside unit whose output does not feed into any other remaining Heaviside unit. Such a unit always exists by the acyclicity of the feed-forward computation graph. Writing the output of this unit as
\[
    H(z),
\]
where \(z\) is an affine function of the previous layer outputs, the part of the network after this unit is a Tanh/affine network, and is therefore continuous, indeed Lipschitz, with respect to this scalar input on the relevant compact range. Denote this downstream map by \(G\), and let \(L_G\) be a Lipschitz constant.

Let \(\nu\) be the pushforward of the input distribution under \(z\). By the preceding lemma, for any \(\delta>0\), there exist \(A,B,C,D\in\mathbb{R}\) such that
\[
    \|H-(A\tanh(B\cdot+C)+D)\|_{L_p(\nu)}<\delta .
\]
Choose \(\delta\) sufficiently small so that
\[
    L_G\delta \le \frac{\varepsilon}{2N_0}.
\]
Replacing this Heaviside unit by
\[
    A\tanh(Bz+C)+D
\]
therefore changes the final network output by at most
\[
    \frac{\varepsilon}{2N_0}
\]
in \(L_p\).

Repeating this procedure for all \(N_0\) Heaviside units yields Tanh-activated networks \(f_1,f_2\) satisfying
\[
    \|f_2\circ f_1-\widehat f_2\circ \widehat f_1\|_{L_p}
    \le
    \sum_{r=1}^{N_0}\frac{\varepsilon}{2N_0}
    =
    \frac{\varepsilon}{2}.
\]
Since the Heaviside construction gives
\[
    \|V-\widehat f_2\circ \widehat f_1(V)\|_{L_p}
    \le \frac{\varepsilon}{2},
\]
the triangle inequality implies
\[
    \|V-f_2\circ f_1(V)\|_{L_p}
    \le
    \|V-\widehat f_2\circ \widehat f_1(V)\|_{L_p}
    +
    \|\widehat f_2\circ \widehat f_1(V)-f_2\circ f_1(V)\|_{L_p}
    \le \varepsilon .
\]
Thus the desired \(L_p\)-approximation is obtained by Tanh-activated networks. 

When $n \ge T_0d$, we just let $f_1(V) = (V, 0) \in \mathbb{R}^n$, and $f_2((U,0)) = U$, then we can achieve exact representation with $O(1)$ parameters. 

\textbf{Second Part of Theorem~\ref{theorem3}}

Let \(m=T_0d\). The case \(m\le n\) only gives the trivial lower bound
\(\Omega(1)\), so assume \(m>n\). By the standard packing argument for
\(f_1:[0,1]^m\to[0,1]^n\), there exist \(V_1,V_2\in[0,1]^m\) such that
\[
    \|V_1-V_2\|_\infty\ge 4\varepsilon,
    \qquad
    \|f_1(V_1)-f_1(V_2)\|_2\le C\varepsilon^{m/n}.
\]
Since \(\|V-f_2\circ f_1(V)\|_{L_\infty}\le \varepsilon\), we have
\[
\begin{aligned}
    \|f_2(f_1(V_1))-f_2(f_1(V_2))\|_2
    &\ge
    \|V_1-V_2\|_\infty
    -\|V_1-f_2(f_1(V_1))\|_\infty  \\
    &\quad
    -\|V_2-f_2(f_1(V_2))\|_\infty
    \ge 2\varepsilon .
\end{aligned}
\]
Applying the Lemma 4 from \cite{yu2026.EffectAttentionHead} to \(f_2\), with
\[
    A=C\varepsilon^{m/n},
    \qquad
    B=2\varepsilon,
\]
shows that \(f_2\) must have at least
\[
    \Omega\!\left(\frac{B}{A\sqrt n}\right)
    =
    \Omega\!\left(\varepsilon^{1-m/n}\right)
    =
    \Omega\!\left(\frac{1}{\varepsilon^{m/n-1}}\right)
\]
parameters. Therefore the total parameter count of \(f_1,f_2\) is at least
\[
    \Omega\!\left(\varepsilon^{1-\frac{T_0d}{n}}\right),
\]
which is the desired lower bound in the nontrivial regime \(T_0d>n\).
\end{proof}
\begin{remark}
    We can replace Tanh with any $1$-Lipschitz sigmoidal activation function to obtain the same results following similar proof strategy. 
\end{remark}

\section{Derivation of Predictions}
\subsection{Derivation of Intrinsic Dimension Phenomena in Two-Layer Transformers}\label{app: lemma1}
\paragraph{Derivation of Proposition~\ref{lemma:NOCofIntrinsic}} To prove the upper bound, we construct the following $D$ Trees of Comparison $TR_1, \dots, TR_D$. 
\begin{enumerate}
    \item For $TR_i$, each leaf is associated with an ordered index set ${t_1, t_2}$ for $1 \le t_1, t_2 \le T$, thus $TR_i$ has $T^2$ leaves. And for each internal node of $TR_i$, its corresponding $f_{in}$ is defined as $f_{in} (X[\{s_1, s_2\}]) = x(s_1)^T A_i x(s_2)$. Thus $I_{fi}(X_T, TR_i) = \argmax_{\{s_1, s_2\}} x(s_1)^T A_i x(s_2)$. Then we have $I_F(X_T) \subseteq \bigcup_{i=1}^DI_{fi}(X_T, TR_i)$.  
\end{enumerate}

With this upper bound we also have $\beta' \le \beta_1 \le 2$. Then following a similar argument in \ref{app:Triangle_explain}, we can show that each index set $\{t_1, t_2\}$ for $t_1 \ne t_2$ must be on the leaf of a Tree of Comparison. This gives a lower bound $\Omega(T^2)$ for the Number of Comparison, and hence $\beta_1 = \beta' = 2$. 

As the target $F$ have $\beta_1 = \beta' = 2$. Using only max position aggregation, we have that $I(t,1) \subseteq \bigcup_{i=1}^{h_1} I(s_i(t),0) \cup I(t,0)$. As $|I(s,0)| = |I(t,0)| = 1$, we have that $|I(t,1)| \le (h_1 + 1)$. Similarily we have $|I(T+1,1)| \le h_1$. Then $|I(T+1, 2)| \le \sum_{i=1}^{h_2} |I(s_i(T+1), 1)| + |I(T+1,1)| \le h_1h_2 + h_2 + h_1 < (h_1+1)(h_2+1)$. Substituting all terms into the formula for Number of Comparison, in the regime of $h_1, h_2 \ll T$, we have
\begin{align}
    &\sum_{t=1}^{T} \sum_{l=1}^{L-1} \Bigl(|I(t,l)|^{\beta'} - 1 + \sum_{i=1}^{h_l}(T-1)\Bigr) + \sum_{l=1}^{L} \Bigl(|I(T+1,l)|^{\beta'} - 1 + \sum_{i=1}^{h_l}(T-1)\Bigr)\\
    \le& \sum_{t=1}^{T} \Bigl(|I(t,1)|^2 - 1 + \sum_{i=1}^{h_1}(T-1)\Bigr) + \sum_{l=1}^{2} \Bigl(|I(T+1,l)|^2 - 1 + \sum_{i=1}^{h_l}(T-1)\Bigr)\\
    \le& T(h_1+1)^2 -T + h_1T(T-1) + h_1^2 + (h_1+1)^2(h_2+1)^2 - 2 +2(T-1) \\
    =& h_1T^2 + O(T)
\end{align}
Therefore we need $h_1 \ge D$ to have large enough Number of Comparison. 

To show that the $(D,D)$ heads model exists, we construct an explicit \infoflow model with $h_1 = h_2 = D$ heads that learns the target.
We set 
\begin{equation}
    I(t,1) = \{s_i : s_i = \arg\max_{1 \le s \le T} x(s)^\top A_i x(t)\} \cup \{t\}
\end{equation}
and 
\begin{equation}
   I(T+1,2) = \{(t,s_i(t)) : t \in \bigcup_{i=1}^D \{ \arg\max_{1 \le t \le T} x(s_i(t))^\top A_i x(t) \}\}.
\end{equation}
We have $I_F(X_T) \subseteq I(T+1,2)$,
confirming that the two-layer Transformer with $(D,D)$ heads approximates $F$ effectively.

\subsection{Derivation of Approximation Hardness of Higher-Order Retrieval Tasks}\label{app:lemma2}

\paragraph{Derivation of Proposition~\ref{lemma:triangle}} Suppose the target $F$ have Number of Comparison lower bounded by $C_1T^{\beta'}$ for a positive constant $C_1$ (This is valid by the definition of Order of Comparison $\beta'$). Let $M = \max_{t,l} |I(t,l)|$, we have an estimation for the Number of Comparison of the model (in the regime of $H,L,E\ll T$):
\begin{align}
    &\sum_{t=1}^{T} \sum_{l=1}^{L-1} \Bigl(|I(t,l)|^{\beta_1} - 1 + \sum_{i=1}^{h_l}(T-1)\Bigr) + \sum_{l=1}^{L} \Bigl(|I(T+1,l)|^{\beta_1} - 1 + \sum_{i=1}^{h_l}(T-1)\Bigr)\\
    \le& \sum_{t=1}^{T} \sum_{l=1}^{L-1} \Bigl(M^{\beta_1} - 1 + H(T-1)\Bigr) + \sum_{l=1}^{L} \Bigl(M^{\beta_1} - 1 + H(T-1)\Bigr) \\
    \le& T(L-1)M^{\beta_1} + T(L-1)H(T-1) + LM^{\beta_1} + LH(T-1) \\
    \le& TLM^{\beta_1} + HLT^2  +LM^{\beta_1} + LHT \\
    =& TLM^{\beta_1} + HLT^2 + O(M^{\beta_1}+T)
\end{align}
If $TLM^{\beta_1} + HLT^{\beta_1} + O(M^{\beta_1}+T) \ge C_1T^{\beta'}$ holds for $T \gg HL$, we need $TLM^{\beta_1} \ge C_2T^{\beta'}$ for positive $C_2$, thus we have $M \ge CT^{\frac{\beta'-1}{\beta_1}}/L$ for positive constant $C$. This leads to a parameter cost of $O(1/\varepsilon^{M/E}) = O(1/\varepsilon^{CT^{\frac{\beta'-1}{\beta_1}}/LE})$.

\iffalse
\begin{Proposition}[Informal]\label{lemma:triangle}
For the target $F$ defined in \eqref{eq:triangle-target},
let $M = \max_{t,l} |I(t,l)|$ for an $L$-layer $H$-heads-per-layer Transformer with embedding dimension $E$.
For the transformer's number of comparison to match $T^3 - 1$,
it is necessary that $M \ge \frac{T^{2/3}}{L}$,
leading to a parameter cost of $O(\varepsilon^{-\frac{T^{2/3}}{LE}})$ for $\varepsilon$-approximation of $F$.
\end{Proposition}
\fi

\section{Experimental Details and Results}
\label{app:exp1}

\subsection{Max-Position Retrieval in Trained Two-Layer Transformers}\label{app:exp_max_retrieval}

\paragraph{Setup.}
We train two-layer Transformers on the generalized $D$-retrieval target with intrinsic dimension $D = 6$,
sequence length $T = 64$,
and feed-forward hidden width $w_1 = 384$.
We consider six configurations of head counts $(h_1, h_2)$ and corresponding embedding dimensions $E_1 = h_1 n_1$,
with per-head embedding dimension $n_1$ fixed such that $E_1$ is proportional to $h_1$.
For each configuration,
we train the model to convergence and record the best random seed.
To assess whether the trained model retains the information of the argmax token at each head,
we fit a linear map $g_i : \mathbb{R}^{E_1} \to \mathbb{R}^d$ by regression for each head $i = 1, \dots, h_1$,
and measure the normalized recovery error
\begin{equation}
\mathcal{E}_i = \mathbb{E}_{X_T} \mathbb{E}_t
\frac{|g_i(x_1'(t)) - x(s_i(t))|^2}{|x(s_i(t))|^2},
\label{eq:recovery-error}
\end{equation}
where $s_i(t) = \arg\max_s A_i(x(t), x(s))$ is the argmax position of head $i$ at position $t$.
We report $\mathcal{E}_i$ for both the initialized and trained model,
as well as the ratio of trained to initialized error,
which measures the degree to which training improves argmax position recovery.

\paragraph{Results.}
The recovery errors for each configuration are reported in Tables~\ref{tab:recovery1} to \ref{tab:recovery2nd3}.
In all configurations,
training reduces the recovery error substantially compared to initialization,
with trained-to-initialized ratios ranging from $0.08$ to $0.84$.
The reduction is most pronounced in configurations where the number of first-layer heads $h_1$ matches or exceeds the intrinsic dimension $D = 6$,
consistent with the InfoFlow prediction that $h_1 \ge D$ heads are needed for effective approximation of the target.

\begin{table}[h]
\centering
\caption{Recovery errors for $h_1 = 6$, $h_2 = 6$,
embedding dimension $E_1 = 36$, seed $12$.}
\label{tab:recovery1}
\begin{tabular}{|c|c|c|c|c|c|c|}
\hline
& Head 1 & Head 2 & Head 3 & Head 4 & Head 5 & Head 6 \\
\hline
Initialized & 0.3497 & 0.3546 & 0.3233 & 0.3738 & 0.4851 & 0.3496 \\
\hline
Trained & 0.0302 & 0.0323 & 0.0310 & 0.0305 & 0.0315 & 0.0317 \\
\hline
Trained/Initialized & 0.0863 & 0.0910 & 0.0958 & 0.0817 & 0.0650 & 0.0906 \\
\hline
\end{tabular}
\end{table}

\begin{table}[h]
\centering
\caption{Recovery errors for $h_1 = 5$, $h_2 = 5$,
embedding dimension $E_1 = 40$, seed $9$.}
\label{tab:recovery2}
\begin{tabular}{|c|c|c|c|c|c|}
\hline
& Head 1 & Head 2 & Head 3 & Head 4 & Head 5 \\
\hline
Initialized & 0.3496 & 0.3711 & 0.3676 & 0.3113 & 0.3836 \\
\hline
Trained & 0.1816 & 0.1735 & 0.2433 & 0.2373 & 0.2350 \\
\hline
Trained/Initialized & 0.5195 & 0.4676 & 0.6618 & 0.7624 & 0.6126 \\
\hline
\end{tabular}
\end{table}

\begin{table}[h]
\centering
\caption{Recovery errors for $h_1 = 5$, $h_2 = 6$,
embedding dimension $E_1 = 40$, seed $8$.}
\label{tab:recovery3}
\begin{tabular}{|c|c|c|c|c|c|}
\hline
& Head 1 & Head 2 & Head 3 & Head 4 & Head 5 \\
\hline
Initialized & 0.3167 & 0.3248 & 0.3413 & 0.3234 & 0.3261 \\
\hline
Trained & 0.2099 & 0.2531 & 0.1160 & 0.1382 & 0.1405 \\
\hline
Trained/Initialized & 0.6629 & 0.7793 & 0.3400 & 0.4273 & 0.4308 \\
\hline
\end{tabular}
\end{table}

\paragraph{Discussion.}
The results across all configurations consistently show that training improves the recoverability of argmax position information from the post-attention token state.
The improvement is most pronounced when $h_1 \ge D$,
where the trained-to-initialized ratio drops to as low as $0.08$,
indicating near-perfect recovery after training.
When $h_1 < D$,
the improvement is more modest,
which is consistent with the InfoFlow prediction that insufficient head count prevents the model from independently tracking each of the $D$ relevant argmax positions.

\subsection{Second-Largest Position Retrieval in Trained Two-Layer Transformers}\label{app:exp_second_cannot}

\paragraph{Setup.}
Using the same experimental configurations as Section~\ref{app:exp1},
we assess whether trained two-layer Transformers retain the information 
of the token attaining the second-largest attention score at each head.

\paragraph{Results.}
The recovery errors for each configuration are reported in 
Tables~\ref{tab:recovery2nd1} to \ref{tab:recovery2nd7}.

\begin{table}[h]
\centering
\caption{Second-largest position recovery errors for $h_1 = 6$,
$h_2 = 6$,
embedding dimension $E_1 = 36$,
seed $12$.}
\label{tab:recovery2nd1}
\begin{tabular}{|c|c|c|c|c|c|c|}
\hline
& Head 1 & Head 2 & Head 3 & Head 4 & Head 5 & Head 6 \\
\hline
Initialized & 0.4085 & 0.4126 & 0.3843 & 0.4303 & 0.5318 & 0.4086 \\
\hline
Trained & 0.4061 & 0.4154 & 0.4176 & 0.4191 & 0.4132 & 0.4148 \\
\hline
Trained/Initialized & 0.9940 & 1.0067 & 1.0864 & 0.9739 & 0.7770 & 1.0151 \\
\hline
\end{tabular}
\end{table}

\begin{table}[h]
\centering
\caption{Second-largest position recovery errors for $h_1 = 5$,
$h_2 = 5$,
embedding dimension $E_1 = 40$,
seed $9$.}
\label{tab:recovery2nd2}
\begin{tabular}{|c|c|c|c|c|c|}
\hline
& Head 1 & Head 2 & Head 3 & Head 4 & Head 5 \\
\hline
Initialized & 0.4093 & 0.4272 & 0.4249 & 0.3746 & 0.4390 \\
\hline
Trained & 0.4095 & 0.4099 & 0.4097 & 0.4082 & 0.4083 \\
\hline
Trained/Initialized & 1.0005 & 0.9594 & 0.9641 & 1.0895 & 0.9299 \\
\hline
\end{tabular}
\end{table}

\begin{table}[h]
\centering
\caption{Second-largest position recovery errors for $h_1 = 5$,
$h_2 = 6$,
embedding dimension $E_1 = 40$,
seed $8$.}
\label{tab:recovery2nd3}
\begin{tabular}{|c|c|c|c|c|c|}
\hline
& Head 1 & Head 2 & Head 3 & Head 4 & Head 5 \\
\hline
Initialized & 0.3779 & 0.3852 & 0.4013 & 0.3849 & 0.3873 \\
\hline
Trained & 0.4018 & 0.4038 & 0.4085 & 0.4050 & 0.4068 \\
\hline
Trained/Initialized & 1.0631 & 1.0483 & 1.0180 & 1.0523 & 1.0505 \\
\hline
\end{tabular}
\end{table}

\begin{table}[h]
\centering
\caption{Second-largest position recovery errors for $h_1 = 5$,
$h_2 = 7$,
embedding dimension $E_1 = 40$,
seed $15$.}
\label{tab:recovery2nd4}
\begin{tabular}{|c|c|c|c|c|c|}
\hline
& Head 1 & Head 2 & Head 3 & Head 4 & Head 5 \\
\hline
Initialized & 0.3854 & 0.3954 & 0.3863 & 0.4124 & 0.3869 \\
\hline
Trained & 0.4044 & 0.4077 & 0.4017 & 0.4109 & 0.3996 \\
\hline
Trained/Initialized & 1.0493 & 1.0313 & 1.0399 & 0.9963 & 1.0329 \\
\hline
\end{tabular}
\end{table}

\begin{table}[h]
\centering
\caption{Second-largest position recovery errors for $h_1 = 6$,
$h_2 = 5$,
embedding dimension $E_1 = 36$,
seed $1$.}
\label{tab:recovery2nd5}
\begin{tabular}{|c|c|c|c|c|c|c|}
\hline
& Head 1 & Head 2 & Head 3 & Head 4 & Head 5 & Head 6 \\
\hline
Initialized & 0.4025 & 0.3800 & 0.4280 & 0.4466 & 0.4230 & 0.3976 \\
\hline
Trained & 0.4045 & 0.4027 & 0.4032 & 0.4042 & 0.4028 & 0.4085 \\
\hline
Trained/Initialized & 1.0049 & 1.0598 & 0.9420 & 0.9049 & 0.9524 & 1.0272 \\
\hline
\end{tabular}
\end{table}

\begin{table}[h]
\centering
\caption{Second-largest position recovery errors for $h_1 = 7$,
$h_2 = 5$,
embedding dimension $E_1 = 42$,
seed $13$.}
\label{tab:recovery2nd6}
\begin{tabular}{|c|c|c|c|c|c|c|c|}
\hline
& Head 1 & Head 2 & Head 3 & Head 4 & Head 5 & Head 6 & Head 7 \\
\hline
Initialized & 0.3813 & 0.3905 & 0.4137 & 0.3863 & 0.3812 & 0.4053 & 0.3964 \\
\hline
Trained & 0.4001 & 0.4026 & 0.4046 & 0.4042 & 0.3996 & 0.4067 & 0.4029 \\
\hline
Trained/Initialized & 1.0493 & 1.0310 & 0.9781 & 1.0461 & 1.0483 & 1.0034 & 1.0163 \\
\hline
\end{tabular}
\end{table}

\begin{table}[h]
\centering
\caption{Second-largest position recovery errors for $h_1 = 7$,
$h_2 = 7$,
embedding dimension $E_1 = 42$,
seed $10$.}
\label{tab:recovery2nd7}
\begin{tabular}{|c|c|c|c|c|c|c|c|}
\hline
& Head 1 & Head 2 & Head 3 & Head 4 & Head 5 & Head 6 & Head 7 \\
\hline
Initialized & 0.4740 & 0.3949 & 0.3801 & 0.3922 & 0.3955 & 0.4124 & 0.4001 \\
\hline
Trained & 0.4062 & 0.4148 & 0.4147 & 0.4054 & 0.4098 & 0.3903 & 0.3868 \\
\hline
Trained/Initialized & 0.8569 & 1.0506 & 1.0909 & 1.0337 & 1.0364 & 0.9464 & 0.9668 \\
\hline
\end{tabular}
\end{table}

\paragraph{Discussion.}
In sharp contrast to the argmax recovery experiment,
training does not improve the ability to recover the second-largest position information across any configuration.
The trained-to-initialized ratios are uniformly close to $1.0$ across all heads and configurations,
indicating that the trained model retains no more information about the second-largest attention position than an untrained model.
This provides empirical confirmation that softmax attention concentrates exclusively on the argmax position,
and that the max-position retrieval mechanism identified in Theorem~\ref{theorem2} and Section~\ref{sec:mechanisms} is not merely a theoretical phenomenon but is precisely what emerges in trained networks.

\subsection{Specific Position Aggregation Experiment}
\label{app:exp2}

\paragraph{Setup.}
We consider sequences $X_T = (x(1), \dots, x(T))$ with $T = 10$ and token dimension $d = 2$,
where each token $x(t) \overset{\mathrm{i.i.d.}}{\sim} \mathcal{N}(0, I_d)$.
The target function is
\begin{equation}
F(X_T) = x(1) + x(2) + x(3) \in \mathbb{R}^d,
\end{equation}
namely the sum of the first three tokens,
which is independent of the remaining $T - 3$ tokens and of the specific values at positions $1$, $2$, $3$.

\paragraph{Model.}
We train a one-layer one-head Transformer with learned positional encoding.
Each token $x(t)$ is first mapped to a $d_{\mathrm{model}}$-dimensional embedding via a linear layer,
and a learned positional embedding $p(t) \in \mathbb{R}^{d_{\mathrm{model}}}$ is added.
A learnable CLS token,
initialized to zero in embedding space and assigned positional embedding $p(T+1)$,
is appended to the sequence.
The model applies one round of single-head attention with a residual connection and layer normalization,
then reads out the CLS token through a two-layer MLP to produce the prediction $\hat{F}(X_T) \in \mathbb{R}^d$.

\paragraph{Training.}
We train on $N = 50{,}000$ i.i.d. samples with MSE loss using AdamW with learning rate $10^{-3}$ and weight decay $10^{-2}$,
with a cosine annealing schedule over $1000$ epochs followed by $1000$ epochs at learning rate $10^{-4}$.
Validation is performed on a held-out set of $10{,}000$ samples.

\paragraph{Results.}
The model achieves near-zero validation MSE.
Inspecting the learned attention weights,
the CLS token's mean attention over the validation set concentrates with weight $1/3$ on each of positions $1$, $2$, $3$ and weight $0$ on all remaining positions.
This pattern is stable across samples regardless of token values,
demonstrating that the model implements position-specific selection purely through the positional encoding rather than through content-dependent attention.
The attention weights are shown in Figure~\ref{fig:pos-agg}.

\subsection{Intrinsic Dimension Experiment}
\label{app:exp:intrinsic}

\paragraph{Target.}
We train two-layer Transformers on the permutation-invariant target
\begin{equation}
F(X_T) = \sum_{i=1}^D \max_{1 \le s,t \le T} x(s)^\top A_i x(t),
\end{equation}
where $A_i \in \mathbb{R}^{d \times d}$ are distinct randomly generated 
matrices fixed throughout training.
We consider intrinsic dimensions $D \in \{2, 3, 4, 5, 6\}$ with 
sequence length $T = 64$ and token dimension $d = 4$.

\paragraph{Model.}
For each value of $D$,
we train two-layer Transformers with three head configurations:
$(h_1, h_2) = (D, D)$,
$(h_1, h_2) = (D-1, D)$,
and $(h_1, h_2) = (D, D-1)$.
The total embedding dimension is fixed at $E_1 = \lceil\frac{6D}{h_1}\rceil h_1,  E_2 = \lceil\frac{6D}{h_2}\rceil h_2$.
The feed-forward hidden width is $w_1 = 16D$.
No positional encoding is used,
consistent with the permutation-invariant nature of the target.

\paragraph{Training.}
Each configuration is trained for $16$ independent runs with different 
random seeds.
We use the AdamW optimizer with learning rate $10^{-3}$ and weight 
decay $10^{-2}$,
with a cosine annealing schedule.
Training is performed on $N = 36{,}000$ i.i.d. samples drawn from 
$x(t) \overset{\mathrm{i.i.d.}}{\sim} \mathcal{N}(0, I_d)$,
with a held-out validation set of $4{,}000$ samples.
We report the best validation NMSE across training,
averaged over all $16$ seeds.

\paragraph{Results.}
The mean best validation NMSE for each configuration and value of $D$ 
is reported in Table~\ref{tab:intrinsic}.
Results are also shown in Figure~\ref{fig:intrinsic}.

\begin{table}[h]
\centering
\caption{Mean best validation NMSE for the intrinsic dimension 
experiment with $T = 64$,
averaged over $16$ random seeds.}
\label{tab:intrinsic}
\begin{tabular}{|c|c|c|c|}
\hline
$D$ & $(D,D)$ & $(D-1,D)$ & $(D,D-1)$ \\
\hline
$2$ & $5.89 \times 10^{-5}$ & $5.89 \times 10^{-2}$ & $3.67 \times 10^{-2}$ \\
$3$ & $6.37 \times 10^{-5}$ & $2.64 \times 10^{-2}$ & $2.07 \times 10^{-2}$ \\
$4$ & $6.92 \times 10^{-5}$ & $1.39 \times 10^{-2}$ & $1.30 \times 10^{-2}$ \\
$5$ & $4.69 \times 10^{-4}$ & $4.46 \times 10^{-3}$ & $7.19 \times 10^{-3}$ \\
$6$ & $1.99 \times 10^{-3}$ & $6.26 \times 10^{-3}$ & $7.67 \times 10^{-3}$ \\
\hline
\end{tabular}
\end{table}

Across all values of $D$,
the $(D,D)$ configuration achieves substantially lower NMSE than the 
$(D-1,D)$ and $(D,D-1)$ configurations,
with a gap of two to three orders of magnitude for $D \le 5$.
For $D \ge 6$,
the gap narrows but remains consistent,
which we attribute to the increased difficulty of training with 
larger head counts within the fixed sequence length $T = 64$.

\subsection{Triangle-Center Experiment}
\label{app:exp:triangle}

\paragraph{Target.}
We train two-layer Transformers on the triangle-center target
\begin{equation}
F(X_T) = \min_{1 \le t_1,t_2,t_3 \le T} 
\|x(t_1) + x(t_2) + x(t_3)\|_2^2,
\end{equation}
with token dimension $d = 2$ and sequence lengths 
$T \in \{3, 4, 5, 6, 8, 10, 12, 16, 24, 32, 48, 64\}$.
Each token $x(t) \overset{\mathrm{i.i.d.}}{\sim} \mathcal{N}(0, 2I_2)$, clamping min norm at $0.2$ and max norm at $4$.

\paragraph{Model.}
We train three two-layer Transformer configurations of increasing size:
$(h_1, h_2) = (2, 2)$ with embedding dimension $E = 24$,
$(h_1, h_2) = (4, 4)$ with embedding dimension $E = 24$,
and $(h_1, h_2) = (4, 4)$ with embedding dimension $E = 48$.
The feed-forward hidden width is $w_1 = 32$ for all configurations.
No positional encoding is used.

\paragraph{Training.}
Each configuration and sequence length is trained for $2$ independent 
runs with different random seeds.
We use the AdamW optimizer with learning rate $10^{-3}$ and weight 
decay $10^{-2}$,
with a cosine annealing schedule.
Training is performed on $N = 90{,}000$ i.i.d. samples,
with a held-out validation set of $10{,}000$ samples.
We report the best validation NMSE across training.

\paragraph{Results.}
The mean best validation NMSE for each configuration and sequence 
length is reported in Table~\ref{tab:triangle}.
Results are also shown in Figure~\ref{fig:triangle}.

\begin{table}[h]
\centering
\caption{Mean best validation NMSE for the triangle-center experiment,
averaged over $2$ random seeds.}
\label{tab:triangle}
\begin{tabular}{|c|c|c|c|}
\hline
$T$ & $(2,2)$, $E=24$ & $(4,4)$, $E=24$ & $(4,4)$, $E=48$ \\
\hline
$3$  & $1.31 \times 10^{-3}$ & $9.59 \times 10^{-4}$ & $6.20 \times 10^{-4}$ \\
$4$  & $4.58 \times 10^{-3}$ & $3.81 \times 10^{-3}$ & $3.03 \times 10^{-3}$ \\
$5$  & $1.12 \times 10^{-2}$ & $9.51 \times 10^{-3}$ & $8.41 \times 10^{-3}$ \\
$6$  & $2.12 \times 10^{-2}$ & $2.00 \times 10^{-2}$ & $1.84 \times 10^{-2}$ \\
$8$  & $8.61 \times 10^{-2}$ & $7.20 \times 10^{-2}$ & $7.95 \times 10^{-2}$ \\
$10$ & $2.42 \times 10^{-1}$ & $2.27 \times 10^{-1}$ & $2.13 \times 10^{-1}$ \\
$12$ & $4.32 \times 10^{-1}$ & $4.60 \times 10^{-1}$ & $4.52 \times 10^{-1}$ \\
$16$ & $8.43 \times 10^{-1}$ & $8.53 \times 10^{-1}$ & $8.56 \times 10^{-1}$ \\
$24$ & $9.44 \times 10^{-1}$ & $9.44 \times 10^{-1}$ & $9.44 \times 10^{-1}$ \\
$32$ & $9.57 \times 10^{-1}$ & $9.58 \times 10^{-1}$ & $9.59 \times 10^{-1}$ \\
$48$ & $9.79 \times 10^{-1}$ & $9.79 \times 10^{-1}$ & $9.79 \times 10^{-1}$ \\
$64$ & $9.85 \times 10^{-1}$ & $9.85 \times 10^{-1}$ & $9.85 \times 10^{-1}$ \\
\hline
\end{tabular}
\end{table}

All three configurations exhibit rapidly growing NMSE as $T$ increases,
with NMSE exceeding $0.84$ at $T = 16$ and approaching $1.0$ for 
$T \ge 32$.
The three configurations perform almost identically across all values 
of $T$,
indicating that increasing the number of heads or the embedding 
dimension does not improve approximation within the tested range,
consistent with the \infoflow prediction.
\begin{remark}
    NMSE, namely Normalized Mean Square Error, is equal to $1$ if the model predicts the best constant value regardless of input ($M(X_T) \equiv c$ for constant $c$ independent of $X_T$), which can be achieved by Transformers of any size. NMSE approaching $1$ means that Transformer is not performing better than a constant predictor, which shows that Transformers are not approximating the target efficiently.  
\end{remark}

\clearpage

\section{Explanations for Number of Comparison \ref{sec:NumberOfComparison}}\label{app:NumberOfComparison}

For the generalized $D$-retrieval target 
$F(X_T) = F_0(\oplus_{i=1}^D \max_{1 \le t \le T} f_i(x(t)))$, it has $\beta' = \beta_1 = 1$, and the Number of Comparison of $F$ is upper bounded by $D(T-1)$ and lower bounded by $D(T-D)$.\\
For the target $F(X_T) = \min_{1 \le t_1,t_2,t_3 \le T} \|x(t_1) + x(t_2) + x(t_3)\|_2^2$, we can find that it has $\beta' = \beta_1 = 3$, and its Number of Comparison is upper bounded by $T^3-1$, also lower bounded by $\Omega(T^3)$.

\subsection{Explanations for the Examples}
\paragraph{Generalized $D$-retrieval Target} We first show the upper bound by the following construction, which is intuitive. 
\begin{enumerate}
    \item It is clear that $|I_F(X_T)| \le D, a.e.$, so we construct $D$ Trees of Comparison. 
    \item For $j=1, \dots, D$, $\mathcal{T}_{j}$ have exactly $T$ leaves, each leaf being a single-element index set $\{t\}$ for $t=1, \dots, T$. And $f_{in}(x) = f_j(x)$ for all internal nodes of $\mathcal{T}_{j}$. Thus $I_{fi}(X_T, \mathcal{T}_{j}) = \argmax_{1 \le t \le T} f_j(x(t))$. 
    \item We then have $I_F(X_T) = \bigcup_{j=1}^{T}I_{fi}(X_T, \mathcal{T}_{j})$. 
    \item These $D$ Trees of Comparison each has $T$ leaves, thus it upper bounds the Number of Comparison of $F$ by $D(T-1)$. 
\end{enumerate}
Assuming that each $f_i(z)$ obtains maximum at different $z_i$, and non-degeneracy condition as in \cite{yu2026.EffectAttentionHead}. We then show that the Number of Comparison is lower bounded by $D(T-D)$. 
\begin{enumerate}
    \item It is clear that $\beta_1 = 1$. As all $t$'s can appear in $I_F(X_T)$, so $\{t\}$ has to be some leaf of some Tree. As there are at most $D$ Trees, We have $N' \ge T-D$, thus $\beta' = 1$.  
    \item As $|I_F(X_T)| = D, a.e.$ and $|I_{fi}(X_T, \mathcal{T}_{j})|=1$, thus we need exactly $D$ Trees of Comparisons. 
    \item For each $\mathcal{T}_j$, denote by $I_j \subseteq [T]$ the union of index sets on all its leaves. And Number of Comparison is calculated with $\sum_{j=1}^D (|I_j|-1)$. 
    \item Assume that $|I_j| \le T-D$ for some $j$. Then suppose $t_1, \dots, t_D \notin I_j$. Then let $x(t_i) = z_i$ for $i=1, \dots, D$, we have that $I_F(X_T) = \{t_1, \dots, t_D\}$. However, $[I_F(X_T) \cap I_{fi}(X_T, \mathcal{T}_j)] \subseteq [I_F(X_T) \cap I_j] = \varnothing$. Then we have that 
    \begin{align}
        &|I_F(X_T) \cap \bigcup_{i=1}^{T}I_{fi}(X_T, \mathcal{T}_{i}) |\\
        =& |I_F(X_T) \cap \bigcup_{i\ne j}I_{fi}(X_T, \mathcal{T}_{i})| \\
        \le & |\bigcup_{i\ne j}I_{fi}(X_T, \mathcal{T}_{i})|\\
        \le & D-1 \\
        <& |I_F(X_T)|
    \end{align}
    which contradicts with $|I_F(X_T) = \bigcup_{i=1}^TI_{fi}(X_T, \mathcal{T}_{i})|$. 
    \item Then we have $|I_j|\ge T-D+1$ for all $j=1, \dots, D$, and thus $N' \ge D(T-D+1) -D = D(T-D)$. 
\end{enumerate}

We then successfully showed that the Number of Comparison of $F(X_T) = F_0(\oplus_{i=1}^D \max_{1 \le t \le T} f_i(x(t)))$ is $DT+O(1)$. 

\paragraph{Triangle-Center Target}\label{app:Triangle_explain} For triangle-center target $F(X_T) = \min_{1 \le t_1,t_2,t_3 \le T} \|x(t_1) + x(t_2) + x(t_3)\|_2^2$, we show that it has $\beta_1 = 3$ and $\beta' = 3$, with Number of Comparison upper bounded by $T^3-1$ and lower bounded by $\Omega(T^3)$ (Or more specifically, $\frac{T^3}{6}+ O(T^2)$). (For this simplicity we consider $\mathcal{X} = [-1,1]^d$ for this case, where similar results can be transformed back to $\mathcal{X}^T = [0,1]^T$.)

We first show that its Number of Comparison is upper bounded by $T^3-1$. 
\begin{enumerate}
    \item We construct one Tree of Comparison, with each leaf being an ordered set $(t_1, t_2, t_3)$ that allows repetition. There are $T^3$ choices of such ordered set. 
    \item For all internal nodes, they have the same $f_{in}(X[I]) = -\|\sum_{t \in I} x(t)\|_2^2$. Thus at the root, we have $I_{fi}(X_T, \mathcal{T}) = \argmax_{I \subseteq [T], |I|=3} \|\sum_{t \in I} x(t)\|_2^2 = I_F(X_T)$. 
\end{enumerate}
With the above construction, we have that the Order and Dimension of Comparison upper bounded by $3$, namely $\beta' \le \beta_1 \le 3$. 

We then show the lower bound. Consider $\mathcal{X}^T = [-1,1]^{Td} \subset \mathbb{R}^{Td}$ as part of the $Td$-dimensional Euclidean Space. Now for two given triple pairs (unordered) of indices $I_0 = (t_1, t_2, t_3)$ and $J_0 = (s_1, s_2, s_3)$ where $t_i, s_j$ are pairwise distinct (We assume $T>6$). And denote by $f$ the function $f(X[I]) = \|\sum_{t \in I} x(t)\|_2^2$ for index set $I$. We define the following sets: 
\begin{enumerate}
    \item $\Gamma'' = \{X_T \in \mathcal{X}^T : f(X[I_0]), f(X[J_0]) < f(X[I']) \text{ for all } I'\ne I_0, J_0;\, |I'|=3\}$.
    $\Gamma$ is the set where $X[I_0]$ and $X[J_0]$ obtain the minimum two for $f$. Under the uniform measure $\mu$ on $\mathcal{X}^T = [-1,1]^{Td}$, we have $\mu(\Gamma'') >0$. 
    \item $\Gamma''_l = \{X_T \in \mathcal{X}^T : f(X[I_0])< f(X[J_0]) < f(X[I']) \text{ for all } I'\ne I_0, J_0;\, |I'|=3\}$. \\
    $\Gamma''_r = \{X_T \in \mathcal{X}^T : f(X[J_0])< f(X[I_0]) < f(X[I']) \text{ for all } I'\ne I_0, J_0;\, |I'|=3\}$.\\
    We have that $\Gamma''_r + \Gamma''_l = \Gamma''$; $\mu(\Gamma''_l) = \mu(\Gamma''_r) = \frac{\mu(\Gamma'')}{2} >0$. 
    \item $\gamma'' = \{X_T \in \mathcal{X}^T : f(X[I_0])= f(X[J_0]) < f(X[I']) \text{ for all } I'\ne I_0, J_0;\, |I'|=3\} \subset \Gamma''$ is a $Td-1$ dimensional manifold in $\Gamma''$. As $\gamma''$ is in $\mathbb{R}^{Td}$, we can define the area measure $\nu$ on $\gamma''$, and we have $\nu (\gamma'') >0$. 
    \item By symmetry, we can find a connected component $\Gamma \subseteq \Gamma''$ that is separated into two pieces $\Gamma_l, \Gamma_r$ such that $\mu(\Gamma) = 2\mu(\Gamma_l) = 2 \mu(\Gamma_r) >0$, and $\gamma = \gamma'' \cap \Gamma$ also have $\nu(\gamma)>0$. 
\end{enumerate}
Now for each pairs of unordered triple $I', J'$ that is not the same as $I_0, J_0$, let $\gamma' = \{X_T \in \mathcal{X}^T: g(X[I_1]) = g(X[J_1])\} \cap \Gamma$ for non-degenerate $g$ (namely $\gamma'$ have dimension $Td-1$). As $\{I', J'\} \ne \{I_0, J_0\}$, we have that $\gamma' \cap \gamma$ is a manifold of dimension $Td-2$ (As we cannot have  $I_0 \subseteq I', J_0 \subseteq J'$ together), thus $\nu(\gamma' \cap \gamma)=0$. Suppose each $\gamma'$ separates $\Gamma$ into $\Gamma_l'$ and $\Gamma_r'$, and we have a finite number $M$ of such pairs $(I',J',g, \gamma', \Gamma_l', \Gamma_r')$, we have that $\nu [(\cup \gamma') \cap \gamma] = 0$.  \\
By continuity and and compactness of each $\gamma'$, as $\cup \gamma'$ cannot interpolate $\gamma$ in a dense manner, thus if we use all the $\gamma'$'s to split $\Gamma$ into (at most) $2^M$ subsets (In the sense whether a point in $\Gamma$ lies on the "left" or "right" of $\gamma'$, and one such subset is for points on the same side for all $\gamma'$.) We call these subsets $\Gamma_1, \dots, \Gamma_K$ for $K=2^M$. Then we have that $\min_{I_k \subseteq[K]} \mu (\Gamma_l \triangle (\bigcup_{i \in I_k} \Gamma_i)) = \min_{I_k \subseteq[K]} \mu (\Gamma_r \triangle (\bigcup_{i \in I_k} \Gamma_i)) >0$. \\
This shows that we cannot use many comparisons of other pairs $(I',J')$ to exactly characterize the comparison of $(I_0, J_0)$. Thus when $\beta_1 = 3$, and that the Tree of Comparison have finite leaves, (Thus finite number of possible pairwise comparison performed, where $I_{fi}(X_T, \mathcal{T})$ depends only on the comparison results in the form of separation.) $I_0$ and $J_0$ need to be on the leaves of a Tree of Comparison, namely the comparison of $I_0, J_0$ should be among the possible pairwise comparisons. \\
By extending this argument to all $I_0, J_0$ pair, we have that all unordered $I_0=\{t_1, t_2, t_3\}$ with distinct $t_1, t_2, t_3$ should be on the leaf of at least one Tree of Comparison. This gives a total number of $\binom{T}{3}$ leaves. With $|I_F(X_T)| \le D_0=3,a.e.$, we have the Number of Comparison of $F$ lower bounded by $\binom{T}{3}-3$. Thus the Number of Comparison of $F$ is lower bounded by $\frac{T^3}{6} + O(T^2)$.

\begin{figure}[h]
\centering
\begin{subfigure}[t]{0.48\linewidth}
\centering
\begin{tikzpicture}[
    tnode/.style={
        circle,
        draw,
        minimum size=0.7cm,
        font=\small
    },
    label/.style={
        font=\footnotesize,
        align=center
    },
    arr/.style={
        -,
        thick
    }
]

\node[tnode] (l1) at (0.0, 3.0) {};
\node[tnode] (l2) at (1.2, 3.0) {};
\node[tnode] (l3) at (2.4, 3.0) {};
\node[tnode] (l4) at (3.6, 3.0) {};

\node[label] at (0.0, 3.75) {$\{1\}$};
\node[label] at (1.2, 3.75) {$\{2\}$};
\node[label] at (2.4, 3.75) {$\{3\}$};
\node[label] at (3.6, 3.75) {$\{4\}$};

\node[tnode] (m1) at (0.6, 1.5) {};
\node[tnode] (m2) at (3.0, 1.5) {};

\node[label, right=0.05cm of m1] {$\{2\}$};
\node[label, right=0.05cm of m2] {$\{3\}$};

\node[tnode] (r) at (1.8, 0.0) {};
\node[label, right=0.05cm of r] {$\{3\}$};

\draw[arr] (l1) -- (m1);
\draw[arr] (l2) -- (m1);
\draw[arr] (l3) -- (m2);
\draw[arr] (l4) -- (m2);

\draw[arr] (m1) -- (r);
\draw[arr] (m2) -- (r);

\node[label] at (0.3, 0.0) {Root};
\draw[->] (0.75, 0.0) -- (r);

\end{tikzpicture}
\caption{$\mathcal{T}_1$ with $f_{\{in\}}(x) = x$ at all internal nodes,
finding the maximum.
The root gives $\{3\} = I_{\{f\}}(X_T, \mathcal{T}_1)$,
the position of the maximum value.}
\label{fig:tr1}
\end{subfigure}
\hfill
\begin{subfigure}[t]{0.48\linewidth}
\centering
\begin{tikzpicture}[
    tnode/.style={
        circle,
        draw,
        minimum size=0.7cm,
        font=\small
    },
    label/.style={
        font=\footnotesize,
        align=center
    },
    arr/.style={
        -,
        thick
    }
]

\node[tnode] (l1) at (0.0, 3.0) {};
\node[tnode] (l2) at (1.2, 3.0) {};
\node[tnode] (l3) at (2.4, 3.0) {};
\node[tnode] (l4) at (3.6, 3.0) {};

\node[label] at (0.0, 3.75) {$\{1\}$};
\node[label] at (1.2, 3.75) {$\{2\}$};
\node[label] at (2.4, 3.75) {$\{3\}$};
\node[label] at (3.6, 3.75) {$\{4\}$};

\node[tnode] (m1) at (0.6, 1.5) {};
\node[tnode] (m2) at (3.0, 1.5) {};

\node[label, right=0.05cm of m1] {$\{1\}$};
\node[label, right=0.05cm of m2] {$\{4\}$};

\node[tnode] (r) at (1.8, 0.0) {};
\node[label, right=0.05cm of r] {$\{1\}$};

\draw[arr] (l1) -- (m1);
\draw[arr] (l2) -- (m1);
\draw[arr] (l3) -- (m2);
\draw[arr] (l4) -- (m2);

\draw[arr] (m1) -- (r);
\draw[arr] (m2) -- (r);

\node[label] at (0.3, 0.0) {Root};
\draw[->] (0.75, 0.0) -- (r);

\end{tikzpicture}
\caption{$\mathcal{T}_2$ with $f_{\{in\}}(x) = -x$ at all internal nodes,
finding the minimum.
The root gives $\{1\} = I_{\{f\}}(X_T, \mathcal{T}_2)$,
the position of the minimum value.}
\label{fig:tr2}
\end{subfigure}
\caption{Two comparison trees for the target 
$F(X_T) = \max_{1\le t\le T} x(t) + \min_{1\le t\le T} x(t)$ with $D = 2$,
illustrated on the input $x(1)=0.1$,
$x(2)=0.3$,
$x(3)=0.4$,
$x(4)=0.2$ with $T=4$.
Each leaf corresponds to one input position,
and each internal node retains the index of the winning position after comparison.
The root of each tree gives the active index set $I_{fi}(X_T, \mathcal{T})$.}
\label{fig:comparison-trees}
\end{figure}

\clearpage
\section{Figures}

\begin{figure}[h]
\centering
\begin{tikzpicture}[scale=0.9,
    token/.style={
        circle,
        draw,
        minimum size=0.7cm,
        font=\small
    },
    token highlight blue/.style={
        circle,
        draw,
        minimum size=0.7cm,
        font=\small,
        fill=blue!15
    },
    token highlight green/.style={
        circle,
        draw,
        minimum size=0.7cm,
        font=\small,
        fill=green!20
    },
    token out/.style={
        circle,
        draw,
        minimum size=0.7cm,
        font=\small,
        fill=gray!15
    },
    faded/.style={
        circle,
        draw=gray!40,
        text=gray!40,
        minimum size=0.7cm,
        font=\small
    },
    arr/.style={
        ->,
        thick
    },
    label/.style={
        font=\footnotesize,
        align=center
    }
]

% Subfigure I

\def\Ix{0}

\node[faded]                (I1) at (\Ix+0.0, 2.0) {$1$};
\node[token highlight blue] (I2) at (\Ix+1.2, 2.0) {$2$};
\node[token]                (I3) at (\Ix+2.4, 2.0) {$3$};
\node[faded]                (I4) at (\Ix+3.6, 2.0) {$4$};
\node[token out]            (Io) at (\Ix+2.4, 0.0) {$3'$};

\draw[arr, blue, line width=1.5pt] (I2) -- (Io);
\draw[arr, orange, line width=1.2pt] (I3) -- (Io);
\draw[->, gray!30, dashed] (I1) -- (Io);
\draw[->, gray!30, dashed] (I4) -- (Io);

\node[label, blue]   at (\Ix+1.2, 2.75) {argmax};
\node[label, orange] at (\Ix+2.5, 2.82)  {residual};
\node[label]         at (\Ix+1.8, -0.9) {\textbf{I}: Max-position};

%  Subfigure II

\def\IIx{5.2}

\node[token] (II1) at (\IIx+0.0, 2.0) {$1$};
\node[token] (II2) at (\IIx+1.2, 2.0) {$2$};
\node[token] (II3) at (\IIx+2.4, 2.0) {$3$};
\node[token] (II4) at (\IIx+3.6, 2.0) {$4$};
\node[token out] (IIo) at (\IIx+2.4, 0.0) {$3'$};

\draw[arr, red!70!black, line width=1.2pt] (II1) -- (IIo);
\draw[arr, red!70!black, line width=1.2pt] (II2) -- (IIo);
\draw[arr, red!70!black, line width=1.2pt] (II3) -- (IIo);
\draw[arr, red!70!black, line width=1.2pt] (II4) -- (IIo);

\node[label, red!70!black] at (\IIx+1.8, 2.75) {all positions};
\node[label] at (\IIx+1.8, -0.9) {\textbf{II}: Global aggregation};

% Subfigure III

\def\IIIx{10.4}

\node[token highlight green] (III1) at (\IIIx+0.0, 2.0) {$1$};
\node[faded]                 (III2) at (\IIIx+1.2, 2.0) {$2$};
\node[token highlight green] (III3) at (\IIIx+2.4, 2.0) {$3$};
\node[faded]                 (III4) at (\IIIx+3.6, 2.0) {$4$};
\node[token out]             (IIIo) at (\IIIx+2.4, 0.0) {$3'$};

\draw[arr, green!50!black, line width=1.5pt] (III1) -- (IIIo);
\draw[arr, green!50!black, line width=1.5pt] (III3) -- (IIIo);
\draw[->, gray!30, dashed] (III2) -- (IIIo);
\draw[->, gray!30, dashed] (III4) -- (IIIo);

\node[label, green!50!black] at (\IIIx+1.2, 2.75) {fixed by PE};
\node[label] at (\IIIx+1.8, -0.9) {\textbf{III}: Specific position};

\draw[gray!30, rounded corners]
    (-0.5, -0.5) rectangle (4.2, 3.2);
\draw[gray!30, rounded corners]
    (4.6, -0.5) rectangle (9.3, 3.2);
\draw[gray!30, rounded corners]
    (9.7, -0.5) rectangle (14.5, 3.2);

\end{tikzpicture}
\caption{Illustration of the three information propagation mechanisms in attention layers,
shown for a sequence of four tokens.
The output token $3'$ is shown at the bottom of each subfigure,
with arrows indicating which input tokens contribute to it.
\textbf{I}: Max-position retrieval aggregates information from the argmax position (blue, shaded) via attention and from its own position via the residual connection (orange).
\textbf{II}: Global aggregation aggregates information from all input positions.
\textbf{III}: Specific position aggregation aggregates information from a fixed set of positions (green, shaded) determined by positional encoding alone,
independent of token content.
Faded tokens with dashed arrows indicate positions that do not contribute.}
\label{fig:mechanisms}
\end{figure}

\begin{figure}[h]
\centering
\includegraphics[width=0.6\linewidth]{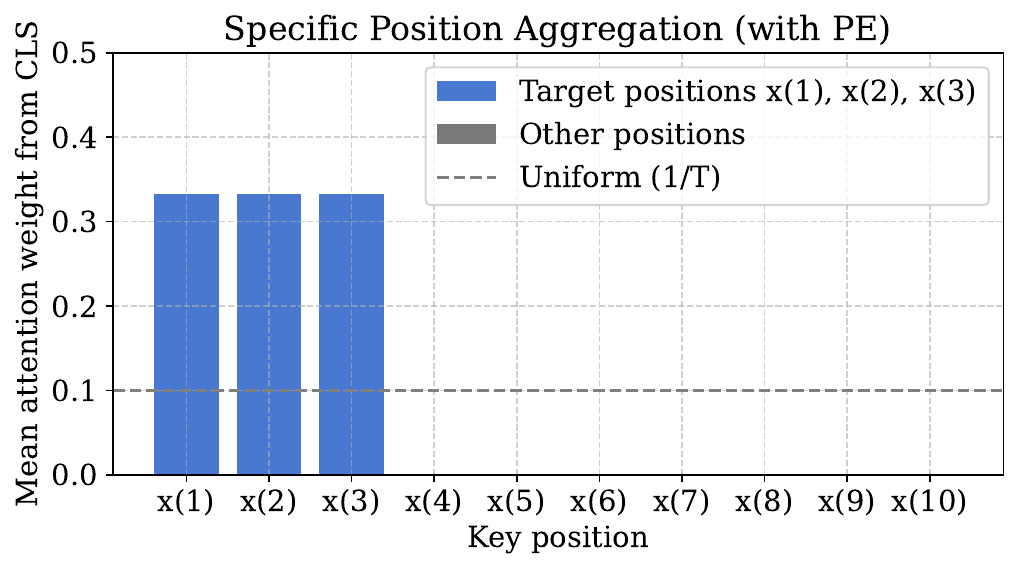}
\caption{Mean CLS attention weight per input position,
averaged over the validation set,
for a Transformer trained with positional encoding on $F(X_T) = x(1) + x(2) + x(3)$.
Attention concentrates entirely on positions $1$, $2$, $3$ with weight $1/3$ each,
while all other positions receive zero weight,
confirming position-specific selection independent of token content.}
\label{fig:pos-agg}
\end{figure}

% \begin{figure}[h]
% \centering
% \begin{subfigure}[t]{0.48\linewidth}
% \centering
% \includegraphics[width=\linewidth]{fig/intrinsic_dimension.pdf}
% \caption{Best validation NMSE for two-layer Transformers trained on
% the target \eqref{eq:intrinsic-target} with $T = 64$, $D = 2, 3, 4, 5, 6$ and head configurations $(D,D)$,
% $(D-1,D)$,
% $(D,D-1)$.
% A sharp phase transition occurs at $h_1 = h_2 = D$,
% with $(D,D)$ achieving NMSE on the order of $10^{-5}$ while
% configurations with fewer heads remain at $10^{-3}$ to $10^{-2}$,
% confirming the \infoflow prediction that $D$ is the intrinsic
% dimension of the target.}
% \label{fig:intrinsic}
% \end{subfigure}
% \hfill
% \begin{subfigure}[t]{0.5\linewidth}
% \centering
% \includegraphics[width=\linewidth]{fig/triangle_center.pdf}
% \caption{Best validation NMSE against sequence length $T$ for
% two-layer Transformers of three different sizes trained on the
% triangle-center target \eqref{eq:triangle-target}.
% All configurations exhibit rapidly growing NMSE with $T$,
% with all three curves nearly coinciding,
% consistent with the \infoflow prediction that the parameter cost
% of approximating $F$ grows with $T$ for any fixed architecture,
% regardless of model size.}
% \label{fig:triangle}
% \end{subfigure}
% \caption{
% Experimental validation of the intrinsic dimension phenomenon (left) 
% and the approximation hardness of the triangle-center task (right).}
% \label{fig:predictions}
% \end{figure}

\clearpage
\paragraph{Illustrative example of \infoflow.}\label{app: example}
We illustrate the two-step procedure on a concrete example.
Consider the target $F(X_T) = \min_{1 \le s,t \le T} x(t)^\top x(s)$
and a two-layer Transformer with one head per layer.
Let the input sequence be $x(1) = (0,-1)$,
$x(2) = (0.7, 0.7)$,
$x(3) = (0,1)$,
$x(4) = (-0.2,-0.9)$,
so that $T = 4$.

We apply update rule (1) max position aggregation at both layers.
At layer $0$,
the information sets are initialized as $I(t,0) = \{t\}$ for $t = 1,2,3,4$ and $I(5,0) = \varnothing$.
For the first layer,
letting the attention score function be $A_{1,1}(X[I(t,0)], X[I(s,0)]) = \min_{t' \in I(t,0), s' \in I(s,0)} x(t')^\top x(s')$,
we compute the argmax positions
\begin{equation}
s_{1,0}(1) = 3, \quad s_{1,0}(2) = 4, \quad s_{1,0}(3) = 1, \quad s_{1,0}(4) = 3.
\end{equation}
Applying the max position aggregation rule gives
\begin{equation}
I(1,1) = \{1,3\}, \quad I(2,1) = \{2,4\}, \quad I(3,1) = \{1,3\}, \quad I(4,1) = \{3,4\}.
\end{equation}
For the second layer,
the classification token at position $5$ reads out from the sequence.
Letting the attention score function be $A_{2,1}(X[I(5,1)], X[I(s,1)]) = \min_{s_1,s_2 \in I(s,1)} x(s_1)^\top x(s_2)$,
we obtain $s_{1,1}(5) = 1$,
and therefore
\begin{equation}
I(5,2) = I(s_{1,1}(5), 1) = I(1,1) = \{1,3\}.
\end{equation}
The active index set of the target is $I_F(X_T) = \{1,3\}$,
since the minimum $\min_{s,t} x(t)^\top x(s)$ is attained at $(s,t) = (1,3)$. For this $X_T$ we have $I_F(X_T) \subseteq I(5,2)$. 
%As $I_F(X_T) = \{1,3\} \subseteq I(5,2) = \{1,3\}$,
%the \infoflow model learns the target $F$,
%confirming that the two-layer one-head-per-layer Transformer can approximate $F$ effectively under max position aggregation.
The evolution of information sets across layers is illustrated in 
Figure~\ref{fig:infoflow-example}.

\begin{figure}[h]
\centering
\begin{tikzpicture}[scale=0.6,
    node distance = 1.8cm,
    token/.style = {
        circle,
        draw,
        minimum size = 0.9cm,
        font = \small
    },
    token highlight/.style = {
        circle,
        draw,
        minimum size = 0.9cm,
        font = \small,
        fill = red!20
    },
    cls/.style = {
        circle,
        draw,
        minimum size = 0.9cm,
        font = \small,
        dashed
    },
    cls highlight/.style = {
        circle,
        draw,
        minimum size = 0.9cm,
        font = \small,
        dashed,
        fill = blue!10
    },
    arr/.style = {
        ->,
        thick,
        blue
    },
    label/.style = {
        font = \footnotesize,
        align = center
    }
]

% Layer 0 nodes
\node[token highlight] (t10) at (0, 4)   {$1$};
\node[token]           (t20) at (0, 2.5) {$2$};
\node[token highlight] (t30) at (0, 1)   {$3$};
\node[token]           (t40) at (0,-0.5) {$4$};
\node[cls]             (t50) at (0,-2)   {CLS};

% Layer 1 nodes
\node[token] (t11) at (4, 4)   {$1$};
\node[token] (t21) at (4, 2.5) {$2$};
\node[token] (t31) at (4, 1)   {$3$};
\node[token] (t41) at (4,-0.5) {$4$};
\node[cls]   (t51) at (4,-2)   {CLS};

% Layer 2 nodes
\node[cls highlight] (t52) at (8,-2) {CLS};

% Column labels
\node[label] at (0, 5.2)  {Layer $0$};
\node[label] at (4, 5.2)  {Layer $1$};
\node[label] at (8, 5.2)  {Layer $2$};

% Information set labels layer 0
\node[label, left=0.1cm of t10] {$\{1\}$};
\node[label, left=0.1cm of t20] {$\{2\}$};
\node[label, left=0.1cm of t30] {$\{3\}$};
\node[label, left=0.1cm of t40] {$\{4\}$};
\node[label, left=0.1cm of t50] {$\varnothing$};

% Information set labels layer 1
\node[label, right=0.1cm of t11] {$\{1,3\}$};
\node[label, right=0.1cm of t21] {$\{2,4\}$};
\node[label, right=0.1cm of t31] {$\{1,3\}$};
\node[label, right=0.1cm of t41] {$\{3,4\}$};

% Information set labels layer 2
\node[label, right=0.1cm of t52] {$\{1,3\} = I_F(X_T)$};

% Active index set label at layer 0
% \node[red, label] at (-0.8, 4.5) {$I_F(X_T)$};

% Residual connections
\draw[arr, gray] (t10) -- (t11);
\draw[arr, gray] (t20) -- (t21);
\draw[arr, gray] (t30) -- (t31);
\draw[arr, gray] (t40) -- (t41);

% Argmax arrows layer 0 to layer 1
\draw[arr] (t30) to[bend left=20]  (t11);
\draw[arr] (t40) to[bend left=20]  (t21);
\draw[arr] (t10) to[bend right=20] (t31);
\draw[arr] (t30) to[bend right=10] (t41);

% Argmax arrow layer 1 to layer 2
\draw[arr] (t11) to[bend right=30] (t52);

\end{tikzpicture}
\caption{Illustration of InfoFlow on the target $F(X_T) = \min_{1 \le s,t \le T} x(t)^\top x(s)$ with a two-layer one-head-per-layer Transformer.
Each node shows the information set $I(t,l)$ at position $t$ and layer $l$.
Blue arrows indicate argmax information flow selected by each attention head.
Gray arrows indicate residual connections preserving the token's own information.
Red-shaded nodes at layer $0$ indicate the active index set $I_F(X_T) = \{1,3\}$.
After two layers,
the classification token accumulates $I(5,2) = \{1,3\} = I_F(X_T)$,
confirming that the InfoFlow model learns the target.}
\label{fig:infoflow-example}
\end{figure}

\clearpage

\end{document}